\definecolor{light-gray}{gray}{0.9}
\definecolor{darkgreen}{rgb}{0,0.5,0}
\definecolor{darkblue}{rgb}{0.0,0.0,0.65}
\definecolor{darkred}{rgb}{0.55,0.0,0.0}
\definecolor{cadetblue}{rgb}{0.37, 0.62, 0.63}
\definecolor{cadetgrey}{rgb}{0.57, 0.64, 0.69}
\setlist{leftmargin=0.3in} 
\newtheorem{theorem}{Theorem} 
\newtheorem{definition}{Definition}
\newtheorem{lemma}{Lemma}
\newtheorem{corollary}[lemma]{Corollary}
\newtheorem{proposition}{Proposition}
\newtheorem{assumption}{Assumption}
\declaretheorem[shaded={rulecolor=black, rulewidth=0.5pt, bgcolor=gray!7}, name=Theorem, sibling=theorem]{thmbox}
\def\bz{{\mathbf z}}
\def\bx{{\mathbf x}}
\def\bw{{\mathbf w}}
\def\bv{{\mathbf v}}
\newcommand{\tS}{\widehat{S}}
\newcommand{\tn}{\tau} 
\newcommand{\cx}{C_{\bx}}
\newcommand{\by}{\mathbf{y}}
\newcommand{\eby}{\mathbf{Y}}
\newcommand{\ebx}{\mathbf{X}}
\newcommand{\bby}{\overline{\mathbf{y}}}
\newcommand{\bbx}{\overline{\mathbf{x}}}
\newcommand{\bg}{\mathbf{g}}
\newcommand{\bu}{\mathbf{u}}
\newcommand{\Unif}{\mathrm{Unif}}
\newcommand{\eps}{\varepsilon}
\newcommand{\OO}[1]{\mathcal{O}\left(#1\right)}
\newcommand{\TT}[1]{{\Theta}\left(#1\right)}
\newcommand{\anchor}{\mathbf{a}}
\newcommand{\update}{{\boldsymbol \delta}}
\newcommand{\betas}{\beta_\star}
\newcommand{\Ds}{D_\star}
\newcommand{\mus}{\mu_\star}
\newcommand{\etas}{\eta_\star}
\newcommand{\bt}{\zeta}
\newcommand{\bts}{\zeta_\star}
\newcommand{\algcolor}[1]{\textcolor{blue!70!black}{#1}}
\newcommand{\algcomment}[1]{\textcolor{blue!70!black}{\small{\texttt{\textbf{//\hspace{2pt}#1}}}}}
\DeclareMathOperator*{\argmin}{arg\,min}
\newcommand{\inp}[2]{\left\langle #1,#2 \right\rangle}
\newcommand{\R}{\mathbb{R}}
\newcommand*{\E}{\mathbb{E}}
\newcommand{\regret}{{\sf Regret}}
\newcommand{\norm}[1]{\left\|#1\right\|}
\newcommand{\dregret}{{\sf Regret}^{[\beta
]}}
\newcommand{\subopt}{\Delta_F}
\newcommand{\regnorm}[2]{\left\| #2\right\|^{[#1]}}
\newcommand{\stograd}{\textsc{StoGrad}}
\newcommand{\D}{\mathrm{d}}
\title{General framework for online-to-nonconvex conversion:\\
Schedule-free SGD is also effective for nonconvex optimization} 
\author{Kwangjun Ahn$^\star$ \\
Microsoft Research 
\\ {\small \texttt{kwangjunahn@microsoft.com}}\and Gagik Magakyan$^\star$
\\
MIT 
\\ {\small \texttt{gagmag@mit.edu}} 
\and Ashok Cutkosky\\ 
Boston University
\\ {\small \texttt{ashok@cutkosky.com}}
}
\begin{document}

\maketitle
\begingroup\renewcommand\thefootnote{$\star$}
\footnotetext{Equal contribution.}
\endgroup

\begin{abstract}  
This work investigates the effectiveness of schedule-free methods, developed by A. Defazio et al. (NeurIPS 2024), in nonconvex optimization settings, inspired by their remarkable empirical success in training neural networks. Specifically, we show that schedule-free SGD achieves optimal iteration complexity for nonsmooth, nonconvex optimization problems. Our proof begins with the development of a general framework for online-to-nonconvex conversion, which converts a given online learning algorithm into an optimization algorithm for nonconvex losses. 
Our general framework not only recovers existing  conversions but also leads to two novel conversion schemes. Notably, one of these new conversions corresponds directly to schedule-free SGD, allowing us to establish its optimality. 
Additionally, our analysis provides valuable insights into the parameter choices for schedule-free SGD, addressing a theoretical gap that the convex theory cannot explain.

\end{abstract}


\section{Introduction}
\label{sec:intro}

Training large-scale neural network models, such as large language models, requires a well-designed optimization strategy to ensure stable and fast convergence. 
For instance, training typically requires a carefully designed optimizer, such as the Adam optimizer \citep{kingma2014adam}, along with meticulously tuned learning rate scheduling.

Recently, \citet{defazio2024road} introduced the schedule-free method, which achieves impressive training performance without any need for learning rate scheduling.
 In brief, the schedule-free method is an add-on scheme that can be applied to any chosen base optimizer, converting it into a schedule-free variant. While this method has shown strong empirical performance in training large neural network models, its theoretical analysis has, to date, been limited to the convex setting \citep{defazio2024road}.
 Our aim is to extend the theoretical understanding of schedule-free methods to nonconvex optimization.

 In this work, as an initial step, we focus on the version where the base optimizer is chosen as SGD, referred to as \emph{schedule-free SGD}. For a given learning rate $\gamma > 0$ and interpolation weights $c_t, \kappa_t \in [0, 1]$, the updates of schedule-free SGD maintain three sequences of iterates, $\bx_t$, $\by_t$, and $\bz_t$, as follows:
\begin{align}
\tag{\textsc{SF-SGD}}\label{schefree}
\begin{cases}
&\bx_{t} =  (1-c_{t})  \bx_{t-1} + c_{t}  \bz_{t}, \\ 
&\by_t = (1-\kappa_t) \bz_t + \kappa_t \bx_t, \\
&\bg_t = \text{a stochastic gradient at}~\by_t, \\ 
&\bz_{t+1}= \bz_t -\gamma \bg_t.
\end{cases}
\end{align} 
Here, $\bz_t$ corresponds to the base SGD trajectory, $\bx_t$ maintains a (weighted) average of $\bz_t$, and $\by_t$ is an interpolation between $\bx_t$ and $\bz_t$ where the stochastic gradient is computed.

\subsection{Our main result and approach}
\label{sec:approach}

To understand the effectiveness of schedule-free SGD for training neural networks, we analyze the method in the \textbf{nonsmooth and nonconvex} setting~\citep{zhang2020complexity,davis2022gradient,tian2022finite}.  Specifically, we adopt the $(\lambda,\epsilon)$-stationarity criterion from \citep{zhang2024random,ahn2024adam} (\autoref{def:reg_station}), which seeks approximate Goldstein stationary points \citep{goldstein1977optimization}.
This criterion emerges as a practical framework for analyzing optimization algorithms, particularly considering recent findings that show practical optimizers like SGD with momentum and Adam are theoretically optimal  \citep{zhang2024random,ahn2024adam}.

Our main results demonstrate that schedule-free SGD is optimal not only for convex optimization, as established by \citet{defazio2024road}, but also for the most challenging case of nonsmooth and nonconvex optimization. Our main results can be summarized informally as follows:

\begin{thmbox}[Informal; see \autoref{sec:SF}] \label{thm:informal}
Schedule-free SGD \eqref{schefree}, with an appropriate choice of parameters $\gamma, c_t, \kappa_t$, achieves optimal rates for nonsmooth and nonconvex $F$.
\end{thmbox}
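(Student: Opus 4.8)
The plan is to prove the informal theorem by instantiating the general online-to-nonconvex conversion framework (to be developed in the body of the paper) with the specific online learner that reproduces \eqref{schefree}. First I would recall the online-to-nonconvox template of \citet{cutkosky2023online}: break the horizon into epochs, and on each epoch feed the conversion an online learning instance whose losses are linearizations $\langle \bg_t, \cdot\rangle$ of $F$ at the query points $\by_t$. The key structural fact, which the general framework should make transparent, is that the regret of the online learner against a fixed comparator, plus a movement/telescoping term, upper bounds a surrogate for the Goldstein stationarity measure $\min_t \|\E[\bg_t]\|$ after averaging; concretely one shows that for a suitable comparator direction $\bu$ with $\|\bu\|\le D$, $\sum_t \langle \E\bg_t, \by_t - \bx_{t-1}\rangle$ controls the objective decrease $F(\bx_0)-F(\bx_T)$ up to the regret term $\mathrm{Regret}_T(\bu)$ and an error from nonsmoothness that scales with the per-step movement $\gamma\|\bg_t\|$. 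I would then identify the precise online learner: the $\bz$-update $\bz_{t+1}=\bz_t-\gamma\bg_t$ is online gradient descent, and the averaging $\bx_t=(1-c_t)\bx_{t-1}+c_t\bz_t$ together with the interpolation $\by_t=(1-\kappa_t)\bz_t+\kappa_t\bx_t$ is exactly the "schedule-free" reweighting of the query points; matching this to the framework's query rule pins down what $c_t,\kappa_t$ must be (I expect $c_t = 1/t$ or a closely related harmonic-type weight, and $\kappa_t$ a constant tied to the momentum parameter).

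The main steps, in order, are: (i) state the general conversion theorem from the framework — given an online learner with regret bound $R_T$ on linear losses over a ball of radius $D$, the conversion outputs points at which $\E\|\nabla F_\lambda\|$ (the norm of the gradient of the Moreau/Goldstein-smoothed surrogate, or equivalently the Goldstein stationarity measure at scale $\lambda=\gamma T$-type quantity) is bounded by roughly $(F(\bx_0)-\inf F)/(DT) + R_T/(DT) + (\text{Lipschitz})^2\gamma/D$ or the analogous expression; (ii) verify \eqref{schefree} is a legal instantiation, i.e., exhibit the online learner and check its outputs coincide with $\bx_t,\by_t,\bz_t$ — this is where the novel part of the framework, allowing the query point to be an interpolation rather than the raw iterate, is used; (iii) plug in the OGD regret bound $R_T \le \frac{D^2}{2\gamma} + \frac{\gamma}{2}\sum_t \|\bg_t\|^2 \le \frac{D^2}{2\gamma} + \frac{\gamma G^2 T}{2}$ where $G$ is the Lipschitz constant; (iv) optimize over $\gamma$, $D$ (the epoch radius), and the number of epochs $K = T_{\text{total}}/T$ to balance the three error terms, yielding the target rate $\widetilde{O}(\epsilon^{-4})$ stochastic oracle calls for a $(\lambda,\epsilon)$-stationary point (matching the lower bound of \citet{zhang2024random}); (v) handle stochasticity by taking expectations and, if a high-probability statement is wanted, a martingale/variance argument on $\sum_t \langle \bg_t - \E\bg_t, \cdot\rangle$, noting that the query point $\by_t$ is measurable w.r.t. the past so the inner-product increments form a martingale difference sequence.

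The hard part will be step (ii) together with getting the parameter schedule exactly right: the original online-to-nonconvex conversions query $F$ at points of the form $\bx_{t-1} + \text{(scaled online iterate)}$, whereas schedule-free SGD queries at the interpolation $\by_t$ between the average $\bx_t$ and the base iterate $\bz_t$, and these are not literally the same recipe. I expect the framework to be phrased generally enough that "the point where the gradient is evaluated" and "the point that is averaged to form the output" are two separate knobs, and the contribution is to show schedule-free SGD is the choice where the query point is itself on the segment between output and base iterate; making the telescoping identity $F(\bx_T) - F(\bx_0) = \sum_t [F(\bx_t)-F(\bx_{t-1})]$ interact correctly with $\by_t$ rather than $\bx_t$ (using Lipschitzness to pay for the mismatch $\|\by_t - \bx_t\| = (1-\kappa_t)\|\bz_t - \bx_t\|$ and $\|\bx_t - \bx_{t-1}\| = c_t\|\bz_t - \bx_{t-1}\|$) is the delicate bookkeeping. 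A secondary subtlety is that the stationarity must be measured at the right reference point — likely one of the $\by_t$ or a randomly sampled iterate — and one must argue that boundedness of $\sum_t c_t \|\bz_t-\bx_{t-1}\|$ (which needs the online iterates to stay in a bounded region, controlled by $\gamma G T$) keeps the Goldstein radius $\lambda$ of the right order; I would address this by choosing $\gamma \asymp \lambda/(G T)$ so that the total path length is $\Theta(\lambda)$ by construction, then letting the regret/optimization tradeoff fix $T$ and $\epsilon$.
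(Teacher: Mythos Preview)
Your outline has the right spirit---instantiate a conversion framework with the online learner underlying \eqref{schefree}---but the mechanism you sketch would not recover schedule-free SGD, and the target rate is off. The paper's schedule-free correspondence is \emph{not} epoch-based with standard regret; that route leads to the anchoring scheme of \autoref{sec:anchor}, a different algorithm. Instead, Option~III runs a \emph{single} pass with \emph{discounted} regret: the losses are $\ell_t(\cdot)=\langle\bg_t,\cdot\rangle+\tfrac{\mu}{2}\|\cdot\|^2$, the learner is $\beta$-OMD, $\update_{t+1}=\bt(\update_t-\eta\bg_t)$ with $\bt=\beta/(1+\eta\mu)$, the output is the EMA $\bby_\tau$ at a specially weighted random index $\tau$, and---the key structural point your plan misses---the free choice of $\bx_t$ in \autoref{alg:general} is set to $\bx_t=\bx_{t-1}+\bt^{-1}\update_t$, which combines with the OMD recursion to give the exact cancellation $\bx_t-\bw_{t-1}=\bt^{-1}\update_t-\update_{t-1}=-\eta\,\bg_{t-1}$. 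That identity is what makes $\sum_t(F(\bx_t)-F(\bw_t))$ near-telescope; paying for $\|\by_t-\bx_t\|$ and $\|\bx_t-\bx_{t-1}\|$ by Lipschitzness alone, as you propose, incurs an error of order $G\sum_t\|\update_t\|$ rather than $G\eta\sum_t\|\bg_t\|$, which is too large to close the bound.

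Consequently your parameter predictions are wrong in ways that matter: $c_t\equiv 1-\bt$ is a \emph{constant} (not $1/t$), $\kappa_t$ is uniform on $[\bt,1]$ and hence close to $1$, and the base step size is $\gamma=\eta/(1-\bt)=\Theta((G+\sigma)^2\eps^{-2})\cdot\eta$, much larger than the OMD rate $\eta$---these facts are precisely the practical insights the theorem is meant to deliver. Finally, the optimal complexity for $(\lambda,\eps)$-stationarity is $\Theta(\lambda^{1/2}\eps^{-7/2})$, not $\widetilde O(\eps^{-4})$; the $\eps^{-4}$ smooth rate is a corollary via \autoref{lem:convert} but is not the guarantee being proved here.
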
  

Our proof technique leverages the online-to-nonconvex conversion framework pioneered by \citet{cutkosky2023optimal}. In essence, this framework takes an online learner as input and outputs an optimization algorithm. As its name suggests, this framework translates online learning guarantees into nonconvex guarantees, analogous to the well-known online-to-batch conversion for convex settings \citep{cesa2004generalization}.
As suggested by our title, we first introduce a general framework for online-to-nonconvex conversion. This framework not only encompasses the previous conversion \citep{cutkosky2023optimal,zhang2024random,ahn2024adam} as a special case but also enables new conversion schemes, as we present in \autoref{sec:anchor} and \autoref{sec:SF}.

With the new conversion schemes, our main observation is that one of these novel conversions, outlined in \autoref{alg:SF}, directly corresponds to schedule-free SGD. Specifically, by choosing a basic online mirror descent as the online learner in \autoref{alg:SF}, we naturally recover the schedule-free SGD algorithm.
From this, our main result, \autoref{thm:informal}, follows.

Notably, our approach provides fresh practical insights into the parameter choice of schedule-free methods, which cannot be explained by prior convex analysis in \citet{defazio2024road}.  First, our results suggest that setting $\kappa_t$ close to $1$ is advantageous for nonconvex optimization. This finding clarifies the curious importance of choosing $\kappa_t$ near $1$ in \citep{defazio2024road} for empirical performance—a phenomenon not previously explained by convex theory.
Moreover, our findings also supports choosing the base optimizer step size $\gamma$ much larger than usual values, supporting the practical choices made in \citep{defazio2024road}.

\subsection{Related work}
\label{sec:related}

Our work builds on a line of research focused on convergence guarantees for nonsmooth, nonconvex optimization. Intuitively, our convergence criterion aims to find approximate Goldstein stationary points \citep{goldstein1977optimization}. The formal study of iteration complexity for finding approximate Goldstein stationary points was initiated by \citet{zhang2020complexity} and has since garnered significant interest \citep{davis2022gradient,tian2022no, lin2022gradient,chen2023faster,jordan2023deterministic,cutkosky2023optimal,kornowski2024algorithm}. Alternative convergence notions are also widely used, including approaches based on the Moreau envelope or by imposing weak convexity conditions \citep{davis2018subgradient, davis2022escaping}.
In particular, the criterion we consider in this work, formalized in \autoref{def:reg_station}, follows the relaxed version proposed by \citet{zhang2024random}, which slightly modifies the original criterion from \citet{zhang2020complexity}.

 As discussed in \autoref{sec:approach}, our approach specifically builds on leverages the online-to-nonconvex conversion framework introduced by \citet{cutkosky2023optimal}, a versatile framework that established the first optimal algorithm for stochastic nonsmooth, nonconvex optimization. 
Similar to the well-known online-to-batch conversion \citep{cesa2004generalization}, which transforms an online learning algorithm into an optimization algorithm for convex losses, this framework adapts this transformation to nonconvex settings.
A key distinction, as noted by \citet{ahn2024understanding}, lies in how each framework sets the iterates of the resulting optimization algorithm. 
While the online-to-batch conversion sets the iterates, $\bx_t$, directly according to the online learner, the online-to-nonconvex conversion framework determines the increments, $\bx_t - \bx_{t-1}$, based on updates from the online learner.

  Practically, the online-to-nonconvex conversion framework offers a promising perspective on the success of popular optimizers. By choosing standard online learners within this framework, we recover practical algorithms: online mirror descent corresponds to SGD with momentum \citep{zhang2024random}, while a variant of follow-the-regularized-leader (FTRL) aligns with Adam \citep{ahn2024adam}. 
  This natural alignment with widely used optimizers underscores the framework’s potential as a powerful tool for understanding the success of practical optimization algorithms.
  Building on this, our work further demonstrates the framework’s promise by showing that it also captures the highly effective schedule-free SGD algorithm.

Lastly, we note that the existing theory for the schedule-free method is limited to convex losses, analyzed through online-to-batch conversions \citep{defazio2024road}. Specifically, when the base optimizer is selected as an online learner, \citet{defazio2024road} demonstrate that schedule-free methods effectively transform online learning guarantees into optimization guarantees for the last iterate, similar to the results of \citep{cutkosky2019anytime, kavis2019unixgrad}. However, this convex theory does not account for the empirical success of schedule-free methods in training neural networks.  
Our work extends these insights by showing that schedule-free SGD is also effective for nonsmooth, nonconvex optimization.

\section{Preliminaries}
\label{sec:prelim}

We first introduce the key assumptions and the notion of convergence.
Throughout this paper, unless specified otherwise, $\norm{\cdot}$ denotes the $L_2$ norm. 

\subsection{Assumptions on the loss function and stochastic gradients}

Following \citet{cutkosky2023optimal}, we consider optimizing a loss function $F$ that satisfies the following conditions.

\begin{assumption} \label{assump}
Let $F:\R^d \to \R$ be a differentiable function with the following properties:
\begin{itemize}

\item Let $\subopt \coloneqq F(\bx_0) -\inf_{\bx} F(\bx)$.

\item For any two points $\bx$ and $\bw$, $F(\bx)-F(\bw) = \int_0^1 \inp{\nabla F(\bw+t(\bx-\bw))}{\bx-\bw}\D t$. 

\item  $F$ is $G$-Lipshitz, \emph{i.e.}, for any point $\bx$, $\norm{\nabla F(\bx)}\leq G$.
 
\end{itemize}

\end{assumption}
Here, the second condition, called \emph{well-behavedess} in \citep[Definition 1]{cutkosky2023optimal}, is a mild regularity condition. 
For any locally Lipschitz function $F$, applying an arbitrarily small perturbation to the function is sufficient to ensure this condition \citep[Proposition 2]{cutkosky2023optimal}. 

We optimize the loss function $F$ by accessing information  through a stochastic gradient oracle, which is formalized as follows:

\begin{assumption}[{\bf Stochastic gradient oracle}] 
We assume access to a stochastic gradient oracle at any point $\bx$. More formally, for any given point $\bx$, each call to the oracle independently returns a stochastic gradient $\bg$ that satisfies the following properties:
\[
\E[\bg] = \nabla F(\bx), \quad \text{and} \quad \E\left[\norm{\bg - \nabla F(\bx)}^2\right] \leq \sigma^2.
\]
We denote the stochastic gradient oracle as $\stograd$. When the oracle returns the stochastic gradient $\bg$ at point $\bx$, we write this as $\bg \gets \stograd(\bx)$.
\end{assumption}

\subsection{Approximate  Goldstein stationary point}

For the notion of optimality, we follow \citep{zhang2024random,ahn2024adam} and consider the following notion of stationarity for  nonconvex and nonsmooth functions.
This notion can be regarded as an approximate version  of the notion of a Goldstein stationarity point~\citep{goldstein1977optimization}.

\begin{definition}[{\bf $(\lambda,\eps)$-stationary point}]
\label{def:reg_station}
Suppose $F:\R^d\to \R$ is differentiable.
We say $\bx$ is a $(\lambda,\eps)$-stationary point of $F$ if $\regnorm{\lambda}{\nabla F(\bx)}  \leq \eps$, where 
\begin{align}
\regnorm{\lambda}{\nabla F(\bx)} \coloneqq \inf_{\substack{p \in \mathcal{P}(\R^d),\\\E_{\bw\sim p}[\bw] =\bx} } \left\{\norm{\E[\nabla F(\bw)]} + \lambda \cdot \E\norm{\bw-\bx}^2 \right\}\,.
\end{align}
\end{definition}

Our algorithms will identify $(\lambda,\eps)$-stationary points using $\OO{\lambda^{1/2}\epsilon^{-7/2}}$ calls to a stochastic gradient oracle, which is the optimal rate \citep{zhang2024random}.

To further motivate this definition, we remark that $(\lambda,\eps)$-stationary points retain the desirable properties of Goldstein stationary points. Specifically, the following result {\citep[Lemma 2.3]{zhang2024random}} demonstrates that, akin to Goldstein stationary points, $(\lambda,\eps)$-stationary points can be reduced to first-order stationary points with appropriate choices of $\lambda$ when the objective function is smooth or second-order smooth.

\begin{proposition}  \label{lem:convert}
If $F$ is $L$-smooth, then an $(L^2 \eps^{-1},\eps)$-stationary point $\bx$ of $F$ satisfies $\norm{\nabla F(\bx)}\leq 2\eps$. Moreover, if $F$ is $H$-second-order-smooth, then an $(H/2,\eps)$-stationary point $\bx$ of $F$ satisfies $\norm{\nabla F(\bx)}\leq 2\eps$. 
\end{proposition}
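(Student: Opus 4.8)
The plan is to prove Proposition~\ref{lem:convert} directly from Definition~\ref{def:reg_station} by exhibiting, in each of the two cases, a suitable distribution $p$ over $\R^d$ centered at $\bx$ and bounding the two terms $\norm{\E_{\bw\sim p}[\nabla F(\bw)]}$ and $\lambda\cdot\E\norm{\bw-\bx}^2$ of the infimum. Since $\regnorm{\lambda}{\nabla F(\bx)}\le\eps$ by hypothesis, controlling the second term from below will force the first term to be small, and smoothness of $F$ will then let us pass from $\norm{\E_{\bw\sim p}[\nabla F(\bw)]}$ back to $\norm{\nabla F(\bx)}$.

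First I would treat the $L$-smooth case. Fix the optimal (or near-optimal, up to an arbitrarily small slack) distribution $p$ achieving the infimum with $\lambda=L^2\eps^{-1}$, so that $\norm{\E_{\bw\sim p}[\nabla F(\bw)]}+L^2\eps^{-1}\E\norm{\bw-\bx}^2\le\eps$. The second summand being nonnegative gives both $\norm{\E_{\bw\sim p}[\nabla F(\bw)]}\le\eps$ and $\E\norm{\bw-\bx}^2\le\eps^2/L^2$. Now write $\nabla F(\bx)=\E_{\bw\sim p}[\nabla F(\bx)]$ and use $\E_{\bw\sim p}[\bw]=\bx$ together with $L$-smoothness: by the triangle inequality and Jensen,
\begin{align}
\norm{\nabla F(\bx)}\le\norm{\E_{\bw\sim p}[\nabla F(\bw)]}+\E_{\bw\sim p}\norm{\nabla F(\bx)-\nabla F(\bw)}\le\eps+L\,\E_{\bw\sim p}\norm{\bw-\bx}.
\end{align}
By Jensen $\E\norm{\bw-\bx}\le(\E\norm{\bw-\bx}^2)^{1/2}\le\eps/L$, so $\norm{\nabla F(\bx)}\le\eps+L\cdot(\eps/L)=2\eps$, as desired. (The arbitrarily small slack from non-attainment of the infimum is absorbed in the usual way by a limiting argument.)

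The second-order-smooth case is analogous but uses the second-order Taylor structure: with $\lambda=H/2$ we get $\norm{\E_{\bw\sim p}[\nabla F(\bw)]}\le\eps$ and $\tfrac{H}{2}\E\norm{\bw-\bx}^2\le\eps$. The key is that $H$-second-order-smoothness ($\norm{\nabla^2 F(\bx)-\nabla^2 F(\by)}\le H\norm{\bx-\by}$, equivalently a Hessian-Lipschitz condition) yields the bound $\norm{\nabla F(\bw)-\nabla F(\bx)-\nabla^2F(\bx)(\bw-\bx)}\le\tfrac{H}{2}\norm{\bw-\bx}^2$; taking expectations over $\bw\sim p$ and using $\E_{\bw\sim p}[\bw-\bx]=0$ makes the linear Hessian term vanish, leaving $\norm{\nabla F(\bx)-\E_{\bw\sim p}[\nabla F(\bw)]}\le\tfrac{H}{2}\E\norm{\bw-\bx}^2\le\eps$, hence $\norm{\nabla F(\bx)}\le\eps+\eps=2\eps$. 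The main thing to get right — and the only mild subtlety — is the bookkeeping: which term of the sum to lower-bound, correctly invoking that $\E_{\bw\sim p}[\bw-\bx]=0$ kills the first-order Hessian contribution in the second case, and handling the infimum-not-attained technicality cleanly. Since this is precisely \citep[Lemma 2.3]{zhang2024random}, I would expect the proof to either cite it or reproduce this short argument; no serious obstacle is anticipated.
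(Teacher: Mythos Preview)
Your argument is correct in both cases; the bookkeeping is sound, including the use of $\E_{\bw\sim p}[\bw-\bx]=0$ to kill the Hessian term in the second-order-smooth case and the handling of the near-optimal distribution. The paper itself does not prove this proposition at all but simply attributes it to \citep[Lemma 2.3]{zhang2024random}, exactly as you anticipated in your final remark, so there is nothing further to compare.
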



Note that another popular notion of approximate Goldstein stationarity is called $(\delta,\eps)$-stationarity due to  \citet{zhang2020complexity}.
However, the algorithms that achieve optimal complexity under that notion often require clipping on the momentum term \citep{cutkosky2023optimal}, which introduces deviations from the practical optimization algorithms. Hence, in this work, we adopt the notion of $(\lambda,\eps)$-stationarity, for which it has been demonstrated that optimal algorithms does not require clipping operations \citep{zhang2024random}.
We also remark that algorithms that identify $(\lambda, \eps)$-stationary points can also identify $(\delta,\eps)$-stationary points when $F$ is Lipschitz, as demonstrated in {\citep[Lemma 2.4]{zhang2024random}}.
 
As mentioned above, the stochastic gradient oracle complexity of finding a $(\lambda,\eps)$ stationary point is $\Theta(\lambda^{1/2}\eps^{-7/2})$ \citep{zhang2024random}. By \autoref{lem:convert}, any algorithms achieving this rate (such as the ones we will present), can also find a point $\bx$ with $\|\nabla F(\bx)\|\le \eps$ in $O(\epsilon^{-4})$ oracle evaluations when $F$ is smooth  and $O(\epsilon^{-3.5})$ oracle evaluations when $F$ is second-order  smooth. These are the optimal rates for their respective function classes \citep{arjevani2023lower, arjevani2020second}.

\subsection{Online learning}

In this section, we provide a brief background on online learning. Online learning is modeled as a sequential decision-making process over $T$ rounds.
In each round $t$, the online learner chooses a point $\update_t\in\R^d$, and then a loss function $\ell_t: \R^d \to \R$ is revealed.
The learner then incurs a loss $\ell_t(\update_t)$. 
The choice of $\update_t$ is based on the previous loss sequence $\ell_{1:t-1}$, and after selecting  $\update_t$, the learner observes the next loss $\ell_t$.

The performance of the online learner is measured using the \emph{regret} with respect to a comparator $\bu$, formally defined as: 
\begin{align}
\regret_{T}(\bu) \coloneqq    \sum_{t=1}^{T} (\ell_t(\update_t)- \ell_t(\bu)).
\end{align}  
However, recent works \citep{cutkosky2023optimal, ahn2024understanding} demonstrate that when designing nonconvex optimization algorithms, base online learners must be adaptive to time-varying comparators, also known as the \emph{dynamic} regret setting.
Inpsired by \citep{ahn2024understanding,zhang2024random,ahn2024understanding}, we  design such dynamic online learners by considering the following   ``discounted'' version of  regret for online learners.
\begin{definition}[{\bf Discounted regret}]
Consider the loss sequence $\ell_{1:T}$.
For any $T\geq 1$, we define the discounted regret of an online learner with respect to a comparator $\bu$ as:
\begin{align}
\dregret_T(\bu) \coloneqq  \sum_{t=1}^T \beta^{T-t}\left( \ell_t(\update_t)- \ell_t(\bu) \right).
\end{align} 
\end{definition}
For instance, \citet{ahn2024understanding} demonstrate that online learners with low discounted regret can achieve low dynamic regret through what they call the \emph{discounted-to-dynamic} conversion.
Additionally, discounted regret has been shown to be a more effective metric for designing adaptive online learners in dynamic environments across various contexts, including conformal prediction \citep{zhang2024discounted} and online linear regression~\citep{jacobsen2024online}.

\section{General framework for online-to-nonconvex conversion}

In this section, we introduce a general scheme (presented in \autoref{alg:general}) for converting online learning guarantees into  nonconvex optimization guarantees.
As a preview, we will demonstrate that our \autoref{alg:general} not only recovers existing approaches \citep{cutkosky2023optimal, zhang2024random, ahn2024adam} as special cases, but also enables the design of novel conversion methods.
Let us begin with the algorithm procedure.

\begin{algorithm}[H]
\caption{General scheme for online-to-nonconvex conversion}
\label{alg:general}
\begin{algorithmic}[1]
\STATE \textbf{Input:} Initial iterates $\bx_0=\bw_0$, an online learner $\mathcal{A}$, and $T\in \mathbb{N}$, regularization strength  $\mu \geq 0$.
\FOR{$t = 1,2, \ldots, T$}
\STATE Receive $\update_{t}$ from $\mathcal{A}$.
\STATE   \algcolor{Choose $\bx_t$ arbitrarily.}    \algcomment{The design choice}
\STATE  Update $\bw_{t} = \bx_{t} +  \update_{t}$. 
\STATE Update $\by_{t} =   \bx_{t} +  s_t\update_{t}$ where $s_{t}$ is drawn uniformly from $[0,1]$ \emph{i.i.d}. 
\STATE Compute $\bg_{t} \leftarrow \stograd(\by_{t})$.
\STATE Send loss $\ell_{t}(\cdot)
=  \langle  \bg_{t}, \cdot \rangle + \frac{\mu}{2}\norm{\cdot}^2$ to $\mathcal{A}$.
\ENDFOR   
\end{algorithmic}
\end{algorithm}

Overall, \autoref{alg:general} generates three sequences of iterates: $\bx_t$, $\by_t$, and $\bw_t$. At each iteration, the stochastic gradients observed so far are fed into the online learner, which outputs $\update_t$.

The main design feature of \autoref{alg:general} that gives it flexibility is the ability to choose $\bx_t$ arbitrarily at each iteration. However, there are technical properties we want $\bx_t$ to satisfy in order to achieve better nonconvex guarantees, which are detailed in \autoref{sec:user}.

The online learner's output $\update_t$ is then used alongside $\bx_t$ to compute $\bw_t$ according to the update rule $\bw_t = \bx_t + \update_t$. Additionally, the iterates $\by_t$ are sampled uniformly from the line segment connecting $\bx_t$ and $\bw_t$, and the stochastic gradients are computed at these $\by_t$ iterates.

\subsection{Output of the general scheme}

To establish nonconvex optimization guarantees, we utilize the exponential moving average (EMA) of the $\by_t$ iterates, similar to \citep{zhang2024random,ahn2024adam}. We begin by formally defining these EMA iterates.

\begin{definition}[{Exponential moving average (EMA) of iterates}] \label{def:ema}
Given a discount factor $\beta \in (0,1)$ and a sequence of iterates $\{\by_s\}_{s=1}^t$, the $\beta$-EMA of the sequence up to time $t$, denoted as $\bby_t$, is defined as
\begin{align}
\bby_t = \frac{1-\beta}{1-\beta^t} \sum_{s=1}^t \beta^{t-s} \by_s.
\end{align}  
\end{definition}

In particular, as we will see in \autoref{lem:master}, the final output will be the random EMA iterate $\bby_\tau$, where $\tau$ is a carefully selected random index, defined as follows.

\begin{definition}[{Random index distribution}] \label{def:random}
Let $\tn$ be a random index distributed over $\{1, 2, \dots, T\}$ with the following distribution:
\begin{align}
\Pr(\tn = t) = \begin{cases}
\frac{1 - \beta^t}{T}, & \text{for}~~ t = 1, \dots, T-1, \\
\frac{1}{1-\beta} \cdot \frac{1 - \beta^T}{T}, & \text{for}~~ t = T.
\end{cases}
\end{align}
\end{definition}

We note that previous works \citep{zhang2024random, ahn2024adam} select the output uniformly at random from the sequence $\{\bby_t\}_{t=1}^T$. Our carefully designed random EMA iterate from \autoref{def:random} leads to an improved conversion result, offering stronger nonconvex optimization guarantees, as we will discuss in \autoref{sec:warmup}.

Notice that as $\beta$ approaches $1$, the random index $\tn$  assigns a significantly higher probability (by a multiplicative factor of $\frac{1}{1-\beta}$) to the final index $T$.  This aligns with common practice, where the final iterate—rather than an average of previous iterates—is often used as the output.
Indeed, we will select $\beta$ very close to $1$ for our nonconvex guarantees (we will choose $\beta=1-O(\eps^2)$).

\subsection{Conversion guarantees}
\label{sec:conversion}

Given the description of the algorithm and its output, we now present the online-to-nonconvex conversion guarantees. We begin by introducing a key definition that quantifies the stability of the $\bx_t$ sequence.

\begin{definition}[$\bx$-iterate stability] \label{def:stability}
Consider the iterates generated by \autoref{alg:general}.  The {iterate stability factor}, denoted by $\cx \geq 0$, is the smallest nonnegative constant such that:
\begin{align*} 
\mathbb{E} \sum_{t=1}^{T} \|\mathbf{x}_t - \mathbf{x}_{t-1}\|^2 \leq \cx \cdot \mathbb{E} \sum_{t=1}^{T} \|\update_t\|^2.
\end{align*}   
\end{definition}

With the concept of the iterate stability factor, we can now state the conversion result.

\begin{lemma}[{\bf Generic online-to-nonconvex conversion}] \label{lem:master}
Consider the iterates generated according to \autoref{alg:general}.
For $\beta \in (0, 1)$ and $D > 0$, define  the comparators for online learner as follows:
\begin{align}
\forall t \in[T], \quad \mathbf{u}_t \coloneqq -D \frac{\sum_{s=1}^{t} \beta^{t - s} \nabla F(\by_s)}{\left\|\sum_{s=1}^{t} \beta^{t - s} \nabla F(\by_s)\right\|}  .
\end{align}
Then, as long as the regularization strength satisfies $\mu \geq 8\lambda D \left(1 + { \cx}{(1-\beta)^{-2}} \right)$, the following holds:
\begin{align*}
\E_{\tn }  \regnorm{\lambda}{\nabla F(\bby_{\tn})}  &\leq  \frac{ 1}{DT}\left(\beta \E\left[\dregret_T(\bu_T) \right] +  (1-\beta)\sum_{t=1}^{T}     \E\left[\dregret_t(\bu_t)\right]\right)    \\
&\quad + \frac{1}{DT} \E \sum_{t=1}^{T} (F(\bx_{t}) - F(\bw_{t}))    +  \frac{\mu D}{2}    + \frac{\sigma }{T\sqrt{1-\beta}}  + \sigma \sqrt{1-\beta}.
\end{align*}  
\end{lemma}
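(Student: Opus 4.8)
The plan is to relate the quantity $\regnorm{\lambda}{\nabla F(\bby_\tn)}$ to the discounted regret of the online learner by exploiting the specific structure of $\bby_t$, $\by_t$, and the linear-plus-quadratic losses $\ell_t$. The key observation is that, since $\by_t = \bx_t + s_t \update_t$ with $s_t \sim \mathrm{Unif}[0,1]$, we have $\E_{s_t}[F(\bw_t) - F(\bx_t)] = \E_{s_t}\inp{\nabla F(\by_t)}{\update_t}$ by the well-behavedness (fundamental theorem of calculus) assumption; more precisely $F(\bw_t) - F(\bx_t) = \int_0^1 \inp{\nabla F(\bx_t + s\update_t)}{\update_t}\D s$, so conditioning on everything except $s_t$ turns the integral into an expectation over $s_t$, and then $\E[\bg_t \mid \by_t] = \nabla F(\by_t)$ lets us replace the true gradient with the stochastic one up to the oracle noise. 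Summing the telescoping $F(\bx_t) - F(\bw_{t-1})$-type terms will produce the $\frac{1}{DT}\E\sum_t (F(\bx_t) - F(\bw_t))$ term in the bound.

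First I would fix the comparator point: for each $t$, the choice $\bu_t = -D\, v_t/\norm{v_t}$ with $v_t = \sum_{s=1}^t \beta^{t-s}\nabla F(\by_s)$ is exactly the unit-norm (scaled by $D$) vector that makes $\inp{v_t}{\bu_t} = -D\norm{v_t}$, i.e. it extracts $\norm{\sum_s \beta^{t-s}\nabla F(\by_s)}$. Then I would write out $\dregret_t(\bu_t) = \sum_{s=1}^t \beta^{t-s}(\ell_s(\update_s) - \ell_s(\bu_t))$, expand $\ell_s$, and note the linear part contributes $\sum_s \beta^{t-s}\inp{\bg_s}{\update_s - \bu_t}$. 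After taking expectations, $\E\inp{\bg_s}{\cdot} = \inp{\nabla F(\by_s)}{\cdot}$ on the parts independent of the noise, and $\E\inp{\bg_s}{\bu_t}$ needs care because $\bu_t$ depends on $\nabla F(\by_s)$ but \emph{not} on the noise in $\bg_s$ — so it splits cleanly. The term $\sum_s \beta^{t-s}\inp{\nabla F(\by_s)}{-\bu_t} = D\,\norm{v_t}$, which is essentially $D \cdot \frac{1-\beta^t}{1-\beta}\norm{\E_{\by\sim \text{EMA}}\nabla F(\by)}$ after normalizing; combined with the combination $\beta\,\dregret_T(\bu_T) + (1-\beta)\sum_{t=1}^{T}\dregret_t(\bu_t)$ and the specific random-index distribution of \autoref{def:random} (whose probabilities $\frac{1-\beta^t}{T}$ are designed precisely so that $\E_\tn[\cdots]$ telescopes the discount sums into a clean average), this yields the $\norm{\E[\nabla F(\bby_\tn)]}$ part of $\regnorm{\lambda}{\cdot}$.

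Next I would handle the quadratic regularizer: the $\frac{\mu}{2}\norm{\cdot}^2$ in $\ell_s$ contributes $\frac{\mu}{2}(\norm{\update_s}^2 - \norm{\bu_t}^2) = \frac{\mu}{2}(\norm{\update_s}^2 - D^2)$ inside the discounted sum. The $-\frac{\mu}{2}D^2$ part, after the discount-sum combination and the $\tn$-averaging, becomes (a constant times) $\frac{\mu D}{2}$ after dividing by $DT$ — wait, more carefully it should produce the $+\frac{\mu D}{2}$ term. The $+\frac{\mu}{2}\E\sum_s\norm{\update_s}^2$ part is where the stability factor enters: this is the ``cost'' of regularization, and I would bound $\E\sum_t\norm{\update_t}^2$ against the movement that is needed to make $\bby_\tn$ a valid witness in the infimum over distributions $p$ in \autoref{def:reg_station}. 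Concretely, the natural coupling is $\bw \sim p$ given by $\bby_\tn$ plus the EMA-weighted displacements of the $\by_s$ around $\bby_\tn$; its second moment is $O\!\big(\E\norm{\bby_\tn - \by_s}^2\big)$, which by a standard EMA variance bound is $O\big((1-\beta)^{-2}\big)$ times $\E\sum\norm{\update_t}^2$ plus the $\E\sum\norm{\bx_t - \bx_{t-1}}^2 \le \cx \E\sum\norm{\update_t}^2$ contribution — this is exactly why the hypothesis $\mu \ge 8\lambda D(1 + \cx(1-\beta)^{-2})$ is shaped the way it is: it ensures the $\lambda\cdot\E\norm{\bw - \bby_\tn}^2$ penalty is dominated by the $\frac{\mu}{2}\E\sum\norm{\update_t}^2$ we are spending, so these terms cancel rather than appearing in the final bound.

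Finally, the stochastic-noise terms $\frac{\sigma}{T\sqrt{1-\beta}} + \sigma\sqrt{1-\beta}$ come from controlling $\E\big\|\sum_s \beta^{t-s}(\bg_s - \nabla F(\by_s))\big\|$; since the noise is conditionally mean-zero with variance $\sigma^2$, the discounted sum of noises has variance $\le \sigma^2 \sum_s \beta^{2(t-s)} \le \sigma^2/(1-\beta^2) = O(\sigma^2/(1-\beta))$, so its expected norm is $O(\sigma/\sqrt{1-\beta})$; after dividing by $DT$ and also accounting for the undiscounted ($t=T$ weighted) contribution versus the averaged contribution, one gets the two displayed $\sigma$-terms. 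I expect the \textbf{main obstacle} to be the careful bookkeeping in step three — getting the EMA displacement variance bound with the right $(1-\beta)^{-2}$ dependence and matching it exactly to the $\mu$-threshold so that the $\lambda$-penalty term is absorbed — together with verifying that the random index distribution of \autoref{def:random} makes the two discounted-regret aggregates ($\beta\,\dregret_T(\bu_T)$ and $(1-\beta)\sum_t \dregret_t(\bu_t)$) combine into precisely $\frac{1-\beta^\tn}{1-\beta}$-type weights that reconstruct $\E_\tn\norm{\E[\nabla F(\bby_\tn)]}$ without leftover terms; the rest (Lipschitzness to bound $F(\bx_t)-F(\bw_t)$ when needed, linearity manipulations) is routine.
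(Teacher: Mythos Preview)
Your approach is essentially the paper's: it too splits $\regnorm{\lambda}{\nabla F(\bby_\tn)}$ into the averaged-gradient norm (controlled by writing $\E[F(\bw_t)-F(\bx_t)]=\E\langle\nabla F(\by_t),\update_t\rangle$, inserting $\bu_n$ to extract $D\|\sum_s\beta^{n-s}\nabla F(\by_s)\|$, and using the $\tn$-weights to aggregate the discounted regrets) plus the EMA variance $\lambda\,\E\norm{\eby_\tn-\bby_\tn}^2$ (absorbed by the $-\tfrac{\mu}{2}\sum_t\norm{\update_t}^2$ term coming from the quadratic part of $\ell_t$). Two small corrections to your sketch: there is no ``telescoping $F(\bx_t)-F(\bw_{t-1})$'' in this lemma --- the $\sum_t(F(\bx_t)-F(\bw_t))$ term enters instead as the residual of the summation identity $\sum_{n\le T}\sum_{t\le n}\beta^{n-t}(1-\beta)a_t=\sum_t a_t-\sum_t\beta^{T-t+1}a_t$ applied with $a_t=F(\bw_t)-F(\bx_t)$; and in the variance bound the $(1-\beta)^{-2}$ factor multiplies $\sum_t\norm{\bx_t-\bx_{t-1}}^2$ (hence $\cx$), not $\sum_t\norm{\update_t}^2$, which is precisely why the threshold has the shape $1+\cx(1-\beta)^{-2}$.
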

\begin{proof}
See \autoref{pf:lem:master}.
\end{proof}

\autoref{lem:master} provides an upper bound on the main quantity of interest, $\E_{\tn} \regnorm{\lambda}{\nabla F(\bby_{\tn})}$. One of the main terms in this upper bound is the sum of the discounted regret terms, which indicates that improved performance by the online learner leads to better guarantees in nonconvex optimization. Thus,  \autoref{lem:master} effectively translates the online learning guarantee into a nonconvex optimization guarantee, as suggested by its name, \emph{online-to-nonconvex} conversion.

However, in its current form, the presence of several additional terms in the upper bound makes the result less interpretable. Before demonstrating the strength of this general conversion framework, we first reformulate \autoref{lem:master} into a more interpretable and user-friendly form.

\subsection{User-friendly nonconvex optimization guarantees}
\label{sec:user}

In this section, we apply a concrete discounted regret bound to \autoref{lem:master} to derive a more user-friendly nonconvex guarantee, which will be used throughout the remainder of the paper.
In particular,   a discounted version of composite objective online mirror descent (OMD) \citep{beck2003mirror,duchi2010composite,zhang2024random} achieves the following discounted regret bound.

\begin{lemma} \label{lem:omd-regret}
Let $\beta \in (0,1)$, $\mu \geq 0$, $\eta > 0$, and a sequence of vectors $\{\bg_t\}_{t=1}^T$. 
Suppose that $\E\norm{\bg_t}^2 \leq G^2 + \sigma^2$ for all $t\in[T]$. 
Consider an online learner initialized with $\update_1 = \bm{0}$ and updated as follows:
\begin{align} \tag{$\beta$-\textsf{OMD}}\label{exp:domd-og}
 \update_{t+1} = \frac{\beta}{1 + \eta \mu} \left( \update_{t} - \eta \bg_t \right). 
\end{align}
Then, this online learner with  $\eta = \frac{2}{G + \sigma} \norm{\bu} \sqrt{1-\beta}$ achieves the following discounted regret bound:
\begin{align} \label{exp:discounted_regret}
 \E\left[\dregret_T(\bu)\right] \leq \frac{2\norm{\bu}(G + \sigma)}{\beta\sqrt{1-\beta}} + \frac{\mu}{2}\norm{\bu}^2.
\end{align}

\end{lemma}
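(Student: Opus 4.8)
The plan is to bound the discounted regret of \ref{exp:domd-og} by the standard mirror-descent-with-$\ell_2$-regularizer telescoping argument, adapted to the discounting factor $\beta$. First I would observe that the update $\update_{t+1} = \frac{\beta}{1+\eta\mu}(\update_t - \eta\bg_t)$ is exactly the exact minimizer of a regularized linearized composite objective: $\update_{t+1} = \argmin_{\update} \big\{ \eta \inp{\bg_t}{\update} + \frac{\eta\mu}{2}\norm{\update}^2 + \frac12\norm{\update - \beta\update_t}^2 \big\}$ (one can check the first-order optimality condition $\eta\bg_t + \eta\mu\update_{t+1} + \update_{t+1} - \beta\update_t = 0$ gives precisely the stated recursion). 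Writing $\psi_t(\update) := \inp{\bg_t}{\update} + \frac{\mu}{2}\norm{\update}^2$ so that $\ell_t = \psi_t$ in the notation of the lemma, the plan is to use the optimality of $\update_{t+1}$ together with strong convexity of the proximal term to get, for any comparator $\bu$,
\begin{align*}
\eta\big(\psi_t(\update_{t+1}) - \psi_t(\bu)\big) \leq \tfrac12\norm{\bu - \beta\update_t}^2 - \tfrac12\norm{\bu - \update_{t+1}}^2 - \tfrac12\norm{\update_{t+1} - \beta\update_t}^2 - \tfrac{\eta\mu}{2}\norm{\update_{t+1} - \bu}^2.
\end{align*}
Then I would convert the bound on $\psi_t(\update_{t+1})$ into one on $\psi_t(\update_t)$ by adding and subtracting: $\psi_t(\update_t) - \psi_t(\update_{t+1}) = \inp{\bg_t}{\update_t - \update_{t+1}} + \frac{\mu}{2}(\norm{\update_t}^2 - \norm{\update_{t+1}}^2)$, and control the linear part via $\inp{\bg_t}{\update_t - \update_{t+1}} \le \frac{1}{2\eta}\norm{\update_t - \update_{t+1}}^2 + \frac{\eta}{2}\norm{\bg_t}^2$ (Young), using the $\frac12\norm{\update_{t+1}-\beta\update_t}^2$ slack — or, more cleanly, just directly bound $\dregret_T(\bu) = \sum_t \beta^{T-t}(\ell_t(\update_t) - \ell_t(\bu))$ shifting indices appropriately so the natural telescoping variable is $\beta^{T-t}\norm{\bu - \update_t}^2$ type quantities.

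The key technical point is the discount bookkeeping: multiplying the per-step inequality by the right power of $\beta$ so that the potential terms telescope. Concretely, I expect that after multiplying the $t$-th inequality by $\beta^{T-t}/\eta$ and summing, the $\norm{\cdot}^2$ potential terms collapse because $\frac{\beta}{1+\eta\mu} \le \beta$ ensures the "incoming" term $\tfrac12\norm{\bu - \beta\update_t}^2$ at step $t$ dominates (up to the $\beta$ power) the "outgoing" term $\tfrac12\norm{\bu-\update_{t+1}}^2$ from step $t-1$; the $\mu$-dependent terms are handled by noting $\frac{\eta\mu}{2}\norm{\update_{t+1}-\bu}^2 \ge 0$ can simply be dropped on one side, while the $\frac{\mu}{2}\norm{\update_t}^2$ correction from the $\psi(\update_t)$ vs $\psi(\update_{t+1})$ swap can be absorbed (or bounded by $\frac{\mu}{2}\norm{\bu}^2$ at the end since the comparator is the only surviving "mass" term). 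The residual terms are: a boundary term $\lesssim \frac{1}{\eta}\norm{\bu}^2$ (since $\update_1 = \bm 0$, the initial potential is $\frac12\norm{\bu}^2$, weighted by $\beta^{T-1}$ or a $\sum \beta^{T-t} \asymp (1-\beta)^{-1}$ factor), and a gradient term $\frac{\eta}{2}\sum_{t} \beta^{T-t}\E\norm{\bg_t}^2 \le \frac{\eta(G^2+\sigma^2)}{2(1-\beta)}$ using $\sum_{t=1}^T \beta^{T-t} \le \frac{1}{1-\beta}$ and $\E\norm{\bg_t}^2 \le G^2+\sigma^2$.

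Assembling, the bound takes the shape $\E[\dregret_T(\bu)] \le \frac{C_1\norm{\bu}^2}{\eta(1-\beta)} + \frac{\eta(G^2+\sigma^2)}{2(1-\beta)} + \frac{\mu}{2}\norm{\bu}^2$; then plugging in the stated tuning $\eta = \frac{2}{G+\sigma}\norm{\bu}\sqrt{1-\beta}$ balances the first two terms. Using $\sqrt{G^2+\sigma^2} \le G+\sigma$, each of those two becomes $\lesssim \frac{\norm{\bu}(G+\sigma)}{\sqrt{1-\beta}}$, and tracking constants carefully (with the $\frac{1}{\beta}$ coming from the $\frac{\beta}{1+\eta\mu}$ contraction being slightly weaker than $\beta$) should yield exactly $\frac{2\norm{\bu}(G+\sigma)}{\beta\sqrt{1-\beta}} + \frac{\mu}{2}\norm{\bu}^2$ as claimed. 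The main obstacle I anticipate is \emph{not} any single estimate but getting the discount-weighted telescoping to close cleanly — in particular making sure the $\frac{1}{1+\eta\mu}$ factor is exploited correctly so the $\mu$-regularization both (a) does not break the telescoping and (b) contributes only the benign additive $\frac{\mu}{2}\norm{\bu}^2$ rather than something scaling with $T$ or $(1-\beta)^{-1}$; a cleaner route may be to first prove the $\mu = 0$ case and then note that composite/proximal updates only help (the extra $\frac{\eta\mu}{2}\norm{\update_{t+1}-\bu}^2$ term is favorable), picking up just the $\frac{\mu}{2}\norm{\bu}^2$ from comparing $\sum\beta^{T-t}\frac{\mu}{2}\norm{\update_t}^2$ against the comparator. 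I would also double-check the edge indexing at $t=1$ and $t=T$ against the precise definition of $\dregret_T$.
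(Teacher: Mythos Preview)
Your direct-telescoping approach is different from the paper's, which takes a cleaner route: rather than carrying discount weights through a mirror-descent analysis, the paper \emph{reparameterizes} to reduce the discounted problem to an undiscounted one. Concretely, it sets $\bv_t=\beta^{-t}\bg_t$, $\mu_t=\beta^{-t}\mu$, $\eta_t=\beta^t\eta$, observes that composite OMD with these time-varying parameters is exactly \ref{exp:domd-og}, and that the standard regret with losses $\ell_t^{\bv}(\cdot)=\inp{\bv_t}{\cdot}+\tfrac{\mu_t}{2}\norm{\cdot}^2$ equals $\beta^{-T}\dregret_T(\bu)$. Then it simply invokes a known composite-OMD bound (Theorem~4.1 of \citet{zhang2024random}), multiplies by $\beta^T$, and optimizes $\eta$. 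All the ``discount bookkeeping'' you flag as the main obstacle is thereby absorbed into the choice of $\eta_t,\mu_t$ and handled by the standard result; the $1/\beta$ in the final bound comes from $\eta_{T+1}=\beta^{T+1}\eta$.

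Your approach can be made to work, but two points need attention. First, your prox characterization is off: the first-order condition you wrote yields $\update_{t+1}=\frac{1}{1+\eta\mu}(\beta\update_t-\eta\bg_t)$, not $\frac{\beta}{1+\eta\mu}(\update_t-\eta\bg_t)$; the correct prox has an extra $\beta$ on the linear term (or, equivalently, an extra $\tfrac{1-\beta}{2\beta}\norm{\cdot}^2$ regularizer with center $\update_t$ rather than $\beta\update_t$). Second, and more important for the rate: your ``either $\beta^{T-1}$ or $\sum_t\beta^{T-t}\asymp(1-\beta)^{-1}$'' alternative must resolve to the former. If the $\norm{\bu}^2/\eta$ boundary term picked up a $(1-\beta)^{-1}$ factor as in your displayed ``shape'', tuning $\eta$ would give $O((1-\beta)^{-1})$ rather than the required $O((1-\beta)^{-1/2})$. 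The identity that makes the telescoping close without this loss is $\norm{\bu-\beta\update_t}^2 = \beta\norm{\bu-\update_t}^2 + (1-\beta)\norm{\bu}^2 - \beta(1-\beta)\norm{\update_t}^2$, so the per-step correction is $(1-\beta)\norm{\bu}^2$ and the discounted sum $\sum_t\beta^{T-t}(1-\beta)\norm{\bu}^2\le\norm{\bu}^2$ stays bounded.
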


\begin{proof}
See \autoref{pf:lem:omd-regret}.
\end{proof}

We expect that alternative online learning frameworks, such as follow-the-regularized-leader, can achieve the discounted regret bound similar to 
\eqref{exp:discounted_regret}.
However, a comprehensive exploration of discounted online learners lies outside the scope of this work. 

Using the bound \eqref{exp:discounted_regret}, we can derive an user-friendly nonconvex guarantee as follows. In the following two sections, only the regret bound \eqref{exp:discounted_regret} is important; the specifics of the discounted OMD algorithm are irrelevant. Any algorithm achieving a similar (or better) regret bound would provide the same (or improved) results.

\begin{thmbox}[{\bf Generic nonconvex guarantees}] \label{thm:opt}
Consider the iterates generated according to \autoref{alg:general}, where the online learner $\mathcal{A}$ achieves the discounted regret given by \eqref{exp:discounted_regret}. Let $\eps > 0$ be such that $\eps \leq \frac{7}{2}(G + \sigma)$ and let $T \geq 49 (G + \sigma)^2 \eps^{-2}$. 
Then, there exists a choice of parameters $\beta=\betas, D=\Ds, \mu=\mus$  such that with $\bby_t$ and $\tau$ as specified by \autoref{def:ema} and \autoref{def:random}, the following holds:
\begin{align*}
\E_{\tn }  \regnorm{\lambda}{\nabla F(\bby_{\tn})} &\leq  3\eps + \frac{ 1}{\Ds T}  \cdot \E \sum_{t=1}^{T} (F(\bx_{t}) - F(\bw_{t})) ,
\end{align*}
 The choice of parameters is summarized as follows:
\begin{itemize}[nosep]
\item The discounting factor is $\betas = 1 - (\frac{\eps}{7(G + \sigma)})^2$.
\item The comparator norm is $\Ds =    \frac{1}{4} \lambda^{-1/2} \eps^{1/2} \left(1+\frac{49(G + \sigma)^2 }{\eps^2} \sqrt{\cx} \right)^{-1}   $.
\item The regularization strength is $\mus = 2 \lambda^{1/2}\eps^{1/2}  \left(1 + \frac{49(G + \sigma)^2 }{\eps^2}\sqrt{\cx}  \right)$.
\end{itemize}
\end{thmbox}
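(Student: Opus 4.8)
The plan is to derive \autoref{thm:opt} by plugging the discounted regret bound \eqref{exp:discounted_regret} into the generic conversion \autoref{lem:master} and then choosing the three free parameters $\beta, D, \mu$ to balance all error terms at the level $\eps$. First I would substitute the comparators $\bu_t$ from \autoref{lem:master}, which all have norm exactly $\norm{\bu_t} = D$, into \eqref{exp:discounted_regret}. This gives, for each $t$, $\E[\dregret_t(\bu_t)] \le \frac{2D(G+\sigma)}{\beta\sqrt{1-\beta}} + \frac{\mu}{2}D^2$, and the same bound for $\dregret_T(\bu_T)$. Inserting these into the right-hand side of \autoref{lem:master}, the combination $\beta\E[\dregret_T(\bu_T)] + (1-\beta)\sum_{t=1}^T \E[\dregret_t(\bu_t)]$ is bounded by $\beta\bigl(\frac{2D(G+\sigma)}{\beta\sqrt{1-\beta}} + \frac{\mu D^2}{2}\bigr) + T(1-\beta)\bigl(\frac{2D(G+\sigma)}{\beta\sqrt{1-\beta}} + \frac{\mu D^2}{2}\bigr)$. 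Dividing by $DT$ as in the lemma, the leading contributions become roughly $\frac{2(G+\sigma)}{T\sqrt{1-\beta}}$, $\frac{2(G+\sigma)\sqrt{1-\beta}}{\beta}$, $\frac{\mu D}{2T}$ and $\frac{(1-\beta)\mu D}{2}$, to which we add the remaining terms $\frac{\mu D}{2} + \frac{\sigma}{T\sqrt{1-\beta}} + \sigma\sqrt{1-\beta}$ from \autoref{lem:master} itself, while the $F(\bx_t)-F(\bw_t)$ term is carried through untouched into the conclusion.

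Next I would make the parameter choices explicit and verify each resulting term is at most a constant multiple of $\eps/\,(\text{number of terms})$. Setting $\betas = 1 - \bigl(\frac{\eps}{7(G+\sigma)}\bigr)^2$ makes $\sqrt{1-\betas} = \frac{\eps}{7(G+\sigma)}$, so that $\sigma\sqrt{1-\beta} \le (G+\sigma)\sqrt{1-\beta} = \eps/7$ and the $(G+\sigma)\sqrt{1-\beta}/\beta$ term is $\le \eps/7 \cdot \beta^{-1}$, which is controlled since $\beta \ge 1/2$ (guaranteed because $\eps \le \frac72(G+\sigma)$ forces $1-\beta \le 1/4$). The hypothesis $T \ge 49(G+\sigma)^2\eps^{-2}$ is exactly what makes $\frac{1}{T\sqrt{1-\beta}} = \frac{7(G+\sigma)}{T\eps} \le \frac{\eps}{7(G+\sigma)} \le$ a small constant, killing the $\frac{\sigma}{T\sqrt{1-\beta}}$ and $\frac{(G+\sigma)}{T\sqrt{1-\beta}}$ terms. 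The term $\frac{\mu D}{2}$ is $O(\eps)$ precisely when $\mu D = O(\eps)$; with $\Ds = \frac14\lambda^{-1/2}\eps^{1/2}(1+\frac{49(G+\sigma)^2}{\eps^2}\sqrt{\cx})^{-1}$ and $\mus = 2\lambda^{1/2}\eps^{1/2}(1+\frac{49(G+\sigma)^2}{\eps^2}\sqrt{\cx})$, the factors involving $\cx$ cancel and $\mus\Ds = \tfrac12\eps$, so $\frac{\mus\Ds}{2} = \eps/4$. The remaining $\frac{2(G+\sigma)}{DT}$-type term (coming from dividing $\frac{2D(G+\sigma)}{\sqrt{1-\beta}}$ by $DT$) again uses $T\sqrt{1-\beta} \ge 49(G+\sigma)^2\eps^{-2}\cdot\frac{\eps}{7(G+\sigma)} = 7(G+\sigma)\eps^{-1}$, giving $\le \frac{2\eps}{7} \cdot$ const.

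I also need to check the feasibility constraint $\mu \ge 8\lambda D(1 + \cx(1-\beta)^{-2})$ required by \autoref{lem:master}. With the stated choices, $(1-\betas)^{-2} = \bigl(\frac{7(G+\sigma)}{\eps}\bigr)^4$, so $\cx(1-\beta)^{-2} = \frac{49^2(G+\sigma)^4}{\eps^4}\cx$; I would bound $1 + \cx(1-\beta)^{-2} \le \bigl(1 + \frac{49(G+\sigma)^2}{\eps^2}\sqrt{\cx}\bigr)^2$ (valid since $(1+ab)^2 \ge 1 + a^2b^2$ for $a,b\ge 0$ with $a = \frac{49(G+\sigma)^2}{\eps^2}$, $b = \sqrt{\cx}$), whence $8\lambda\Ds(1+\cx(1-\beta)^{-2}) \le 8\lambda\cdot\frac14\lambda^{-1/2}\eps^{1/2}(1+\cdots)^{-1}\cdot(1+\cdots)^2 = 2\lambda^{1/2}\eps^{1/2}(1+\cdots) = \mus$, so the constraint holds with equality in the worst case. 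Finally I collect all the $O(\eps)$ terms, each of which I have arranged to be at most $\eps$ up to the explicit numeric constants (they sum to $3\eps$ with the chosen constants $7$ and $49$), and keep the $\frac{1}{\Ds T}\E\sum_t(F(\bx_t)-F(\bw_t))$ term verbatim in the bound. The main obstacle is purely bookkeeping: tracking the several $(G+\sigma)$, $\sqrt{1-\beta}$, $\mu$, $D$ factors through the division by $DT$ and confirming that the specific constants $7$ and $49$ were chosen so that every individual term lands at or below $\eps$ and the feasibility inequality for $\mu$ is satisfied — there is no conceptual difficulty beyond careful algebra, and no step should require a genuinely new idea.
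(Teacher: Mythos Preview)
Your proposal is correct and follows essentially the same route as the paper: plug the discounted-regret bound \eqref{exp:discounted_regret} (with $\|\bu_t\|=D$) into \autoref{lem:master}, verify the constraint $\mu \ge 8\lambda D(1+\cx(1-\beta)^{-2})$ via $(1+ab)^2 \ge 1+a^2b^2$, and then choose $\beta,D,\mu$ to make each residual term $O(\eps)$. The only cosmetic difference is that the paper first collapses the regret contribution using $(2\alpha T+1)\le 3\alpha T$ (where $\alpha=1-\beta$ and $\alpha T\ge 1$) and then groups everything into three labeled terms $\textbf{(A)}=\tfrac{\sigma}{T\sqrt{\alpha}}$, $\textbf{(B)}=7(G+\sigma)\sqrt{\alpha}$, $\textbf{(C)}=16\lambda D^2(1+\sqrt{\cx}\,\alpha^{-1})^2$, each of which is shown to equal (or be at most) $\eps$ exactly under the stated choices; your term-by-term accounting arrives at the same bound but with slightly sharper constants along the way.
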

\begin{proof}
See \autoref{pf:thm:opt}.
\end{proof}

Substituting the expression for $\Ds$ back into \autoref{thm:opt}, the upper bound becomes:
\begin{align}
    \E_{\tn }  \regnorm{\lambda}{\nabla F(\bby_{\tn})} \leq    3\eps +  \frac{4 \lambda^{1/2} \eps^{-1/2}}{T} \cdot  \left(1+\frac{49(G + \sigma)^2 }{\eps^2} \sqrt{\cx} \right)  \E \sum_{t=1}^{T} (F(\bx_{t}) - F(\bw_{t})) .
\end{align}
Hence, the main takeaway is that the nonconvex guarantees  depend on the magnitude of the following term:
\begin{align} \label{exp:main}
\left(1 + \frac{49(G + \sigma)^2 }{\eps^2} \sqrt{\cx} \right) \cdot \mathbb{E} \sum_{t=1}^{T} \left( F(\bx_t) - F(\bw_t) \right).
\end{align}
Thus, it is essential to select the iterates $\bx_t$ in a way that minimizes \eqref{exp:main}. As a warm-up, in the next section, we examine two simple special cases that lead to optimal complexity.

\section{Warm-up: two simple ways to achieve optimal nonconvex guarantees}
\label{sec:warmup}

In this section, as a warm-up, we present two simple ways to achieve the optimal nonconvex guarantee using our general conversion scheme, \autoref{alg:general}. Since \autoref{alg:general} maintains two sequences of iterates, $\bx_t$ and $\bw_t$, perhaps the two simplest options for $\bx_t$ are as follows:
\begin{enumerate}
    \item[I.] Set $\bx_t = \bw_{t-1}$ for all $t \in [T]$.
    \item[II.] Set $\bx_t = \bx_{t-1}$ for all $t \in [T]$.
\end{enumerate}

As we will demonstrate shortly, the first option recovers the previous conversion, while the second option leads to a novel conversion scheme, showcasing the versatility of our general framework. We will build on these warm-up cases to analyze schedule-free SGD in \autoref{sec:SF}.
Throughout this section,  $\mathcal{A}$ can be any online learner that achieves the discounted regret bound \eqref{exp:discounted_regret}.

\subsection{Option I leads to previous conversion}
 
\label{sec:momentum}

We begin with the first option, as described in \autoref{alg:momentum}.


\begin{algorithm}[H]
\caption{Option I} 
\label{alg:momentum}
\begin{algorithmic}[1]
\STATE In \autoref{alg:general}, set $\bx_t = \bw_{t-1}$ for all $t \in [T]$.  
\end{algorithmic}
\end{algorithm}
\autoref{alg:momentum} recovers previous approaches from \citep{cutkosky2023optimal}, \citep{zhang2024random}, and \citep{ahn2024adam}. 
In particular, since 
 \autoref{alg:momentum} leads to the update $\bw_{t} - \bw_{t-1} = \update_{t}$, it provides a nice interpretation of selecting the increments $\bw_{t} - \bw_{t-1}$ based on the online learner's output,  as highlighted by \citep{ahn2024understanding}.

The main advantage of \autoref{alg:momentum} is that the cumulative sum of loss decrements can be kept small due to a telescoping sum (recall that $\subopt \coloneqq F(\bx_0) - \inf_{\bx} F(\bx)$):
\begin{align}   \label{ineq:sumF}
\sum_{t=1}^{T} \left(F(\bx_t) - F(\bw_t)\right) &= \sum_{t=1}^{T} \left(F(\bw_{t-1}) - F(\bw_t)\right) = F(\bw_0) - F(\bw_T) \leq \subopt.
\end{align}

Moreover, from \autoref{alg:momentum}, we have $\norm{\bx_t - \bx_{t-1}} = \norm{\update_{t-1}}$ for $t \geq 2$, and for $t = 1$, $\|\bx_t - \bx_{t-1}\| = \|\bw_0 - \bx_0\| = 0$. Therefore, we have the following inequality:
\begin{align} \label{ineq:cx}
\mathbb{E} \sum_{t=1}^{T} \|\bx_t - \bx_{t-1}\|^2 &= \mathbb{E} \sum_{t=2}^{T} \|\update_{t-1}\|^2 \leq   \mathbb{E} \sum_{t=1}^{T} \|\update_t\|^2,
\end{align}
which shows that the iterate stability factor is at most 1, i.e., $\cx \leq 1$.

Combining these two calculations, \autoref{thm:opt} leads to the following nonconvex optimization guarantee.

\begin{corollary} \label{cor:momentum}
Consider the iterates of \autoref{alg:momentum}.
Under the setting of \autoref{thm:opt}, it holds that $\E_{\tn} \regnorm{\lambda}{\nabla F(\bby_{\tn})} \leq 4\eps$,
provided that 
\begin{align} \label{rate:optimal}
T \geq \max \left\{ 392(G+\sigma)^2 \subopt\lambda^{1/2} \eps^{-7/2} , ~~49 (G + \sigma)^2 \eps^{-2} \right\}.
\end{align}
\end{corollary}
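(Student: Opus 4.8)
\textbf{Proof plan for \autoref{cor:momentum}.}
The plan is to plug the two structural facts about \autoref{alg:momentum} directly into the generic bound of \autoref{thm:opt} and then unwind the resulting inequality to read off the stated sample complexity. First I would invoke \autoref{thm:opt}: since $\mathcal{A}$ achieves the discounted regret bound \eqref{exp:discounted_regret} and the hypotheses $\eps \le \tfrac{7}{2}(G+\sigma)$ and $T \ge 49(G+\sigma)^2\eps^{-2}$ are in force, there is a choice of $\beta=\betas$, $D=\Ds$, $\mu=\mus$ for which
\[
\E_{\tn} \regnorm{\lambda}{\nabla F(\bby_{\tn})} \;\le\; 3\eps + \frac{1}{\Ds T}\,\E\sum_{t=1}^{T}\bigl(F(\bx_t)-F(\bw_t)\bigr).
\]
Next I would substitute the two calculations specific to Option~I. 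From the telescoping identity \eqref{ineq:sumF} the cumulative decrement is bounded deterministically by $\subopt$, so the second term is at most $\subopt/(\Ds T)$. From \eqref{ineq:cx} the iterate stability factor satisfies $\cx \le 1$, which lets me lower-bound $\Ds$ using its closed form from \autoref{thm:opt}: since $\sqrt{\cx}\le 1$,
\[
\Ds \;=\; \frac14 \lambda^{-1/2}\eps^{1/2}\Bigl(1+\tfrac{49(G+\sigma)^2}{\eps^2}\sqrt{\cx}\Bigr)^{-1} \;\ge\; \frac14 \lambda^{-1/2}\eps^{1/2}\Bigl(1+\tfrac{49(G+\sigma)^2}{\eps^2}\Bigr)^{-1} \;\ge\; \frac{\lambda^{-1/2}\eps^{5/2}}{8\cdot 49(G+\sigma)^2},
\]
where the last step uses $49(G+\sigma)^2/\eps^2 \ge 1$ (equivalently $\eps \le 7(G+\sigma)$, which follows from the standing hypothesis $\eps \le \tfrac72(G+\sigma)$) to absorb the ``$1+$''. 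Hence $\frac{1}{\Ds T}\subopt \le \frac{8\cdot 49 (G+\sigma)^2 \subopt \lambda^{1/2}\eps^{-5/2}}{T} = \frac{392(G+\sigma)^2\subopt\lambda^{1/2}\eps^{-5/2}}{T}$.

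Finally I would impose the sample-size condition: requiring $T \ge 392(G+\sigma)^2\subopt\lambda^{1/2}\eps^{-7/2}$ makes this last term at most $\eps$, giving $\E_{\tn}\regnorm{\lambda}{\nabla F(\bby_{\tn})} \le 3\eps + \eps = 4\eps$. The second branch of the max in \eqref{rate:optimal}, $T\ge 49(G+\sigma)^2\eps^{-2}$, is exactly the hypothesis of \autoref{thm:opt} that must be carried along, so taking the maximum of the two lower bounds on $T$ yields the claim.

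I do not anticipate a genuine obstacle here — this is a bookkeeping argument — but the one place that needs care is the chain of inequalities lower-bounding $\Ds$: one must check that the constants line up (in particular that the ``$1+$'' in the denominator can be folded into the dominant term using $\eps\le 7(G+\sigma)$, losing only a factor $2$, which is what turns $49$ into $392 = 8\cdot 49$) and that the resulting exponent on $\eps$ is $-5/2$ before the $T$-condition, so that dividing by $T\gtrsim \eps^{-7/2}$ produces the extra $\eps^{+1}$ needed to reach $\eps$. It is also worth noting that \eqref{ineq:sumF} and \eqref{ineq:cx} hold deterministically / in expectation respectively exactly as stated in the excerpt, so no additional probabilistic argument is needed beyond what \autoref{thm:opt} already provides.
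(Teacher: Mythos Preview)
Your proposal is correct and follows essentially the same approach as the paper: invoke \autoref{thm:opt}, use the telescoping bound \eqref{ineq:sumF} to control $\sum_t(F(\bx_t)-F(\bw_t))\le\subopt$, use \eqref{ineq:cx} to get $\cx\le 1$, and then absorb the ``$1+$'' in the expression for $\Ds$ (equivalently, in $1/\Ds$) into the dominant term via $49(G+\sigma)^2/\eps^2\ge 1$, picking up exactly the factor $2$ that turns $4\cdot 49$ into $392$. The constants and exponents you track match the paper's computation line for line.
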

\begin{proof}
   With \autoref{alg:momentum}, the above inequalities, \eqref{ineq:sumF} and \eqref{ineq:cx}, show that $\sum_{t=1}^{T} \left(F(\bx_t) - F(\bw_t)\right) \leq \subopt$ and $\cx \leq 1$. 
   Therefore, by applying \autoref{thm:opt}, we have 
\begin{align*}
\E_{\tau} \left[\regnorm{\lambda}{\nabla F(\bby_{\tau})}\right] &= 3\eps +  \frac{4 \lambda^{1/2} \eps^{-1/2}\left(1+\frac{49(G + \sigma)^2 }{\eps^2}   \right)}{T} \leq 3\eps +  \frac{4 \lambda^{1/2} \eps^{-1/2}\left( 2\cdot\frac{ 49(G + \sigma)^2 }{\eps^2}   \right)}{T} ,
\end{align*}
provided that $T \geq 49 (G + \sigma)^2 \eps^{-2}$. From this, the iteration complexity bound \eqref{rate:optimal} follows.
\end{proof}

We note that the complexity bound in \autoref{cor:momentum} is optimal, in light of the lower bound results of \citep{zhang2024random}.
In fact, our use of the random index $\tau$ leads to a slightly better guarantee; the upper bound in \citep{zhang2024random} includes the second argument of the maximum as $\OO{(G + \sigma)^3 \eps^{-3}}$, whereas ours is  $\OO{(G + \sigma)^2 \eps^{-2}}$.

We emphasize here that \autoref{alg:momentum} recovers commonly used momentum-based optimizers as special cases. \citet{zhang2024random} show that setting $\mathcal{A}$ as the OMD in \autoref{lem:omd-regret} corresponds to SGD with momentum, while \citet{ahn2024adam} demonstrate that choosing $\mathcal{A}$ as a discounted version of FTRL results in the Adam optimizer, up to minor modifications.

\subsection{Option II leads to a novel optimal conversion} 
\label{sec:anchor}

We now move to the second option. Since $\bx_t = \bx_{t-1}$ for all $t$, it follows that the $\bx$-iterates remain fixed, i.e., $\bx_t \equiv \bx_0$ for all $t \in [T]$.

\begin{algorithm}[H]
\caption{Option II  }
\label{alg:anchor}
\begin{algorithmic}[1]
\STATE In \autoref{alg:general}, set $\bx_t = \bx_0$ for all $t \in [T]$.
\end{algorithmic}
\end{algorithm}

The main advantage of \autoref{alg:anchor} is that the iterate stability factor is  $0$, \emph{i.e.}, $\cx = 0$. 
Hence, \autoref{thm:opt} implies the following result.

\begin{corollary} \label{cor:anchor}
Consider the iterates of \autoref{alg:anchor}.
Under the setting of \autoref{thm:opt}, the following holds:
\begin{align*}
\E_{\tn }  \regnorm{\lambda}{\nabla F(\bby_{\tn})}  \leq 3\eps +  \frac{4\lambda^{1/2} \eps^{-1/2}}{T}  \mathbb{E} \sum_{t=1}^{T} \left( F(\bx_0) - F(\bw_t) \right)   ,   
\end{align*}
provided that $T \geq 49 (G + \sigma)^2 \eps^{-2}$.
\end{corollary}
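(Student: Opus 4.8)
The plan is to derive Corollary~\ref{cor:anchor} as an essentially immediate specialization of Theorem~\ref{thm:opt}. The only work is to verify that the hypotheses of Theorem~\ref{thm:opt} are met by the iterates of Algorithm~\ref{alg:anchor}, and then to read off what the conclusion becomes under the additional structure $\bx_t \equiv \bx_0$.

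\textbf{Step 1: Identify the stability factor.} Since Algorithm~\ref{alg:anchor} sets $\bx_t = \bx_0$ for every $t \in [T]$, we have $\bx_t - \bx_{t-1} = \bm{0}$ for all $t$, so $\mathbb{E}\sum_{t=1}^T \|\bx_t - \bx_{t-1}\|^2 = 0 \le \cx \cdot \mathbb{E}\sum_{t=1}^T\|\update_t\|^2$ holds with $\cx = 0$; by Definition~\ref{def:stability} the iterate stability factor is exactly $\cx = 0$. This is the one genuinely algorithm-specific observation, and it is trivial here because the $\bx$-sequence is constant.

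\textbf{Step 2: Invoke Theorem~\ref{thm:opt}.} The online learner $\mathcal{A}$ is assumed (as stated at the top of Section~\ref{sec:warmup}) to achieve the discounted regret bound \eqref{exp:discounted_regret}, and the standing assumptions on $\eps$ and $T$ (namely $\eps \le \tfrac{7}{2}(G+\sigma)$ and $T \ge 49(G+\sigma)^2\eps^{-2}$) are carried over from ``the setting of Theorem~\ref{thm:opt}.'' Hence Theorem~\ref{thm:opt} applies directly, giving
\begin{align*}
\E_{\tn} \regnorm{\lambda}{\nabla F(\bby_{\tn})} \leq 3\eps + \frac{1}{\Ds T}\,\E\sum_{t=1}^T (F(\bx_t) - F(\bw_t)),
\end{align*}
with $\Ds$, $\betas$, $\mus$ as listed there.

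\textbf{Step 3: Simplify using $\cx = 0$.} Plugging $\cx = 0$ into the stated value of $\Ds$ collapses the correction factor $\left(1 + \tfrac{49(G+\sigma)^2}{\eps^2}\sqrt{\cx}\right)^{-1}$ to $1$, so $\Ds = \tfrac{1}{4}\lambda^{-1/2}\eps^{1/2}$ and therefore $\tfrac{1}{\Ds T} = \tfrac{4\lambda^{1/2}\eps^{-1/2}}{T}$. Substituting $\bx_t = \bx_0$ in the residual sum then yields exactly
\begin{align*}
\E_{\tn} \regnorm{\lambda}{\nabla F(\bby_{\tn})} \leq 3\eps + \frac{4\lambda^{1/2}\eps^{-1/2}}{T}\,\mathbb{E}\sum_{t=1}^T\bigl(F(\bx_0) - F(\bw_t)\bigr),
\end{align*}
which is the claimed bound, valid under $T \ge 49(G+\sigma)^2\eps^{-2}$.

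There is no real obstacle here: unlike Corollary~\ref{cor:momentum}, there is no telescoping argument to control the residual sum (indeed $\sum_t (F(\bx_0) - F(\bw_t))$ need not be small in general), so the corollary is deliberately left in this unsimplified form — the point of Section~\ref{sec:anchor} is merely to record that $\cx = 0$ is achievable, deferring the control of the $F(\bx_t) - F(\bw_t)$ term to the schedule-free construction of Section~\ref{sec:SF}. The only thing to be careful about is bookkeeping: confirming that ``the setting of Theorem~\ref{thm:opt}'' supplies all needed hypotheses on $\mathcal{A}$, $\eps$, and $T$, and that the parameter choices $\betas, \mus$ remain admissible when $\cx = 0$ (in particular $\mus = 2\lambda^{1/2}\eps^{1/2}$ still satisfies the constraint $\mu \ge 8\lambda D(1 + \cx(1-\beta)^{-2}) = 8\lambda\Ds$ from Lemma~\ref{lem:master}, which it does with equality-type slack since $8\lambda\Ds = 2\lambda^{1/2}\eps^{1/2} = \mus$).
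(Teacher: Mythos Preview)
Your proof is correct and mirrors the paper's own argument: observe $\cx=0$ because the $\bx$-sequence is constant, apply \autoref{thm:opt}, and read off $\Ds=\tfrac14\lambda^{-1/2}\eps^{1/2}$. One small correction to your closing commentary: the residual sum $\sum_t(F(\bx_0)-F(\bw_t))$ is not controlled later by the schedule-free construction of \autoref{sec:SF} but rather by the anchoring scheme (\autoref{alg:anchor_complete}, \autoref{cor:anchor_complete}) within the same section, via averaging over epochs and telescoping in the anchor points.
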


One way to utilize \autoref{cor:anchor} to achieve the optimal nonconvex guarantee is through the following anchoring scheme, which leverages multiple rounds of the algorithm.

\begin{algorithm}[H]
\caption{Anchoring scheme}
\label{alg:anchor_complete}
\begin{algorithmic}[1]
\STATE \textbf{Input:} Initial iterate $\bx_0$,  and integers $N, T \in \mathbb{N}$.
\STATE Set the initial anchor point $\anchor_1 \coloneqq \bx_0$.
\FOR{$n = 1, 2, \ldots, N$}
    \STATE Starting from $\anchor_n$, run \autoref{alg:anchor}  for $T$ iterations to generate the iterates $\{\bx^{(n)}_t, \bw^{(n)}_t, \by^{(n)}_t\}_{t=1}^T$. (By the choice in \autoref{alg:anchor}, we have $\bx^{(n)}_t \equiv \anchor_n$ for all $t\in[T]$.)
    \STATE Sample the next anchor point $\anchor_{n+1}$ uniformly at random from $\{\bw^{(n)}_t\}_{t=1}^T$.
\ENDFOR
\end{algorithmic}
\end{algorithm}

\begin{corollary} \label{cor:anchor_complete}
Consider the iterates of \autoref{alg:anchor_complete}.
Under the setting of \autoref{thm:opt}, it holds that 
\begin{align}
\E_{n \sim \Unif([N])}\E_{\tn} \regnorm{\lambda}{\nabla F(\bby^{(n)}_{\tn})} \leq 4\eps,
\end{align}
provided that $N \geq 4\subopt \lambda^{1/2} \eps^{-3/2}$ and $T \geq 49 (G + \sigma)^2 \eps^{-2}$.
\end{corollary}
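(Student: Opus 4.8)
The plan is to apply \autoref{cor:anchor} once per round $n$ and then telescope the loss-decrement terms across the anchor sequence $\anchor_1,\dots,\anchor_{N+1}$. Fix a round $n$ and condition on the anchor $\anchor_n$, which is determined by the randomness of rounds $1,\dots,n-1$. Within round $n$, \autoref{alg:anchor_complete} runs \autoref{alg:anchor} starting from $\anchor_n$, and all of that round's internal randomness — the uniform variables $s_t$, the stochastic gradients, the random index $\tn$, and the resampling of $\anchor_{n+1}$ — is independent of the past given $\anchor_n$. Hence \autoref{cor:anchor}, invoked with the role of $\bx_0$ played by $\anchor_n$ and using the hypothesis $T \geq 49(G+\sigma)^2\eps^{-2}$, gives in conditional expectation
\[
\E\!\left[\regnorm{\lambda}{\nabla F(\bby^{(n)}_{\tn})} \mid \anchor_n\right] \le 3\eps + \frac{4\lambda^{1/2}\eps^{-1/2}}{T}\,\E\!\left[\sum_{t=1}^{T}\bigl(F(\anchor_n) - F(\bw^{(n)}_t)\bigr) \;\middle|\; \anchor_n\right].
\]

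Next I would use the rule for choosing the next anchor: since $\anchor_{n+1}$ is drawn uniformly from $\{\bw^{(n)}_t\}_{t=1}^{T}$, we have $\E[F(\anchor_{n+1})\mid\anchor_n] = \tfrac1T\,\E[\sum_{t=1}^{T} F(\bw^{(n)}_t)\mid\anchor_n]$, so the bracketed term above equals $T\bigl(F(\anchor_n) - \E[F(\anchor_{n+1})\mid\anchor_n]\bigr)$. Taking total expectations, then averaging over $n\in[N]$, the right-hand side telescopes:
\[
\frac1N\sum_{n=1}^{N}\E\bigl[F(\anchor_n) - F(\anchor_{n+1})\bigr] = \frac{1}{N}\,\E\bigl[F(\anchor_1) - F(\anchor_{N+1})\bigr] \le \frac{\subopt}{N},
\]
where the final inequality uses $\anchor_1 = \bx_0$ and $F(\anchor_{N+1}) \ge \inf_{\bx}F(\bx)$.

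Combining the two displays yields
\[
\E_{n\sim\Unif([N])}\E_{\tn}\regnorm{\lambda}{\nabla F(\bby^{(n)}_{\tn})} \le 3\eps + \frac{4\lambda^{1/2}\eps^{-1/2}\,\subopt}{N},
\]
and the hypothesis $N \ge 4\subopt\lambda^{1/2}\eps^{-3/2}$ forces the second term to be at most $\eps$, giving the claimed bound $4\eps$. I do not expect a genuine obstacle here: the corollary is essentially a bookkeeping combination of \autoref{cor:anchor} with a telescoping sum. The only point requiring care is the very first step — \autoref{cor:anchor} is stated for a deterministic starting point, so one must invoke it in conditional expectation given the random anchor $\anchor_n$ and then apply the tower property — but this is justified by the independence of each round's fresh randomness from the history.
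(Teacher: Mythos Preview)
Your proposal is correct and follows essentially the same route as the paper: apply \autoref{cor:anchor} to each round, use the uniform sampling of $\anchor_{n+1}$ to rewrite the loss-decrement sum as $T\bigl(F(\anchor_n)-\E[F(\anchor_{n+1})]\bigr)$, telescope over $n$, and bound by $\subopt/N$. Your explicit conditioning on $\anchor_n$ and appeal to the tower property is a welcome bit of extra care that the paper leaves implicit, but the argument is otherwise identical.
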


\begin{proof}
Applying \autoref{cor:anchor_complete} to each epoch $n$, we obtain:
\begin{align}
    \E_{\tn }  \regnorm{\lambda}{\nabla F(\bby_{\tn}^{(n)})}  
    &\leq 3\eps +  \frac{4\lambda^{1/2} \eps^{-1/2}}{T}  \mathbb{E} \sum_{t=1}^{T} \left( F(\anchor_n) - F(\bw^{(n)}_t) \right) \\
    &  =  3\eps +   4\lambda^{1/2} \eps^{-1/2}   \mathbb{E}   \left( F(\anchor_n) - F(\anchor_{n+1})  \right),
\end{align}
where the last equality follows because $\anchor_{n+1}$ is chosen uniformly at random from $\{\bw^{(n)}_t\}_{t=1}^T$.

Summing over all epochs, since $\sum_{n=1}^N ( F(\anchor_n) - F(\anchor_{n+1})) =  F(\anchor_1) - F(\anchor_{N+1})$, we have:
\begin{align}
   \E_{n \sim \Unif([N])}\E_{\tn} \regnorm{\lambda}{\nabla F(\bby^{(n)}_{\tn})}   
   &\leq   3\eps +   \frac{4\lambda^{1/2} \eps^{-1/2}}{N}  \mathbb{E}   \left( F(\anchor_1) - F(\anchor_{N+1})  \right) \\
   &\leq 3\eps +   \frac{4\lambda^{1/2} \eps^{-1/2}\subopt}{N}.
\end{align}
Thus, setting $N \geq 4\subopt \lambda^{1/2} \eps^{-3/2}$ ensures that the right-hand side is at most $4\eps$.
\end{proof}

\autoref{cor:anchor} demonstrates that \autoref{alg:anchor_complete} provides an alternative approach for achieving the optimal nonconvex guarantee. Interestingly, this method bears a conceptual resemblance to the classic online-to-convex conversion~\citep{cesa2004generalization}. In particular, the traditional conversion runs an online learner for $T$ iterations and then selects an iterate uniformly at random (or averages iterates, applying Jensen's inequality). This approach closely parallels the procedure of a single epoch of \autoref{alg:anchor}. This can also be viewed as analogous to non-convex optimization approaches based upon repeatedly solving convex subproblems created by appropriate regularization, as discussed by \cite{chen2024open}.

Thus far, we explored two simple methods for selecting $\bx_t$, both of which can lead to the optimal nonconvex guarantee.  In the next section, we will consider yet another conversion approach that leads to the optimal guarantee.

\section{Schedule-free SGD is  effective for nonconvex optimization}
\label{sec:SF}

In this section, we build on \autoref{sec:warmup} and consider another special case of \autoref{alg:general} that achieves the optimal nonconvex guarantee. Specifically, we fix the online learner $\mathcal{A}$ to be the discounted OMD from \autoref{lem:omd-regret}, which we refer to as \ref{exp:domd}. This is in contrast to the previous sections in which the specifics of the online learner were not important. Recall from \autoref{lem:omd-regret} that the update for discounted OMD is:
\begin{align}  \tag{$\beta$-\textsf{OMD}}\label{exp:domd}
 \boxed{\update_{t+1} = \bt \left( \update_{t} - \eta \bg_t \right) \quad \text{where} \quad \bt \coloneqq \frac{\beta}{1 + \eta \mu}.}
\end{align}
Using the parameter settings $D_\star, \beta_\star \mu_\star$, specified in \autoref{thm:opt}, the optimal setting for $\eta$ according to \autoref{lem:omd-regret} is $\eta_\star = \frac{2}{G + \sigma} D_\star \sqrt{1-\beta_\star}$.

\subsection{Yet another approach to achieve the optimal nonconvex guarantee}

Consider the following special case of \autoref{alg:general}, specifically designed for  \ref{exp:domd}.

\begin{algorithm}[H]
\caption{Option III}
\label{alg:SF}
\begin{algorithmic}[1] 
\STATE  In \autoref{alg:general} with $\mathcal{A}$ chosen as \ref{exp:domd}, choose $\bx_t = \bx_{t-1} + \frac{1}{\bt} \update_t$ for all $t \in [T]$.   
\end{algorithmic}
\end{algorithm}

We first demonstrate that this conversion scheme achieves the optimal nonconvex guarantee. 
The key idea is that, with \autoref{alg:SF}, the iterates $\bx_t$ remain sufficiently close to $\bw_{t-1}$, allowing us to take advantage of the telescoping sum from \autoref{alg:momentum}.
More specifically, it follows that
\begin{align}
    \bx_t - \bw_{t-1} &= (\bx_{t-1} + \frac{1}{\bt}  \update_t) - (\bx_{t-1} + \update_{t-1}) = \frac{1}{\bt} \update_t - \update_{t-1}.
\end{align}
By the update rule of \ref{exp:domd}, a major cancellation occurs, and we have that $\frac{1}{\bt} \update_t - \update_{t-1}$ simplifies to $- \eta \bg_{t-1}$. Therefore, we can express:
\begin{align} \label{exp:diff_SF}
    \bx_t - \bw_{t-1} = - \eta \bg_{t-1}.
\end{align}
In other words, $\norm{\bx_t - \bw_{t-1}}$ is significantly smaller than the size of the update made by $\mathcal{A}$.
With this crucial \eqref{exp:diff_SF}, we can now show that \autoref{alg:SF} achieves the optimal nonconvex optimization guarantee.

\begin{corollary} \label{cor:SF}
Consider the iterates of \autoref{alg:SF} with the parameter choices as in \autoref{thm:opt}, \emph{i.e.}, $\beta = \betas$, $D = \Ds$, $\mu = \mus$, and the parameters of \ref{exp:domd} chosen as:
\begin{align}
\eta = \etas \coloneqq  \frac{2}{G + \sigma} \Ds \sqrt{1 - \betas}\quad \text{and} \quad    \bt= \bts \coloneqq \frac{\betas}{1 + \etas \mus}.
\end{align} 
Then, it holds that $\E_{\tn} \regnorm{\lambda}{\nabla F(\bby_{\tn})} \leq 5\eps$,
provided that 
\begin{align}  
T \geq \max \left\{ 980(G+\sigma)^2 \subopt\lambda^{1/2} \eps^{-7/2} , ~~49 (G + \sigma)^2 \eps^{-2} \right\}.
\end{align}
\end{corollary}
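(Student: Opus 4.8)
The plan is to apply \autoref{thm:opt} to \autoref{alg:SF}, which reduces the task to bounding the two quantities that drive the generic guarantee: the iterate stability factor $\cx$ and the cumulative loss decrement $\mathbb{E}\sum_{t=1}^T (F(\bx_t) - F(\bw_t))$. The key tool is the identity \eqref{exp:diff_SF}, namely $\bx_t - \bw_{t-1} = -\eta \bg_{t-1}$, which says that the $\bx$-iterates track the previous $\bw$-iterates up to a single gradient step of size $\eta$.

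\textbf{Step 1: Control the loss decrement via a shifted telescoping sum.} I would write $F(\bx_t) - F(\bw_t) = \big(F(\bx_t) - F(\bw_{t-1})\big) + \big(F(\bw_{t-1}) - F(\bw_t)\big)$. The second pieces telescope as in \eqref{ineq:sumF}, contributing at most $\subopt + (F(\bw_0) - F(\bx_0))$ — and since $\bx_0 = \bw_0$, just $\subopt$ (modulo a boundary term involving $\bx_1$ versus $\bw_0$, which I would handle separately). For the first pieces, $F(\bx_t) - F(\bw_{t-1}) \le G \norm{\bx_t - \bw_{t-1}} = G\eta\norm{\bg_{t-1}}$ by $G$-Lipschitzness and \eqref{exp:diff_SF}. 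Taking expectations and using $\E\norm{\bg_{t-1}}^2 \le G^2 + \sigma^2$ (hence $\E\norm{\bg_{t-1}} \le \sqrt{G^2+\sigma^2} \le G + \sigma$), this sum is at most $T G \eta (G+\sigma)$. Plugging in $\eta = \etas = \frac{2}{G+\sigma}\Ds\sqrt{1-\betas}$ and $\sqrt{1-\betas} = \frac{\eps}{7(G+\sigma)}$, I get $TG\eta(G+\sigma) = \frac{2 T G \Ds \eps}{7(G+\sigma)} \le \frac{2}{7}T\Ds\eps$. So $\mathbb{E}\sum_t (F(\bx_t) - F(\bw_t)) \le \subopt + \frac{2}{7}T\Ds\eps$ up to a harmless $O(1)$ boundary term.

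\textbf{Step 2: Bound the iterate stability factor.} From $\bx_t - \bx_{t-1} = \frac{1}{\bt}\update_t$ we get $\norm{\bx_t - \bx_{t-1}}^2 = \bt^{-2}\norm{\update_t}^2$, so $\cx \le \bt^{-2} = \bts^{-2}$. I then need to verify $\bts$ is bounded below by a constant so that $\cx = O(1)$; since $\bts = \betas/(1+\etas\mus)$ with $\betas \ge 1/2$ (for $\eps$ small) and $\etas\mus = \frac{2}{G+\sigma}\Ds\sqrt{1-\betas}\cdot \mus$, a direct computation using the formulas for $\Ds$ and $\mus$ from \autoref{thm:opt} shows $\etas\mus$ is an absolute constant (the $\lambda^{1/2}\eps^{1/2}$ and $\lambda^{-1/2}\eps^{1/2}$ factors cancel, leaving something like $\eps/(G+\sigma)$ times a constant, which is $\le 1$), so $\bts \ge$ some universal constant. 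This gives $\cx \le C$ for an explicit $C$.

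\textbf{Step 3: Assemble.} Feeding $\cx = O(1)$ and the loss-decrement bound into the post-\autoref{thm:opt} display $\E_{\tn}\regnorm{\lambda}{\nabla F(\bby_{\tn})} \le 3\eps + \frac{4\lambda^{1/2}\eps^{-1/2}}{\Ds T}\cdot \mathbb{E}\sum_t (F(\bx_t)-F(\bw_t))$ — wait, more precisely the form $3\eps + \frac{1}{\Ds T}\mathbb{E}\sum_t(F(\bx_t)-F(\bw_t))$ with the $\Ds$-dependence as stated — the $\frac{2}{7}T\Ds\eps$ term contributes $\frac{2}{7}\eps \cdot (1 + \frac{49(G+\sigma)^2}{\eps^2}\sqrt{\cx})$-type factor; with $\cx = O(1)$ this is still $O(\eps^{-1}(G+\sigma)^2)$, and dividing by $T$ with $T \ge 49(G+\sigma)^2\eps^{-2}$ makes it $O(\eps)$. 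The $\subopt$ term contributes $O(\lambda^{1/2}\eps^{-1/2}(G+\sigma)^2\eps^{-2}\subopt/T)$, which is $\le \eps$ once $T \gtrsim (G+\sigma)^2\subopt\lambda^{1/2}\eps^{-7/2}$; tracking the constants carefully yields the claimed threshold $980(G+\sigma)^2\subopt\lambda^{1/2}\eps^{-7/2}$ and the final bound $5\eps$ (the extra $\eps$ over \autoref{cor:momentum} coming from the Step 1 gradient-step slack).

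\textbf{Main obstacle.} The delicate part is Step 2 together with the bookkeeping in Step 3: I must verify that $\bts$ — and hence $\cx = \bts^{-2}$ — is bounded by an absolute constant using the intertwined closed forms of $\Ds, \mus, \betas, \etas$, and then propagate that constant (and the $\frac{2}{7}$ from $\eta_\star$) through the $(1 + \frac{49(G+\sigma)^2}{\eps^2}\sqrt{\cx})$ factor without losing the $\eps^{-2}$ cancellation against $T$. None of this is conceptually hard, but it requires care to land exactly on the stated constant $980$ and the clean bound $5\eps$; a sloppy constant in $\cx$ would blow up the $T$-threshold quadratically.
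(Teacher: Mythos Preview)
Your plan is correct and matches the paper's proof almost step for step: the same shifted telescoping plus Lipschitz argument via \eqref{exp:diff_SF} in Step 1, the same $\cx = \bts^{-2}$ in Step 2 (the paper computes $\etas\mus = \frac{\eps^2}{7(G+\sigma)^2}$, noting that the $(1+\tfrac{49(G+\sigma)^2}{\eps^2}\sqrt{\cx})$ factors in $\Ds$ and $\mus$ cancel, and concludes $\bts^{-1}\le 4$, hence $\cx\le 16$), and the same invocation of \autoref{thm:opt}.

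Your Step 3 bookkeeping has a wobble, though. When you feed $\frac{2}{7}T\Ds\eps$ into $\frac{1}{\Ds T}\cdot(\,\cdot\,)$, the $\Ds$ and $T$ cancel on the spot, leaving exactly $\frac{2}{7}\eps$; there is no $(1+\frac{49(G+\sigma)^2}{\eps^2}\sqrt{\cx})$ factor on this piece, and no second division by $T$ is needed (or available---you have already divided). The $(1+\cdots)$ factor appears only when you expand $1/\Ds$ against the $\subopt$ term, which carries no compensating $\Ds$; there $\sqrt{\cx}\le 4$ gives $(1+4\cdot\frac{49(G+\sigma)^2}{\eps^2})\le 5\cdot\frac{49(G+\sigma)^2}{\eps^2}$, whence the constant $4\cdot 5\cdot 49=980$. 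So the obstacle you flag is actually simpler than you fear.
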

\begin{proof}[Proof sketch]
It is clear by definition that $\cx = \frac{1}{\bts}$. Thus, the crux of the proof lies in controlling the sum of loss decrements by leveraging \eqref{exp:diff_SF} in  order to apply \autoref{thm:opt}.
In particular, we decompose the sum using the $G$-Lipschitz continuity of $F$ as follows:
\begin{align}
    \sum_{t=1}^{T} \left(F(\bx_t) - F(\bw_t)\right) &= \sum_{t=1}^{T} \left(F(\bx_{t}) - F(\bw_{t-1}) + F(\bw_{t-1}) - F(\bw_t)\right) \\
    &\leq \sum_{t=1}^{T} G \|\bx_t - \bw_{t-1}\| + \sum_{t=1}^{T} \left(F(\bw_{t-1}) - F(\bw_t)\right).
\end{align}
Applying the closeness property \eqref{exp:diff_SF} between $\bx_t$ and $\bw_{t-1}$, we can show that the first term in the upper bound is at most $\OO{\Ds T \eps}$, making it a lower-order term. 
For a complete proof, see \autoref{pf:cor:SF}.
\end{proof}

\autoref{cor:SF} shows that \autoref{alg:SF} achieves the same optimal complexity as the conversion methods discussed in \autoref{sec:warmup}, differing only by multiplicative constant factors. Next, we will examine the update rule more closely.
It turns out that \autoref{alg:SF} corresponds exactly to schedule-free SGD.

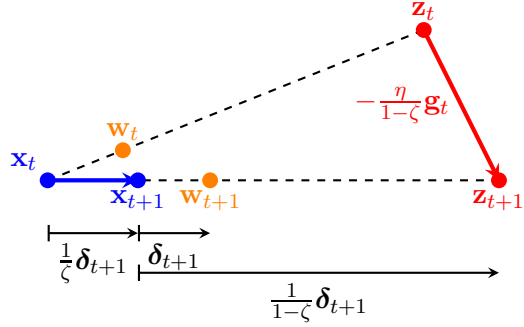
\begin{figure}
\begin{center}

\begin{tikzpicture}
\tikzset{
trajectoryz/.style={ultra thick, red, ->, >=stealth},
trajectoryx/.style={ultra thick, blue, ->, >=stealth},
trajectory/.style={thick, black, ->, >=stealth},
averaged/.style={thick, black, -, dashed, >=stealth},
distline/.style={thick, black, |->, >=stealth},  
}

\newcommand{\betav}{0.2} 
\newcommand{\width}{6}
\newcommand{\depth}{2}

\coordinate (Xt) at (0, 0);  
\coordinate (Zt) at ({\width-1}, \depth);
\coordinate (Wt) at ({(\width-1)*\betav}, \depth*\betav);
\coordinate (Xt1) at (\width*\betav,0);
\coordinate (Zt1) at (\width, 0);
\coordinate (Wt1) at ({\width*(2-\betav)*\betav}, 0);

\draw[averaged] (Xt) -- (Zt);
\draw[averaged] (Xt) -- (Zt1);
\draw[trajectoryz] (Zt) -- (Zt1) node[midway, left] {$-\frac{\eta}{1-\bt} \bg_{t}$};
\draw[trajectoryx] (Xt) -- (Xt1);


\draw[distline] ([yshift=-20pt]Xt) -- ([yshift=-20pt]Xt1) node[midway, below] {$\frac{1}{\bt} \update_{t+1}$};
\draw[distline] ([yshift=-20pt]Xt1) -- ([yshift=-20pt]Wt1) node[midway, below] {$ \update_{t+1}$};
\draw[distline] ([yshift=-35pt]Xt1) -- ([yshift=-35pt]Zt1) node[midway, below] {$\frac{1}{1-\bt} \update_{t+1}$};

\fill[blue] (Xt) circle (3pt) node[above left] {$\bx_{t}$};
\fill[blue] (Xt1) circle (3pt) node[below] {$\bx_{t+1}$};
\fill[orange] (Wt) circle (3pt) node[above] {$\bw_{t}$}; 
\fill[red] (Zt) circle (3pt) node[above] {$\bz_{t}$};
\fill[red] (Zt1) circle (3pt) node[below] {$\bz_{t+1}$}; 
\fill[orange] (Wt1) circle (3pt) node[below] {$\bw_{t+1}$}; 

\end{tikzpicture}
\end{center}
\caption{{\small Illustration of how \autoref{alg:SF} can be interpreted as schedule-free SGD. By defining the $\bz$-iterates according to \eqref{def:zt}, it becomes clear that the $\bz$-iterates follow the base SGD trajectory of schedule-free SGD \eqref{schefree}.}}

\label{fig:SF}
\end{figure}

\subsection{Option III is equivalent to schedule-free SGD}
\label{sec:SF_update}

A striking outcome of our general conversion framework (\autoref{alg:general}) is that one of its special cases, \autoref{alg:SF}, turns out to be equivalent to the schedule-free SGD method \eqref{schefree}. 

To demonstrate this equivalence, we introduce a set of extrapolated iterates, $\bz_t$, defined as follows:
\begin{align} \label{def:zt}
    \bz_t \coloneqq  \bx_t +\frac{1}{1-\bt} \update_t.
\end{align}
With these $\bz$-iterates, it becomes clear that the $\bz_t$ iterates follow the exact trajectory of the base SGD method used in schedule-free SGD, with an effective step size of $\gamma = \frac{\eta}{1-\bt}$. See \autoref{pf:prop:extrapolate} for details.

\begin{proposition}
\label{prop:extrapolate}
Under \autoref{alg:SF} and using the extrapolated iterates defined in \eqref{def:zt}, the following holds:
\begin{align*}
\bz_{t+1} - \bz_{t} = -\frac{\eta}{1-\bt}  \bg_{t}.
\end{align*}
\end{proposition}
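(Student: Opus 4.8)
The plan is to verify the identity $\bz_{t+1} - \bz_t = -\frac{\eta}{1-\bt}\bg_t$ by direct substitution of the definition \eqref{def:zt} together with the two update rules that define \autoref{alg:SF}: the $\bx$-update $\bx_t = \bx_{t-1} + \frac{1}{\bt}\update_t$ and the online learner update \ref{exp:domd}, namely $\update_{t+1} = \bt(\update_t - \eta\bg_t)$.

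First I would write out $\bz_{t+1} - \bz_t$ using \eqref{def:zt}:
\begin{align*}
\bz_{t+1} - \bz_t = \left(\bx_{t+1} + \tfrac{1}{1-\bt}\update_{t+1}\right) - \left(\bx_t + \tfrac{1}{1-\bt}\update_t\right) = (\bx_{t+1} - \bx_t) + \tfrac{1}{1-\bt}(\update_{t+1} - \update_t).
\end{align*}
Then I would substitute $\bx_{t+1} - \bx_t = \frac{1}{\bt}\update_{t+1}$ from the \autoref{alg:SF} design choice, giving $\bz_{t+1} - \bz_t = \frac{1}{\bt}\update_{t+1} + \frac{1}{1-\bt}(\update_{t+1} - \update_t)$. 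Next I would group the $\update_{t+1}$ terms: $\frac{1}{\bt} + \frac{1}{1-\bt} = \frac{1}{\bt(1-\bt)}$, so the expression becomes $\frac{1}{\bt(1-\bt)}\update_{t+1} - \frac{1}{1-\bt}\update_t$. Finally I would substitute $\update_{t+1} = \bt(\update_t - \eta\bg_t)$ into the first term: $\frac{1}{\bt(1-\bt)}\cdot\bt(\update_t - \eta\bg_t) = \frac{1}{1-\bt}(\update_t - \eta\bg_t)$, and then the $\frac{1}{1-\bt}\update_t$ terms cancel, leaving exactly $-\frac{\eta}{1-\bt}\bg_t$.

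This is an entirely routine algebraic verification, so there is no real obstacle; the only thing to be careful about is bookkeeping of index shifts (using the update rule at index $t$, which produces $\update_{t+1}$, versus at index $t-1$) and making sure the $\bx$-update in \autoref{alg:SF} is applied at the correct time step. I would also note in passing that combining this identity with $\gamma \coloneqq \frac{\eta}{1-\bt}$ shows the $\bz$-iterates follow plain SGD steps $\bz_{t+1} = \bz_t - \gamma\bg_t$, which is precisely the base-optimizer trajectory in \eqref{schefree}, and that one can similarly check the $\bx$- and $\by$-interpolation relations hold with appropriate $c_t,\kappa_t$ — but the proposition itself only requires the displayed one-line identity.
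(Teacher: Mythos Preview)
Your proof is correct and follows essentially the same direct-substitution route as the paper: expand $\bz_{t+1}-\bz_t$ via \eqref{def:zt}, replace $\bx_{t+1}-\bx_t$ by $\tfrac{1}{\bt}\update_{t+1}$, combine the $\update_{t+1}$ coefficients, and then apply the \ref{exp:domd} update $\tfrac{1}{\bt}\update_{t+1}=\update_t-\eta\bg_t$ to cancel the $\update_t$ terms. The only cosmetic difference is that the paper factors out $\tfrac{1}{1-\bt}$ before simplifying, whereas you first combine $\tfrac{1}{\bt}+\tfrac{1}{1-\bt}=\tfrac{1}{\bt(1-\bt)}$; the computations are identical.
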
 

Thus, by defining these extrapolated iterates, we demonstrate that \autoref{alg:SF} mirrors the update rule of schedule-free SGD. We begin by presenting the explicit form of \autoref{alg:SF}, where we substitute the choice of $\bx_t$ according to \autoref{alg:SF}, \emph{i.e.}, $\bx_t = \bx_{t-1} + \frac{1}{\bt} \update_t$, along with the choice of $\mathcal{A}$ as \ref{exp:domd}. The resulting explicit update form is shown in \autoref{alg:SF_explicit}.
  
\begin{minipage}{0.45\textwidth}
\begin{algorithm}[H]
\caption{Explicit form of \autoref{alg:SF}}
\label{alg:SF_explicit}
\begin{algorithmic}[1]  
\FOR{$t = 1, 2, \ldots, T$}
\STATE Receive $\update_{t}$ from $\mathcal{A} = $\ref{exp:domd}, \emph{i.e.}, 
$\update_t = \bt (\update_{t-1} - \eta \bg_{t-1})$.
\STATE Update $\bx_t = \bx_{t-1} + \frac{1}{\bt} \update_t$.
\STATE Update $\bw_{t} = \bx_{t} + \update_{t}$.
\STATE Set $\by_{t} = \bx_{t} + s_t \update_{t}$, where $s_{t}$ is drawn uniformly from $[0,1]$ \emph{i.i.d.}
\STATE Compute $\bg_{t} \leftarrow \stograd(\by_{t})$.
\STATE Send loss $\ell_{t}(\cdot) = \langle \bg_{t}, \cdot \rangle + \frac{\mu}{2} \norm{\cdot}^2$ to $\mathcal{A}$.
\ENDFOR   
\end{algorithmic}
\end{algorithm}
\end{minipage}\hfill\begin{minipage}{0.45\textwidth}
\begin{algorithm}[H]
\caption{Rewriting of \autoref{alg:SF} using the extrapolated $\bz$-iterates \eqref{def:zt}}
\label{alg:SF_rewrite}
\begin{algorithmic}[1] 
\STATE \textbf{Input:} Initial iterates $\bx_0 = \bz_0$.
\FOR{$t = 1, 2, \ldots, T$}
\STATE Update $\bx_{t} = \bt \bx_{t-1} + (1-\bt) \bz_{t}$.
\STATE Set $\by_{t} = \kappa_t \bx_t + (1-\kappa_t) \bz_t$, where $\kappa_t$ is drawn uniformly from $[\bt, 1]$, \emph{i.i.d.}
\STATE Compute $\bg_{t} \leftarrow \stograd(\by_{t})$.
\STATE Update $\bz_{t+1} = \bz_{t} - \gamma \bg_{t}$, where the step size is chosen as $\gamma =\frac{\eta}{1-\bt}$.
\ENDFOR   
\end{algorithmic}
\end{algorithm}
\end{minipage}  
\vspace{15pt}

We can observe that \autoref{alg:SF_explicit} can be reformulated in terms of the iterates $\bx_t$, $\by_t$, and $\bz_t$, eliminating the dependence on $\bw_t$. This follows from \eqref{def:zt}, along with the choice $\bx_t = \bx_{t-1} + \frac{1}{\bt} \update_t$, which implies the following relationship:
\begin{align}
    \bx_t = \bt \bx_{t-1} + (1-\bt) \bz_t.
\end{align}
This result is also visually illustrated in \autoref{fig:SF}. Additionally, since $\by_t = \bx_t + s_t \update_t$, we have:
\begin{align}
    \by_t = \bx_t + s_t (1-\bt) (\bz_t - \bx_t) = \big(1 - s_t (1-\bt)\big) \bx_t + s_t (1-\bt) \bz_t.
\end{align}
By combining these steps, we obtain \autoref{alg:SF_rewrite}, a reformulation of \autoref{alg:SF_explicit}. Notably, this algorithm is equivalent to the schedule-free SGD method. Specifically, \autoref{alg:SF_rewrite} selects $\kappa_t$ uniformly from the interval $[\bt, 1]$ at each iteration, employs a step size $\gamma = \frac{\eta}{1-\bt}$, and consistently sets $c_t \equiv 1-\bt$ for all $t$.

\subsection{Practical insights from our results}
\label{sec:insights}

 
 Our results highlight an important property of schedule-free SGD: it not only achieves the optimal convex guarantee established by \citet{defazio2024road} but also attains the optimal nonconvex guarantee. This versatility helps explain the empirical success of schedule-free methods across a broad spectrum of optimization problems. 
It is important to note that our current analysis requires distinct parameter settings for $\gamma$ and $c_t$, depending on whether $F$ is convex or nonconvex.

We now discuss how our results offer new insights into parameter selection for schedule-free SGD. To begin, we note the following fact (see \autoref{pf:prop:parameters} for further details).

\begin{proposition} \label{prop:parameters}
    With the parameter choices given in \autoref{cor:SF}, we have $\bt = \bts = 1 - \Theta \left(\frac{\eps^2}{(G + \sigma)^2} \right)$. Specifically, the parameters in schedule-free SGD are set as follows:
    \begin{itemize}
        \item $\kappa_t$ is chosen uniformly from $[\bts, 1]$, implying that $1 - \Theta \left(\frac{\eps^2}{(G + \sigma)^2}\right) \leq \kappa_t \leq 1$ for all $t$. This selection ensures that $\kappa_t$ remains close to $1$.
        \item The step size for the base SGD ($\bz$-trajectory) is set to $\frac{\etas}{1-\bts}$, making it $\Theta\left(\frac{(G + \sigma)^2}{\eps^2}\right)$ times larger than the OMD step size, $\etas$.
    \end{itemize}
\end{proposition}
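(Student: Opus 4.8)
The plan is to unwind the parameter definitions from \autoref{thm:opt} and \autoref{cor:SF} and track the $\eps$-dependence of each quantity, then assemble $\bts = \betas/(1+\etas\mus)$. First I would recall $\betas = 1 - \big(\tfrac{\eps}{7(G+\sigma)}\big)^2$, so $1-\betas = \Theta\big(\tfrac{\eps^2}{(G+\sigma)^2}\big)$ immediately. The main work is to show $\etas\mus = O\big(\tfrac{\eps^2}{(G+\sigma)^2}\big)$, which (combined with $\betas \to 1$) forces $\bts = \betas/(1+\etas\mus) = 1 - \Theta\big(\tfrac{\eps^2}{(G+\sigma)^2}\big)$; here I must be a little careful to get a matching lower bound on $1-\bts$, but since $\betas < 1$ we already have $\bts < 1$, and $1-\bts \geq 1-\betas = \Theta(\eps^2/(G+\sigma)^2)$, so the $\Theta$ (not just $O$) claim follows.

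To bound $\etas\mus$, I would substitute $\etas = \tfrac{2}{G+\sigma}\Ds\sqrt{1-\betas}$ and $\mus = 2\lambda^{1/2}\eps^{1/2}\big(1+\tfrac{49(G+\sigma)^2}{\eps^2}\sqrt{\cx}\big)$, and use that $\cx = 1/\bts$ is $\Theta(1)$ (a priori one should note $\bts$ is bounded away from $0$ for small $\eps$, e.g. $\bts \geq \betas/(1+\etas\mus)$ with the right-hand factors controlled, so this is mildly circular and I would instead argue $\cx = O((G+\sigma)^2/\eps^2)$ suffices, or simply observe $\bts$ stays in a fixed interval like $[\tfrac12,1)$ for $\eps$ small). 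Then $\mus\Ds$ telescopes nicely: by the formulas, $\Ds = \tfrac14\lambda^{-1/2}\eps^{1/2}\big(1+\tfrac{49(G+\sigma)^2}{\eps^2}\sqrt{\cx}\big)^{-1}$, so $\mus\Ds = 2\lambda^{1/2}\eps^{1/2}\cdot\tfrac14\lambda^{-1/2}\eps^{1/2} = \tfrac12\eps$ exactly, the inflation factors cancelling. Hence $\etas\mus = \tfrac{2}{G+\sigma}\sqrt{1-\betas}\cdot\tfrac12\eps = \tfrac{\eps}{G+\sigma}\cdot\tfrac{\eps}{7(G+\sigma)} = \tfrac{\eps^2}{7(G+\sigma)^2}$, which is $\Theta(\eps^2/(G+\sigma)^2)$ and in particular $o(1)$. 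This gives the first bullet's claim on $\kappa_t \in [\bts,1]$ directly from the equivalence established in \autoref{sec:SF_update}.

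For the second bullet, the base-SGD step size is $\gamma = \tfrac{\etas}{1-\bts}$ by \autoref{alg:SF_rewrite}. Since $1-\bts = \Theta(\eps^2/(G+\sigma)^2)$ from the above, the ratio $\gamma/\etas = 1/(1-\bts) = \Theta((G+\sigma)^2/\eps^2)$, which is exactly the claimed inflation. The only obstacle I anticipate is the mild self-reference $\cx = 1/\bts$: I would handle it by first establishing cheap two-sided bounds on $\bts$ (e.g. $\bts \in [c, 1)$ for an absolute constant $c>0$ once $\eps \leq c'(G+\sigma)$, which follows because $\betas \geq 1/2$ and $\etas\mus \leq 1$ in that regime), then plugging these into the $\cx$ term to get the sharp $\Theta$ asymptotics. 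Everything else is bookkeeping. For full details see \autoref{pf:prop:parameters}.
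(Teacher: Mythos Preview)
Your proposal is correct and follows essentially the same route as the paper: compute $\etas\mus$ exactly by exploiting the cancellation $\mus\Ds = \tfrac{1}{2}\eps$ (so $\etas\mus = \tfrac{\eps^2}{7(G+\sigma)^2}$), then assemble $1-\bts$ from $\betas$ and $\etas\mus$; the paper does this via the identity $1-\bts = \tfrac{(1-\betas)+\etas\mus}{1+\etas\mus}$, which packages your separate upper/lower bound arguments into one line. Your concern about the self-reference through $\cx$ is unnecessary---precisely because of the cancellation you already noted, $\etas\mus$ is independent of $\cx$, so there is no circularity to resolve.
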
 

\noindent Next, we interpret these parameter choices in light of empirical findings by \citet{defazio2024road} that lacked theoretical explanation.

First, their convex guarantee \cite[Theorem 2]{defazio2024road} permits $\kappa_t$ to be chosen arbitrarily within  the interval $[0,1]$, yet experimental results indicated that selecting $\kappa_t$ near $1$ (e.g., $0.98$) was crucial for strong empirical performance. Our results provides theoretical support for this choice,  as noted in the first bullet point of \autoref{prop:parameters}.

Another key empirical observation is that  the optimal learning rates for schedule-free variants exceeded those for standard base optimizers. They suggest that the ability to use larger learning rates without diverging may be a contributing factor to the faster convergence of schedule-free methods. Our results support this empirically observed benefit, as we show that the learning rate for nonconvex optimization is $\Theta (\frac{(G + \sigma)^2}{\eps^2} )$ times larger than the optimal step size for SGD momentum for this setting.

\section{Discussion}
\label{sec:discussion}

Motivated by the impressive empirical performance of schedule-free methods, this work investigates their effectiveness for nonconvex optimization. As a first step, we demonstrate that schedule-free SGD achieves optimal iteration complexity for nonsmooth, nonconvex optimization. This is accomplished through a general conversion framework that not only recovers existing conversions but also introduces two novel conversion schemes. Notably, one of these novel conversions directly corresponds to schedule-free SGD, which serves as the basis for our analysis. While this paper lays important groundwork, it merely scratches the surface of our understanding of schedule-free methods and opens up several avenues for future research.
Below, we outline a few of these potential directions for the reader's interest.

\paragraph{Other special cases of our general conversion.}  
In this work, we explore three specific instances of the general conversion framework. However, it is unlikely that these are the only viable conversions. Since these special cases yield highly practical optimizers, it would be worthwhile to investigate additional special cases and assess the practical implications of those conversions.

\paragraph{Adaptive schedule-free methods.}  
Considering the impressive practical performance of the schedule-free version of Adam, as highlighted in \citep{defazio2024road}, it would be intriguing to explore whether this method can be understood as a special case of the general online-to-nonconvex conversion framework. Our current analysis provides a correspondence only for schedule-free SGD.

\paragraph{Advanced weighting schemes.} The convex analysis of \citep{defazio2024road} allows for an arbitrary sequence of ``weights'' for each example which inform the choice for $c_t$ in \eqref{schefree}. While uniform weighting (corresponding to $c_t= 1/t$) is worst-case optimal, it is not \emph{instance} optimal, and in fact certain empirical heuristics such as learning rate warmup can be recovered by instance-dependent weighting \citep{defazio2023optimal}. Our analysis makes use of exponentially increasing weights, corresponding to a consant $c_t=\gamma$ as detailed in \autoref{alg:SF_rewrite}. While our weights achieve the optimal worst-case convergence guarantees, we conjecture that improvements are possible by incorporating instance-dependent weighting.

\paragraph{Truly universal methods.}  
It is noteworthy that the schedule-free method achieves optimal rates for nonsmooth losses regardless of their convexity. However, as discussed in \autoref{sec:insights}, the current analysis requires distinct parameter settings for $\gamma$ and $c_t$, depending on whether $F$ is convex or nonconvex. This indicates that we have not yet developed a fully unified algorithm that seamlessly addresses both cases. Nevertheless, these findings suggest the potential for a unified algorithmic framework. It would be valuable to explore whether a single parameter choice can be effective for both scenarios and to determine if such choices align with those observed in practice.

\paragraph{Why is schedule-free schedule-free?}  Lastly, we acknowledge that the main limitation of our analysis is its inability to fully explain why schedule-free methods can effectively alleviate the need for learning rate decay schedules, which is their most notable empirical advantage. Addressing this gap likely requires the development of a theoretical framework for understanding learning rate scheduling in nonconvex optimization.

\bibliographystyle{abbrvnat}
\bibliography{ref}

\newpage 

\appendix
\renewcommand{\appendixpagename}{\centering \LARGE Appendix}
\appendixpage
\startcontents[section]
\printcontents[section]{l}{1}{\setcounter{tocdepth}{2}}

\section{Proof of the discounted-to-nonconvex conversion (\autoref{lem:master})}
\label{pf:lem:master}

We begin this proof with some notations. 
Recall  that  $\tn$ is the random index distributed over $[T]$ as:
\begin{align}
\Pr(\tn =t ) =p_t \coloneqq \begin{cases}
\frac{1-\beta^t}{T}, & \text{if}~~ t=1,\dots, T-1,\\
\frac{1}{1-\beta}\cdot \frac{1-\beta^T  }{T},  & \text{if}~~ t=T.
\end{cases}
\end{align} 
For each $t \in [T]$, let $\eby_t$ be the random iterate distributed over $\{\by_s\}_{s=1}^t$ as:
\begin{align*}
\Pr(\eby_t= \by_s) = q_{t,s} \coloneqq \beta^{t-s} \cdot \frac{1-\beta}{1-\beta^t} \quad \text{for $s=1,2,\dots, t$.}
\end{align*} 
In particular, it follows that 
$\E_{\eby_t}[\eby_t] = \frac{1-\beta}{1-\beta^t} \cdot \sum_{s=1}^t\beta^{t-s} \by_s = \bby_t$.

In order to prove \autoref{lem:master}, we want to upper bound the following quantity:
\begin{align}
\E_{\tn} \regnorm{\lambda}{\nabla F(\bby_{\tn})}.
\end{align}
From the definition of a $(\lambda,\eps)$-stationary point (\autoref{def:reg_station}), it follows that
\begin{align}
\E_{\tn} \regnorm{\lambda}{\nabla F(\bby_{\tn})}  \leq   \E_{\tn }  \left[  \norm{\E_{\eby_{\tn}}\nabla F(\eby_{\tn})}  +\lambda  \E_{\eby_{\tn}} \norm{\eby_{\tn} - \bby_{\tn}}^2 \right].
\end{align} 
We will now upper bound each term individually. We begin with the first term. 

\begin{lemma} \label{lem:first_term}
For  $\beta\in(0,1)$, consider the iterates generated as per \autoref{alg:general}.
Consider the following definitions:
\begin{itemize}
\item For each $t \in [T]$, let   $\bu_t \coloneqq -D \frac{\sum_{s=1}^t\beta^{t-s}\nabla F(\by_s)}{\norm{\sum_{s=1}^t\beta^{t-s}\nabla F(\by_s)}}$. 
\item For each $t \in [T]$, let $\eby_t$ be the random iterate distributed over $\{\by_s\}_{s=1}^t$ as:
\begin{align*}
\Pr(\eby_t= \by_s) = q_{t,s} \coloneqq \beta^{t-s} \cdot \frac{1-\beta}{1-\beta^t} \quad \text{for $s=1,2,\dots, t$.}
\end{align*} 
\end{itemize}Then, the following upper bound holds:
\begin{align*} 
\E_{\tn }   \norm{\E_{\eby_{\tn}}\nabla F(\eby_{\tn})}  &\leq  \frac{\beta   \E\left[\dregret_T(\bu_T) \right] +  (1-\beta)\sum_{t=1}^{T}     \E\left[\dregret_t(\bu_t)\right]}{DN}  - \frac{\mu\E \sum_{t=1}^T   \norm{\update_t}^2}{2DT}  \\
&\quad +\frac{\E \sum_{t=1}^{T} (F(\bx_{t}) - F(\bw_{t}))}{DT}     + \frac{\sigma \beta}{T\sqrt{1-\beta}}  + \sigma \sqrt{1-\beta}+     \frac{\mu D}{2}  .
\end{align*} 
\end{lemma}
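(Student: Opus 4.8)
The goal is to bound $\E_{\tn}\|\E_{\eby_{\tn}}\nabla F(\eby_{\tn})\|$. I would start with the standard linearization trick: for any unit-norm-scaled comparator direction, $\|v\| = \sup_{\|u\|\le D} \frac{1}{D}\langle -u, v\rangle$, so with $v_t := \sum_{s=1}^t \beta^{t-s}\nabla F(\by_s)$ and the specific choice $\bu_t = -D\, v_t/\|v_t\|$ we get $\|v_t\| = \frac{1}{D}\langle -\bu_t, v_t\rangle = \frac{1}{D}\sum_{s=1}^t \beta^{t-s}\langle -\bu_t, \nabla F(\by_s)\rangle$. Dividing by $\frac{1-\beta^t}{1-\beta}$ turns $\|v_t\|$ into $\frac{1-\beta^t}{1-\beta}\|\E_{\eby_t}\nabla F(\eby_t)\|$, so $\|\E_{\eby_t}\nabla F(\eby_t)\| = \frac{1-\beta}{D(1-\beta^t)}\sum_{s=1}^t \beta^{t-s}\langle -\bu_t,\nabla F(\by_s)\rangle$. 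Then I would take expectation over $\tn$, using $\Pr(\tn=t) = p_t$ and noting that $p_t$ is designed precisely so that $p_t \cdot \frac{1-\beta}{1-\beta^t}$ telescopes nicely: for $t<T$ this is $\frac{1-\beta}{T}$, and for $t=T$ it is $\frac{1}{T}$. This is the reason for the weird two-case definition of $\tn$, and the reason the final bound has the combination $\beta\,\dregret_T + (1-\beta)\sum_t \dregret_t$.

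\textbf{Key steps.} After the above reduction, the core quantity to control is $\E\sum_{s=1}^t \beta^{t-s}\langle -\bu_t, \nabla F(\by_s)\rangle$ (weighted appropriately and summed over $t$). I would handle this in three moves. First, relate $\nabla F(\by_s)$ to the online learner's loss: since $\by_s = \bx_s + s_s\update_s$ with $s_s\sim\Unif[0,1]$ and $F$ is well-behaved (\autoref{assump}), $\E_{s_s}\langle \nabla F(\by_s), \update_s\rangle = F(\bw_s) - F(\bx_s)$ — this is the exact identity that lets us trade gradient-inner-products for function-value decrements. Second, replace $\nabla F(\by_s)$ by the stochastic gradient $\bg_s$ at the cost of variance/bias terms: $\E\bg_s = \nabla F(\by_s)$ conditionally, and the difference $\bg_s - \nabla F(\by_s)$ contributes error terms that, after the $\beta$-weighting and the index averaging, collapse to the $\frac{\sigma\beta}{T\sqrt{1-\beta}} + \sigma\sqrt{1-\beta}$ terms (one from the "diagonal" $s=t$ contribution scaled by $\Pr(\tn=T)$, one from a Cauchy–Schwarz / $\sum_t \beta^{2(t-s)}\le \frac{1}{1-\beta^2}$ bound on the off-diagonal martingale-difference sum). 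Third, recognize $\langle \bg_s, \update_s - \bu_t\rangle + \frac{\mu}{2}(\|\update_s\|^2 - \|\bu_t\|^2) = \ell_s(\update_s) - \ell_s(\bu_t)$, the per-round regret of $\mathcal{A}$ against comparator $\bu_t$; the $\beta$-weighted sum $\sum_{s=1}^t \beta^{t-s}(\ell_s(\update_s)-\ell_s(\bu_t))$ is exactly $\dregret_t(\bu_t)$. Collecting: the $\langle\bg_s,\update_s\rangle$ piece becomes $\dregret$ plus $F$-decrements; the $\langle\bg_s,-\bu_t\rangle$ piece also folds into $\dregret$ (it's the $-\ell_s(\bu_t)$ part, up to the $\frac{\mu}{2}\|\bu_t\|^2 = \frac{\mu D^2}{2}$ correction, giving the $\frac{\mu D}{2}$ term after dividing by $D$); and the $-\frac{\mu}{2}\|\update_s\|^2$ regularizer piece survives as the negative $-\frac{\mu}{2DT}\E\sum_t\|\update_t\|^2$ term. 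Careful bookkeeping of which $t$-indices each $\by_s$ appears in (namely all $t\ge s$) produces the asymmetric weighting $\beta\,\dregret_T + (1-\beta)\sum_{t<T}\dregret_t$ — essentially an Abel summation identity on the coefficients $p_t\frac{1-\beta}{1-\beta^t}$.

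\textbf{Main obstacle.} The hardest and most error-prone part is the combinatorial/algebraic bookkeeping of the double sum: each $\by_s$ (with its stochastic gradient $\bg_s$ and its $s_s$-randomness) contributes to $\|\E_{\eby_t}\nabla F(\eby_t)\|$ for every $t\ge s$, each with a different weight $p_t\cdot\beta^{t-s}\cdot\frac{1-\beta}{1-\beta^t}$, and one must reorganize $\sum_{t=1}^T \sum_{s=1}^t(\cdots)$ into $\sum_{s=1}^T(\cdots)\sum_{t=s}^T(\cdots)$ and then recognize the inner sum over $t$ as producing exactly the $\beta$-discount structure plus the clean $\frac{1}{T}$ and $\frac{1-\beta}{T}$ coefficients. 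Simultaneously one must keep the stochastic-gradient error terms organized — separating the "same-round" martingale term (controlled by $\E\|\bg_s-\nabla F(\by_s)\|^2\le\sigma^2$ directly, weighted by $p_T$ at worst) from the "cross-round" sum $\sum_s \langle \bg_s - \nabla F(\by_s), \text{(something }\mathcal{F}_{s-1}\text{-measurable)}\rangle$ whose expectation vanishes but whose variance must be bounded via $\sum_{t\ge s}\beta^{2(t-s)}\le(1-\beta^2)^{-1}\le(1-\beta)^{-1}$ — and showing these assemble into precisely $\frac{\sigma\beta}{T\sqrt{1-\beta}}+\sigma\sqrt{1-\beta}$. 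I expect everything else (the linearization, the $F$-decrement identity, identifying $\dregret$) to be routine; the risk is entirely in getting the weights and the $\sigma$-term constants to land exactly as stated.
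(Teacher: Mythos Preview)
Your approach is essentially the same as the paper's: they first prove a per-$n$ ``backbone'' inequality
\[
\E\sum_{t=1}^n\beta^{n-t}(F(\bw_t)-F(\bx_t))\le -D\tfrac{1-\beta^n}{1-\beta}\E\|\E_{\eby_n}\nabla F(\eby_n)\| + \tfrac{\sigma D}{\sqrt{1-\beta}} + \E[\dregret_n(\bu_n)] + \E\sum_{t=1}^n\beta^{n-t}\bigl(-\tfrac{\mu}{2}\|\update_t\|^2+\tfrac{\mu}{2}D^2\bigr)
\]
via exactly your three moves (linearization through $\bu_n$, the well-behavedness identity $\E_{s_t}\langle\nabla F(\by_t),\update_t\rangle = F(\bw_t)-F(\bx_t)$, and Cauchy--Schwarz on the noise), and then combine across $n$ using the identity $(1-\beta)\sum_{n=1}^T\sum_{t=1}^n\beta^{n-t}a_t + \beta\sum_{t=1}^T\beta^{T-t}a_t = \sum_{t=1}^T a_t$ --- your $p_t\cdot\tfrac{1-\beta}{1-\beta^t}$ observation is precisely the dual of this. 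One small correction to your noise bookkeeping in the Main Obstacle paragraph: the term $\langle\nabla F(\by_s)-\bg_s,\,-\bu_t\rangle$ does \emph{not} have vanishing expectation, because $\bu_t$ depends on $\by_{s+1},\ldots,\by_t$ and hence on $\bg_s$; the correct move (which you do state in your Key Steps) is to apply Cauchy--Schwarz to $\E\langle\sum_s\beta^{t-s}(\nabla F(\by_s)-\bg_s),\,-\bu_t\rangle$ first and only then invoke martingale-difference orthogonality on the second moment $\E\|\sum_s\beta^{t-s}(\nabla F(\by_s)-\bg_s)\|^2\le\sigma^2/(1-\beta)$.
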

\begin{proof}
See \autoref{pf:lem:first_term}.
\end{proof}


For the second term, we use the following upper bound.

\begin{lemma}
\label{lem:second_term}
For  $\beta\in(0,1)$, consider the iterates generated as per \autoref{alg:general}
\begin{align*}
\E_{\tn}  \E_{\eby_{\tn}} \norm{\eby_{\tn} - \bby_{\tn}}^2  \leq   
\frac{2}{T} \E\sum_{t=1}^{T} \norm{\update_t}^2 +  \frac{4\beta }{(1-\beta)^2T } \E\sum_{t=1}^T \norm{\bx_t-\bx_{t-1}}^2.
\end{align*}
\end{lemma}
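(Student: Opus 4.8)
The quantity to bound is $\E_{\tn}\E_{\eby_{\tn}}\norm{\eby_{\tn}-\bby_{\tn}}^2$, where $\eby_t$ is the random iterate drawn from $\{\by_s\}_{s=1}^t$ with probabilities $q_{t,s}=\beta^{t-s}(1-\beta)/(1-\beta^t)$ and $\bby_t=\E_{\eby_t}[\eby_t]$ is its mean. Since $\bby_t$ is the mean, $\E_{\eby_t}\norm{\eby_t-\bby_t}^2$ is just the variance of $\eby_t$, which is bounded by $\E_{\eby_t}\norm{\eby_t-v}^2$ for any fixed reference point $v$. The natural choice is $v=\by_t$ (or $v=\bx_t$), so the first step is to write
\begin{align*}
\E_{\eby_t}\norm{\eby_t-\bby_t}^2 \le \E_{\eby_t}\norm{\eby_t-\by_t}^2 = \sum_{s=1}^{t} q_{t,s}\,\norm{\by_s-\by_t}^2 \le 2\sum_{s=1}^{t} q_{t,s}\bigl(\norm{\by_s-\bx_t}^2 + \norm{\bx_t-\by_t}^2\bigr),
\end{align*}
or, more cleanly, telescope $\by_s-\by_t$ along the iterate differences. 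The goal is to express everything in terms of $\norm{\update_r}^2$ (these give the ``$\frac{2}{T}\E\sum\norm{\update_t}^2$'' term) and $\norm{\bx_r-\bx_{r-1}}^2$ (these give the ``$\frac{4\beta}{(1-\beta)^2 T}\E\sum\norm{\bx_t-\bx_{t-1}}^2$'' term).

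\textbf{Key steps.} First, decompose $\by_s-\by_t$ for $s\le t$: since $\by_r=\bx_r+s_r\update_r$, write $\by_s-\by_t = (\bx_s-\bx_t) + s_s\update_s - s_t\update_t$, and then $\bx_s-\bx_t=-\sum_{r=s+1}^{t}(\bx_r-\bx_{r-1})$. Using $\norm{a+b+c}^2 \le 3(\norm a^2+\norm b^2+\norm c^2)$ and $s_r\in[0,1]$, bound $\norm{\by_s-\by_t}^2$ by a constant times $\bigl(t-s\bigr)\sum_{r=s+1}^{t}\norm{\bx_r-\bx_{r-1}}^2 + \norm{\update_s}^2 + \norm{\update_t}^2$ (the factor $t-s$ comes from Cauchy–Schwarz on the telescoping sum). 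Second, plug this into $\sum_{s=1}^t q_{t,s}\norm{\by_s-\by_t}^2$ and then take $\E_{\tn}$, i.e. sum over $t$ with weights $p_t$. The crucial combinatorial computation is to evaluate the double/triple sums of the form $\sum_t p_t \sum_s q_{t,s}(t-s)\beta^{\cdots}$; here one uses that $p_t q_{t,s} \asymp \frac{1}{T}\beta^{t-s}(1-\beta)$ (the $1-\beta^t$ factors cancel between $p_t$ and $q_{t,s}$, up to the boundary correction at $t=T$ which only helps), and that $\sum_{k\ge 0}\beta^k = \frac{1}{1-\beta}$, $\sum_{k\ge 0}k\beta^k = \frac{\beta}{(1-\beta)^2}$, $\sum_{k\ge 0}k^2\beta^k = O(1/(1-\beta)^3)$. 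Careful bookkeeping of which geometric sum hits which term produces the $\frac{1}{T}$ coefficient on the $\norm{\update_t}^2$ terms and the $\frac{\beta}{(1-\beta)^2 T}$ coefficient on the $\norm{\bx_t-\bx_{t-1}}^2$ terms, matching the claimed constants $2$ and $4$ after the triangle-inequality blowups.

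\textbf{Main obstacle.} The routine part is the triangle inequalities; the delicate part is the index gymnastics in the nested geometric sums — specifically, after the Cauchy–Schwarz step one gets a term like $\sum_t p_t\sum_{s\le t}q_{t,s}(t-s)\sum_{r=s+1}^t\norm{\bx_r-\bx_{r-1}}^2$, and one must swap the order of summation to collect, for each fixed $r$, the total weight $\sum_{t\ge r}\sum_{s<r}$ (with the $(t-s)$ factor and the $\beta^{t-s}$ decay) multiplying $\norm{\bx_r-\bx_{r-1}}^2$. Showing this total weight is $O(\beta/(1-\beta)^2)$ uniformly in $r$ (and $\le \frac{4\beta}{(1-\beta)^2 T}$ with the right constant) is where all the care goes, and it is essentially the same style of discounted-sum manipulation used in the proof of \autoref{lem:first_term}, so I would mirror that computation. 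I also need to handle the $t=T$ boundary term in $p_t$ separately, but since it only inflates $p_T$ by the factor $\frac{1}{1-\beta}$ and the corresponding $\by_s$ all have $s\le T$, it is absorbed into the same bound (indeed the statement's asymmetry between the two terms already anticipates this). For full details, see \autoref{pf:lem:second_term}.
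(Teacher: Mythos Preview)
Your plan is sound and shares the same skeleton as the paper --- bound a variance, telescope through the $\bx$-increments, swap the order of summation, and evaluate geometric series --- but the paper's opening decomposition differs from yours in a way that matters for the constants. Instead of centering at $\by_t$ and splitting $\by_s-\by_t$ into three pieces, the paper writes $\by_s=\bx_s+(\by_s-\bx_s)$ and decomposes \emph{at the variance level}:
\[
\E_{\eby_n}\norm{\eby_n-\bby_n}^2 \;\le\; 2\,\E_{\ebx_n}\norm{\ebx_n-\bbx_n}^2 \;+\; 2\sum_{s=1}^n q_{n,s}\norm{\by_s-\bx_s}^2,
\]
where $\ebx_n,\bbx_n$ are the analogous random and averaged $\bx$-iterates. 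Since $\norm{\by_s-\bx_s}^2\le\norm{\update_s}^2$, the second piece combined with the exact identity $\sum_{n\ge s}p_n q_{n,s}=\tfrac1T$ (a one-line computation from the definitions of $p_n$ and $q_{n,s}$) delivers $\tfrac{2}{T}\sum_t\norm{\update_t}^2$ on the nose. For the first piece the paper bounds $\E\norm{\ebx_n-\bbx_n}^2$ via the independent-copy trick $\E\norm{\ebx_n-\bbx_n}^2\le\E\norm{\ebx_n-\widehat{\ebx}_n}^2$, telescopes with Cauchy--Schwarz to obtain the coefficients $\lambda_{n,t}=\sum_{i\ge t}\sum_{j<t}q_{n,i}q_{n,j}(i-j)$, bounds these explicitly by $(n{-}t{+}1)\beta^{n-t+1}/(1-\beta^n)$, and sums against $p_n$ to land exactly on $\tfrac{4\beta}{(1-\beta)^2T}$.

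Your route with reference $\by_t$ and $\norm{a+b+c}^2\le 3(\cdots)$ will not hit those constants: the factor $3$ appears immediately, and you accrue an extra $\norm{\update_t}^2$ for every $s<t$, pushing the $\update$-coefficient past $2$. Switching the reference to $\bx_t$ with the two-term split $\by_s-\bx_t=(\bx_s-\bx_t)+s_s\update_s$ recovers the $\update$-constant $2$ (you meet the same identity $\sum_n p_n q_{n,s}=1/T$), but the $\bx$-piece is then the second moment about the endpoint $\bx_n$ rather than the variance, which after the same index gymnastics yields a somewhat larger constant than $4$. None of this is a genuine gap --- the argument is correct up to absolute constants and the ``main obstacle'' you identify is exactly where the work lies --- but to match the lemma as stated, use the paper's variance-level split and its independent-copy bound.
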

\begin{proof}
See \autoref{pf:var_gen}.
\end{proof}
Now let us combine above two results to finish the proof.
First, by \autoref{lem:second_term}, we get 
\begin{align*}
\lambda  \E_{\tn}\E_{\eby_{\tn}} \norm{\eby_{\tn} - \bby_{\tn}}^2 &\leq \frac{4\lambda}{T} \left(   \E\sum_{t=1}^{T} \norm{\update_t}^2 + \frac{1}{(1-\beta)^2}\E\sum_{t=1}^T \norm{\bx_t-\bx_{t-1}}^2\right) .
\end{align*} 
Therefore, Upon combining \autoref{lem:first_term} together with \autoref{lem:second_term}, we get: 
\begin{align*}
\E_{\tn }  \regnorm{\lambda}{\nabla F(\bby_{\tn})}  &\leq  \frac{ 1}{DT}\left(\beta \E\left[\dregret_T(\bu_T) \right] +  (1-\beta)\sum_{t=1}^{T}     \E\left[\dregret_t(\bu_t)\right]\right)    \\
&\quad + \frac{1}{DT} \E \sum_{t=1}^{T} (F(\bx_{t}) - F(\bw_{t}))   \\
&\quad +  \frac{4\lambda}{T} \left( \E\sum_{t=1}^{T} \norm{\update_t}^2 + \frac{1}{(1-\beta)^2}\E\sum_{t=1}^T \norm{\bx_t-\bx_{t-1}}^2 \right) - \frac{\mu}{2DT}  \E \sum_{t=1}^T   \norm{\update_t}^2\\
&\quad +  \frac{\mu D}{2}    + \frac{\sigma }{T\sqrt{1-\beta}}  + \sigma \sqrt{1-\beta}.
\end{align*}  
Now,  using the definition of $\cx$ (see \autoref{def:stability}), it follows that
\begin{align}
&\frac{4\lambda}{T} \left( \E\sum_{t=1}^{T} \norm{\update_t}^2 + \frac{1}{(1-\beta)^2}\E\sum_{t=1}^T \norm{\bx_t-\bx_{t-1}}^2 \right) - \frac{\mu}{2DT}  \E \sum_{t=1}^T   \norm{\update_t}^2 \\
&\quad \leq  \left(  4\lambda  (1 +\cx (1-\beta)^{-2} ) - \frac{\mu}{2D} \right) \cdot \frac{1}{T} \E \sum_{t=1}^T   \norm{\update_t}^2 \leq  0,
\end{align}
where the last line holds since  $\mu \geq  8\lambda D (1 +\cx (1-\beta)^{-2})$.
Therefore, we get the desirable upper bound.

\subsection{Proof of \autoref{lem:first_term}}
\label{pf:lem:first_term}
We begin with the following identity:
\begin{align*}
\sum_{n=1}^{T} \sum_{t=1}^{n} \beta^{n-t} (1 - \beta) (F(\bw_{t}) - F(\bx_{t}))&=   \sum_{t=1}^T \sum_{n=t}^T \beta^{n-t} (1 - \beta) (F(\bw_{t}) - F(\bx_{t}))\\
&= \sum_{t=1}^T (1 - \beta ^{T-t+1}) (F(\bw_t) - F(\bx_t)) \\
&= \sum_{t=1}^T (F(\bw_t) - F(\bx_t))  - \sum_{t=1}^T \beta^{T-t+1} (F(\bw_t) - F(\bx_t)) \,.
\end{align*}
After rearranging and taking expectations, it follows that 
\begin{align*}
0= \underbrace{ (1 - \beta)  \E \sum_{n=1}^T \sum_{t=1}^{n} \beta^{n-t}(F(\bw_{t}) - F(\bx_{t}))}_{=:\textbf{\textup{(A)}}}+ \underbrace{\beta \E \sum_{t=1}^T \beta^{T-t} (F(\bw_{t}) - F(\bx_{t}))}_{=:\textbf{\textup{(B)}}} + \E \sum_{t=1}^{T} (F(\bx_{t}) - F(\bw_{t}))     \,.
\end{align*}
Given this decomposition, we will carefully upper bound both \textbf{\textup{(A)}} and \textbf{\textup{(B)}}. 
This will be done based on the following result. 

\begin{lemma} \label{lem:backbone}
For  $\beta\in(0,1)$, consider the iterates generated as per \autoref{alg:general}.
Consider the following definitions:
\begin{itemize}
\item For each $t \in [T]$, let   $\bu_t \coloneqq -D \frac{\sum_{s=1}^t\beta^{t-s}\nabla F(\by_s)}{\norm{\sum_{s=1}^t\beta^{t-s}\nabla F(\by_s)}}$. 
\item For each $t \in [T]$, let $\eby_t$ be the random iterate distributed over $\{\by_s\}_{s=1}^t$ as:
\begin{align*}
\Pr(\eby_t= \by_s) = q_{t,s} \coloneqq \beta^{t-s} \cdot \frac{1-\beta}{1-\beta^t} \quad \text{for $s=1,2,\dots, t$.}
\end{align*} 
\end{itemize}
Then, for any $n\in [T]$, we have 
\begin{align*}
\E\sum_{t=1}^{n} \beta^{n-t}(F(\bw_{t}) - F(\bx_{t})) &\leq    -D \frac{1-\beta^n}{1-\beta}  \E \norm{\E_{\eby_n}\nabla F(\eby_n)} +\frac{\sigma D}{\sqrt{1-\beta}} \\
&\quad +\E
[\dregret_n(\bu_n)]   +\E   \sum_{t=1}^n   \beta^{n-t} ( - \frac{\mu}{2}\norm{\update_t}^2 + \frac{\mu}{2}D^2).
\end{align*}
\end{lemma}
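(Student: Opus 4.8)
\textbf{Proof proposal for \autoref{lem:backbone}.}

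The plan is to bound the discounted loss decrement $\E\sum_{t=1}^n \beta^{n-t}(F(\bw_t)-F(\bx_t))$ by relating it to (i) the discounted regret of the online learner against the comparator $\bu_n$, (ii) the magnitude of the averaged gradient, and (iii) error terms coming from stochasticity and the regularizer. The starting point is the fundamental-theorem-of-calculus identity from \autoref{assump}: since $\bw_t = \bx_t + \update_t$ and $\by_t = \bx_t + s_t\update_t$ with $s_t\sim\Unif[0,1]$, we have
\begin{align*}
F(\bw_t)-F(\bx_t) = \int_0^1 \inp{\nabla F(\bx_t + s\update_t)}{\update_t}\D s = \E_{s_t}\inp{\nabla F(\by_t)}{\update_t}.
\end{align*}
Thus, conditioning appropriately and using $\E[\bg_t \mid \by_t] = \nabla F(\by_t)$, the decrement $F(\bw_t)-F(\bx_t)$ equals (in expectation) $\inp{\nabla F(\by_t)}{\update_t}$, and we may swap in the stochastic gradient $\bg_t$ at the cost of terms involving $\bg_t-\nabla F(\by_t)$ against the comparator; the key point is that $\update_t$ is independent of the fresh noise in $\bg_t$, while $\bu_t$ depends on $\by_1,\dots,\by_t$ hence on $\bg_{1:t-1}$ but not $\bg_t$ — wait, $\bu_t$ depends on $\by_t$ too, so I must be careful and handle the $\inp{\bg_t - \nabla F(\by_t)}{\bu_t}$ term via Cauchy–Schwarz and the variance bound, summed with the discount weights, which is where the $\frac{\sigma D}{\sqrt{1-\beta}}$ term comes from (using $\sum_t \beta^{n-t}\le \frac{1}{1-\beta}$ and $\norm{\bu_t}=D$, plus a Jensen/square-root step on $\sum\beta^{2(n-t)}$).

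Next I would insert the online learner's loss $\ell_t(\cdot)=\inp{\bg_t}{\cdot}+\frac{\mu}{2}\norm{\cdot}^2$: by definition of discounted regret,
\begin{align*}
\sum_{t=1}^n \beta^{n-t}\big(\ell_t(\update_t)-\ell_t(\bu_t)\big) = \dregret_n(\bu_n)
\end{align*}
(the comparator here is the fixed $\bu_n$; I should double-check the indexing, since $\dregret_n$ as defined compares against a single $\bu$, and the lemma states $\dregret_n(\bu_n)$ — so the comparator used inside is $\bu_n$, the one defined at the final time $n$). Expanding $\ell_t(\update_t)-\ell_t(\bu_t) = \inp{\bg_t}{\update_t-\bu_t} + \frac{\mu}{2}\norm{\update_t}^2 - \frac{\mu}{2}\norm{\bu_t}^2$ and using $\norm{\bu_t}=D$ produces exactly the $-\frac{\mu}{2}\norm{\update_t}^2 + \frac{\mu}{2}D^2$ terms in the bound (discounted and summed), while $\sum_t\beta^{n-t}\inp{\bg_t}{\update_t}$ is what we want and $-\sum_t\beta^{n-t}\inp{\bg_t}{\bu_t}$ needs to be related to $\norm{\E_{\eby_n}\nabla F(\eby_n)}$. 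For the latter: taking expectations and replacing $\bg_t$ by $\nabla F(\by_t)$ inside the $\bu_t$ inner product (again the noise-against-comparator term is absorbed into the $\sigma D/\sqrt{1-\beta}$ budget), the choice $\bu_t = -D\,\frac{\sum_{s\le t}\beta^{t-s}\nabla F(\by_s)}{\norm{\sum_{s\le t}\beta^{t-s}\nabla F(\by_s)}}$ is designed so that $\inp{\sum_{s\le t}\beta^{t-s}\nabla F(\by_s)}{\bu_t} = -D\norm{\sum_{s\le t}\beta^{t-s}\nabla F(\by_s)}$ — but here we only have $\inp{\nabla F(\by_t)}{\bu_t}$ for the single index $t$, not the whole discounted sum, so the clean telescoping is at $t=n$ only.

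This tension is, I expect, the \textbf{main obstacle}: getting the running comparators $\bu_t$ (for $t<n$) to contribute in a way that, after discounted summation, collapses to the single term $-D\frac{1-\beta^n}{1-\beta}\norm{\E_{\eby_n}\nabla F(\eby_n)}$. The resolution should be that we do \emph{not} expand the decrement per-$t$ and then sum; rather, the identity at the top of \autoref{pf:lem:first_term} is used the other way — i.e., $\sum_{t=1}^n\beta^{n-t}(F(\bw_t)-F(\bx_t))$ with the per-step relation $F(\bw_t)-F(\bx_t) = \E_{s_t}\inp{\nabla F(\by_t)}{\update_t}$ gives $\E\sum_{t=1}^n\beta^{n-t}\inp{\bg_t}{\update_t}$, and then a single application of the regret definition with comparator $\bu_n$ converts $\sum_t\beta^{n-t}\inp{\bg_t}{\update_t}$ into $\dregret_n(\bu_n) + \sum_t\beta^{n-t}\inp{\bg_t}{\bu_n} + \frac{\mu}{2}\sum_t\beta^{n-t}(D^2-\norm{\update_t}^2)$. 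Now $\sum_{t=1}^n\beta^{n-t}\inp{\nabla F(\by_t)}{\bu_n} = \inp{\sum_{t=1}^n\beta^{n-t}\nabla F(\by_t)}{\bu_n} = -D\norm{\sum_{t=1}^n\beta^{n-t}\nabla F(\by_t)} = -D\frac{1-\beta^n}{1-\beta}\norm{\bby_n\text{'s gradient average}} = -D\frac{1-\beta^n}{1-\beta}\norm{\E_{\eby_n}\nabla F(\eby_n)}$, using $\E_{\eby_n}\nabla F(\eby_n) = \frac{1-\beta}{1-\beta^n}\sum_{t=1}^n\beta^{n-t}\nabla F(\by_t)$. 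Finally the stochastic error $\E\sum_t\beta^{n-t}\inp{\bg_t-\nabla F(\by_t)}{\bu_n-\update_t}$: the $\update_t$ part vanishes in expectation (noise independent of $\update_t$), and the $\bu_n$ part is bounded by $\sigma D \sqrt{\sum_t\beta^{2(n-t)}} \le \sigma D/\sqrt{1-\beta}$ via Cauchy–Schwarz and Jensen. Assembling these pieces yields exactly the claimed inequality; the bookkeeping on which randomness each quantity is measurable with respect to (so that cross terms vanish or are bounded correctly) is the only delicate part and should be handled by conditioning on the filtration generated by $\bg_{1:t-1}$ before drawing $\bg_t$.
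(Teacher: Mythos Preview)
Your proposal is correct and, after your mid-stream self-correction, follows essentially the same route as the paper's proof: express $F(\bw_t)-F(\bx_t)=\E_{s_t}\inp{\nabla F(\by_t)}{\update_t}$ via the fundamental theorem of calculus, then decompose $\inp{\nabla F(\by_t)}{\update_t}$ into $\inp{\nabla F(\by_t)}{\bu_n}$, a noise term $\inp{\nabla F(\by_t)-\bg_t}{-\bu_n}$ (handled by Cauchy--Schwarz and the martingale-difference variance calculation $\sum_t\beta^{2(n-t)}\sigma^2\le \sigma^2/(1-\beta)$), and $\inp{\bg_t}{\update_t-\bu_n}$, which the discounted-regret definition converts into $\dregret_n(\bu_n)+\sum_t\beta^{n-t}(-\tfrac{\mu}{2}\norm{\update_t}^2+\tfrac{\mu}{2}D^2)$. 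Your initial detour with time-varying comparators $\bu_t$ is unnecessary---as you correctly conclude, the single fixed comparator $\bu_n$ is what makes the $-D\frac{1-\beta^n}{1-\beta}\norm{\E_{\eby_n}\nabla F(\eby_n)}$ term emerge cleanly---and that is exactly what the paper does.
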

\begin{proof}
See \autoref{pf:lem:backbone}.
\end{proof}

By \autoref{lem:backbone}, it follows that 
\begin{align}
\textbf{\textup{(A)}}  &\leq  -D \E \sum_{n=1}^T (1-\beta^n) \norm{\E_{\eby_n}\nabla F(\eby_n)}  +\sigma D T \sqrt{1-\beta} + (1-\beta) \E \sum_{n=1}^T \dregret_n(\bu_n)\\
&\quad +(1-\beta) \E  \sum_{n=1}^T \sum_{t=1}^n   \beta^{n-t} ( - \frac{\mu}{2}\norm{\update_t}^2 + \frac{\mu}{2}D^2).
\end{align}
Note that the last term can be further simplified:
\begin{align*}
(1-\beta) \E  \sum_{n=1}^T\sum_{t=1}^n   \beta^{n-t} ( - \frac{\mu}{2}\norm{\update_t}^2 + \frac{\mu}{2}D^2) &= (1-\beta) \E  \sum_{t=1}^T\sum_{n=t}^T   \beta^{n-t} ( - \frac{\mu}{2}\norm{\update_t}^2 + \frac{\mu}{2}D^2)\\
&=  \E  \sum_{t=1}^T   (1- \beta^{T-t+1}) ( - \frac{\mu}{2}\norm{\update_t}^2 + \frac{\mu}{2}D^2).
\end{align*}
Next, again using \autoref{lem:backbone}, we get 
\begin{align*}
\textbf{\textup{(B)}}  &\leq  - D \frac{\beta-\beta^{T+1}}{1-\beta}  \E \norm{\E_{\eby_T}\nabla F(\eby_T)} +\frac{\sigma \beta D}{\sqrt{1-\beta}} \\
&\quad +\beta \E
[\dregret_T(\bu_T)]   +\E   \sum_{t=1}^T   \beta^{T-t+1} ( - \frac{\mu}{2}\norm{\update_t}^2 + \frac{\mu}{2}D^2).
\end{align*}
Combining the above two inequalities for \textbf{\textup{(A)}} and \textbf{\textup{(B)}}, and plugging them back to the original identity, we get the following inequality:
\begin{align*}
0&\leq -D \cdot \E\left[ \frac{\beta-\beta^{T+1}}{1-\beta}   \norm{\E_{\eby_T}\nabla F(\eby_T)} + \sum_{t=1}^T (1-\beta^t) \norm{\E_{\eby_t}\nabla F(\eby_t)} \right] \\
&\quad + \sigma D T \sqrt{1-\beta} + (1-\beta) \E \sum_{t=1}^T \dregret_t(\bu_t)  +\E  \sum_{t=1}^T   (1- \beta^{T-t+1}) ( - \frac{\mu}{2}\norm{\update_t}^2 + \frac{\mu}{2}D^2)\\
&\quad +\frac{\sigma \beta D}{\sqrt{1-\beta}}  +\beta \E
[\dregret_T(\bu_T)]   +\E   \sum_{t=1}^T   \beta^{T-t+1} ( - \frac{\mu}{2}\norm{\update_t}^2 + \frac{\mu}{2}D^2) + \sum_{t=1}^T (F(\bx_t) - F(\bw_t)).
\end{align*}
Hence, after rearranging, we get the following inequality:
\begin{align*}
&D \cdot \E\left[ \frac{\beta-\beta^{T+1}}{1-\beta}   \norm{\E_{\eby_T}\nabla F(\eby_T)} + \sum_{t=1}^N (1-\beta^t) \norm{\E_{\eby_t}\nabla F(\eby_t)} \right]   \\
&\quad \leq  (1-\beta) \E \sum_{t=1}^T \dregret_t(\bu_t)+ \beta \E
[\dregret_T(\bu_T)] + \sum_{t=1}^T (F(\bx_t) - F(\bw_t))   \\
&\quad\quad  +\sigma D T \sqrt{1-\beta}  +\frac{\sigma \beta D}{\sqrt{1-\beta}}  +   +\E   \sum_{t=1}^T     ( - \frac{\mu}{2}\norm{\update_t}^2 + \frac{\mu}{2}D^2).
\end{align*}
In order to simplify the left hand side, notice first that 
\begin{align*}
\frac{\beta-\beta^{T+1}}{1-\beta} + \sum_{t=1}^T (1-\beta^t) = T +  \frac{\beta-\beta^{T+1}}{1-\beta} -  \frac{\beta-\beta^{T+1}}{1-\beta} =T.
\end{align*}
Let $\tn$ be the random index among $[T]$ such that
\begin{align}
\Pr(\tn = t ) = \begin{cases}
\frac{1-\beta^t}{T} & \text{if}~~ t=1,\dots, T-1,\\
\frac{1-\beta^T +\frac{\beta(1-\beta^T)}{1-\beta}}{T} =  \frac{1}{1-\beta}\cdot\frac{1-\beta^T}{T}  & \text{if}~~ t=T.
\end{cases}
\end{align}
Then, 
it follows that 
\begin{align}
D \cdot \E\left[ \frac{\beta-\beta^{T+1}}{1-\beta}   \norm{\E_{\eby_T}\nabla F(\eby_T)} + \sum_{t=1}^T (1-\beta^t) \norm{\E_{\eby_t}\nabla F(\eby_t)} \right]  = DT\cdot     \E_{\tn }   \norm{\E_{\eby_{\tn}}\nabla F(\eby_{\tn})}  .
\end{align}
Plugging this back to the above inequality and dividing both side by $D T$, we get the desired inequality in \autoref{lem:master}.

\subsection{Proof of \autoref{lem:second_term}}
\label{pf:var_gen}

Similar to $\eby_n, \bby_n$, we  define the random iterates for the $\bx$-sequence:
\begin{itemize}
\item For each $n \in [T]$, define the random iterate $\ebx_n$ distributed over $\{\bx_n\}_{s=1}^n$ as:
\begin{align*}
\Pr(\ebx_n= \bx_s) = q_{n,s} =\beta^{n-s} \cdot \frac{1-\beta}{1-\beta^n} \quad \text{for $s=1,2,\dots, n$.}
\end{align*} 
\item Let $\bbx_n$ be defined as:
\begin{align}
\bbx_n \coloneqq \E_{\ebx_n}[\ebx_n] = \frac{1-\beta}{1-\beta^n} \cdot \sum_{s=1}^n\beta^{n-s} \bx_s.  
\end{align}
\end{itemize}
Given this notation, one can prove the following result.s
\begin{lemma}[Variance decomposition lemma]
\label{lem:var_decomp}
For each $n \in [T]$ the following holds:
\begin{align*}
\E_{\eby_{n}} \norm{\eby_{n} - \bby_{n}}^2 \leq 2\E_{\ebx_{n}} \norm{\ebx_{n} - \bbx_{n}}^2 +  2\sum_{s=1}^{n} q_{n,s} \norm{\by_s - \bx_s}^2.
\end{align*}
\end{lemma}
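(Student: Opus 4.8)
The plan is to prove the variance decomposition by exploiting the triangle-inequality structure that relates the $\by$-iterates to the $\bx$-iterates. The key observation is that $\by_s = \bx_s + s_s \update_s$, so $\by_s$ differs from $\bx_s$ only by the bounded perturbation $s_s\update_s$, and the weights $q_{n,s}$ used to form $\bby_n$ are exactly the same as those used to form $\bbx_n$. So I would first write $\E_{\eby_n}\norm{\eby_n - \bby_n}^2 = \sum_{s=1}^n q_{n,s}\norm{\by_s - \bby_n}^2$, and similarly note $\E_{\ebx_n}\norm{\ebx_n - \bbx_n}^2 = \sum_{s=1}^n q_{n,s}\norm{\bx_s - \bbx_n}^2$.

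Next I would insert $\bx_s$ and $\bbx_n$ as intermediate points: $\by_s - \bby_n = (\by_s - \bx_s) + (\bx_s - \bbx_n) + (\bbx_n - \bby_n)$. Applying the inequality $\norm{a+b}^2 \le 2\norm{a}^2 + 2\norm{b}^2$ appropriately — or more precisely, using that $\bby_n$ is the $q_{n,\cdot}$-weighted mean of the $\by_s$ and $\bbx_n$ the $q_{n,\cdot}$-weighted mean of the $\bx_s$, so that $\bby_n - \bbx_n = \sum_s q_{n,s}(\by_s - \bx_s)$. The cleanest route: since $\bby_n$ minimizes $\sum_s q_{n,s}\norm{\by_s - c}^2$ over $c$, we have $\sum_s q_{n,s}\norm{\by_s - \bby_n}^2 \le \sum_s q_{n,s}\norm{\by_s - \bbx_n}^2$. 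Then write $\by_s - \bbx_n = (\bx_s - \bbx_n) + (\by_s - \bx_s)$ and apply $\norm{a+b}^2 \le 2\norm{a}^2 + 2\norm{b}^2$ termwise, giving $\sum_s q_{n,s}\norm{\by_s - \bbx_n}^2 \le 2\sum_s q_{n,s}\norm{\bx_s - \bbx_n}^2 + 2\sum_s q_{n,s}\norm{\by_s - \bx_s}^2 = 2\E_{\ebx_n}\norm{\ebx_n - \bbx_n}^2 + 2\sum_s q_{n,s}\norm{\by_s - \bx_s}^2$, which is exactly the claim.

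The main thing to get right is the variance-minimization step — i.e., that replacing the true mean $\bby_n$ with the "wrong" center $\bbx_n$ only increases the weighted sum of squared deviations. This is the standard bias–variance identity $\sum_s q_{n,s}\norm{\by_s - c}^2 = \sum_s q_{n,s}\norm{\by_s - \bby_n}^2 + \norm{\bby_n - c}^2$ for any $c$, valid because $\sum_s q_{n,s} = 1$ and $\bby_n = \sum_s q_{n,s}\by_s$. Everything else is a routine application of $\norm{a+b}^2\le 2\norm a^2 + 2\norm b^2$, so I do not anticipate a real obstacle; the only care needed is to make sure the $q_{n,s}$ genuinely sum to one (they do, by the geometric-series normalization $\frac{1-\beta}{1-\beta^n}\sum_{s=1}^n \beta^{n-s}=1$) so that the mean-minimization property applies. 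This lemma then feeds directly into the proof of \autoref{lem:second_term} by bounding $\sum_s q_{n,s}\norm{\by_s - \bx_s}^2$ via $\norm{\by_s - \bx_s} = s_s\norm{\update_s} \le \norm{\update_s}$ and bounding $\E_{\ebx_n}\norm{\ebx_n - \bbx_n}^2$ in terms of the consecutive differences $\norm{\bx_t - \bx_{t-1}}^2$.
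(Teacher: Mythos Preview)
Your proof is correct and essentially matches the paper's argument: both routes split $\eby_n-\bby_n$ into an ``$\bx$-part'' and a ``$(\by-\bx)$-part'' and combine $\norm{a+b}^2\le 2\norm{a}^2+2\norm{b}^2$ with the bias--variance identity. The only cosmetic difference is the order of operations---the paper first decomposes and then applies $\E\norm{Z-\E Z}^2\le\E\norm{Z}^2$ to $Z=\by_{S_n}-\bx_{S_n}$, whereas you first invoke the mean-minimization form of the same identity to recenter at $\bbx_n$ and then decompose.
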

\begin{proof} See \autoref{pf:lem:var_decomp}.
\end{proof}
Applying \autoref{lem:var_decomp} to the random index $\tn$ results in 
\begin{align*}
\E_{\tn}\E_{\eby_{\tn}} \norm{\eby_{\tn} - \bby_{\tn}}^2 \leq 2 \E_{\tn}\E_{\ebx_{\tn}} \norm{\ebx_{\tn} - \bbx_{\tn}}^2 + 2\E_{\tn} \left[ 
\sum_{s=1}^{\tn} q_{\tn,s} \norm{\by_s - \bx_s}^2 \right].
\end{align*}
For upper bounding the first term, we use the following result.
\begin{lemma} 
\label{lem:var1}
The following holds:
\begin{align*}
\E_{\tn}  \E_{\eby_{\tn}} \norm{\ebx_{\tn} - \bbx_{\tn}}^2  \leq    \frac{2\beta }{(1-\beta)^2 T} \sum_{n=1}^T \norm{\bx_n-\bx_{n-1}}^2.
\end{align*}
\end{lemma}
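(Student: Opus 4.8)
\textbf{Proof plan for \autoref{lem:var1}.} The quantity to bound is $\E_{\tn}\E_{\ebx_{\tn}}\norm{\ebx_{\tn}-\bbx_{\tn}}^2$, which is the expected variance of the random iterate $\ebx_n$ around its mean $\bbx_n$, averaged over the random stopping index $\tn$. The natural starting point is to expand the inner variance using the independence-from-mean identity: for fixed $n$,
\begin{align*}
\E_{\ebx_n}\norm{\ebx_n - \bbx_n}^2 = \sum_{s=1}^n q_{n,s}\norm{\bx_s - \bbx_n}^2 = \sum_{s=1}^n q_{n,s}\norm{\bx_s}^2 - \norm{\bbx_n}^2,
\end{align*}
but more useful here is to write $\bx_s - \bbx_n$ as a $q_{n,\cdot}$-weighted combination of telescoping differences $\bx_j - \bx_{j-1}$, since the stability definition \autoref{def:stability} controls exactly $\sum\norm{\bx_t-\bx_{t-1}}^2$. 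So first I would express, for $s\le n$, the deviation $\bx_s - \bbx_n = \sum_{r}(\text{coefficient})\,(\bx_r - \bx_{r-1})$ and apply Jensen / Cauchy-Schwarz to bound $\E_{\ebx_n}\norm{\ebx_n-\bbx_n}^2$ by a weighted double sum of $\norm{\bx_r-\bx_{r-1}}^2$ with geometric weights coming from the $\beta^{n-s}$ factors.

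The second step is to combine this with the distribution of $\tn$. Since $\Pr(\tn=t) = p_t$ with $p_t = (1-\beta^t)/T$ for $t<T$ and $p_T = \frac{1}{1-\beta}\cdot\frac{1-\beta^T}{T}$, multiplying the per-$n$ bound by $p_n$ and summing over $n\in[T]$ should collapse the nested geometric sums. The key arithmetic facts I expect to need are the standard identities $\sum_{n\ge r}\beta^{n-r} = \frac{1}{1-\beta}$ and $\sum_{k\ge 0}(k+1)\beta^k = \frac{1}{(1-\beta)^2}$ (or finite-horizon truncations thereof, bounded by these infinite sums), together with the observation that $p_n \le \frac{1}{(1-\beta)T}$ uniformly; the factor $\frac{1}{(1-\beta)^2}$ in the final bound and the $\frac{\beta}{(1-\beta)^2 T}$ prefactor strongly suggest that after interchanging the order of summation (sum over the telescoping index $r$ on the outside), each term $\norm{\bx_r - \bx_{r-1}}^2$ picks up a total weight of order $\frac{1}{T}\cdot\frac{1}{(1-\beta)^2}$, with the extra $\beta$ appearing because the $r=n$ diagonal contributes nothing (the coefficient of $\bx_n - \bx_{n-1}$ in $\bx_s - \bbx_n$ vanishes when $s=n$ in the sense that it is weighted by $1-q_{n,n}$-type factors, pulling out a $\beta$).

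The main obstacle I anticipate is the bookkeeping of the geometric coefficients: writing $\bx_s - \bbx_n = \sum_{j=s+1}^{n}(\cdot)(\bx_j - \bx_{j-1}) - \sum_{j=?}(\cdot)(\cdot)$ correctly, since $\bbx_n$ itself is a weighted average and the differences $\bx_s - \bbx_n$ can run in both directions depending on whether $s$ is above or below the ``center of mass'' of the weights $q_{n,\cdot}$. A clean way to handle this is to use the alternative representation $\E_{\ebx_n}\norm{\ebx_n - \bbx_n}^2 = \frac12\sum_{s,s'} q_{n,s}q_{n,s'}\norm{\bx_s - \bx_{s'}}^2$, then bound $\norm{\bx_s-\bx_{s'}}^2 \le |s-s'|\sum_{j}\norm{\bx_j-\bx_{j-1}}^2$ (sum over $j$ between $s$ and $s'$) via Cauchy-Schwarz, and finally swap sums so the geometric weights on $(s,s')$ get absorbed. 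I would keep the finite horizon bounds loose and replace truncated geometric series by their infinite-sum upper bounds throughout, checking only that the resulting constant matches the claimed $\frac{2\beta}{(1-\beta)^2}$; if a slightly worse absolute constant appears it would just propagate harmlessly, but I expect the stated constant to come out exactly from the $\sum(k+1)\beta^k$ identity.
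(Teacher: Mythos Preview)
Your plan matches the paper's proof almost exactly: the paper also uses the independent-copy representation $\E_{\ebx_n}\norm{\ebx_n-\bbx_n}^2\le \E\norm{\ebx_n-\widehat{\ebx}_n}^2=2\sum_{i>j}q_{n,i}q_{n,j}\norm{\bx_i-\bx_j}^2$, then Cauchy--Schwarz on the telescoping differences, swaps the order of summation to isolate each $\norm{\bx_t-\bx_{t-1}}^2$ with coefficient $\lambda_{n,t}\coloneqq\sum_{i\ge t}\sum_{j<t}q_{n,i}q_{n,j}(i-j)$, and finally averages over the law of $\tn$. Your worry about ``bookkeeping of the geometric coefficients'' is precisely what the paper spends most of its effort on: it proves the closed-form bound $\lambda_{n,t}\le \frac{(n-t+1)\beta^{n-t+1}}{1-\beta^n}$ by direct (and somewhat lengthy) calculation before combining with $p_n$.

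One point deserves a warning. You propose to use the uniform estimate $p_n\le \frac{1}{(1-\beta)T}$ and to replace truncated geometric sums by their infinite tails, expecting at worst ``a slightly worse absolute constant.'' That is not quite right here: the power of $(1-\beta)$, not just the numerical constant, is at stake. The paper's argument relies on the \emph{exact} form $p_n=\frac{1-\beta^n}{T}$ for $n<T$ to cancel the residual $\frac{1}{1-\beta^n}$ factor in the $\lambda_{n,t}$ bound; only after this cancellation does the remaining sum $\sum_k k\beta^k=\frac{\beta}{(1-\beta)^2}$ deliver the claimed $(1-\beta)^{-2}$. If you instead bound $p_n$ uniformly and $\frac{1}{1-\beta^n}\le\frac{1}{1-\beta}$, you pick up an extra $(1-\beta)^{-1}$, and the resulting $(1-\beta)^{-3}$ would propagate into the condition on $\mu$ in \autoref{lem:master} and spoil the optimal rate in \autoref{thm:opt}. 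So keep the precise form of $p_n$ when you swap sums; the cancellation with the normalization in $q_{n,s}$ is not cosmetic.
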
 
\begin{proof}
See \autoref{pf:lem:var1}.
\end{proof}

Using \autoref{lem:var1} for the first term on the RHS, we obtain:
\begin{align*}
2 \E_{\tn}\E_{\ebx_{\tn}} \norm{\ebx_{\tn} - \bbx_{\tn}}^2 \leq 
\frac{4\beta }{(1-\beta)^2 T} \sum_{n=1}^T \norm{\bx_n-\bx_{n-1}}^2.
\end{align*}
Expanding the expectation for the second term, gives us:
\begin{align*}
2\E_{\tn} \left[
\sum_{s=1}^{\tn} q_{\tn,s} \norm{\bw_s - \bx_s}^2 \right] &= 
2 \sum_{n=1}^{T} \sum_{s=1}^{n} p_n q_{n,s} \norm{\bw_s - \bx_s}^2 \\
&= 2\sum_{s=1}^{T} \left (\sum_{n=s}^{T} p_n q_{n,s} \right) \norm{\bw_s - \bx_s}^2.
\end{align*}
Finally, plugging in the values of $p_{n}$ and $q_{n,s}$, we get:
\begin{align*}
\sum_{n=s}^{T} p_n q_{n,s} &= \sum_{n=s}^{T-1} \left(p_n q_{n,s} \right) + p_T q_{T,s}\\
&= \sum_{n=s}^{T-1} \left(\frac{1-\beta^n}{T} \cdot \left(\beta^{n-s} \frac{1-\beta}{1-\beta^n}\right)\right) +  \frac{1}{1-\beta} \frac{1-\beta^T}{T} \cdot \left( \beta^{T-s} \cdot \frac{1-\beta}{1-\beta^T} \right) \\
&= \frac{1}{T} \sum_{n=s}^{T-1} \beta^{n-s}(1-\beta) + \frac{\beta^{T-s}}{T} \\
&= \frac{1-\beta^{T-s}}{T}+\frac{\beta^{T-s}}{T} = \frac{1}{T}.
\end{align*}
Putting these two upper bounds together and using the fact that 
$\E\norm{\bw_s-\bx_s}^2 =  \E\norm{\update_s}^2$, our final bound follows.

\subsection{Proofs of technical lemmas}

\subsubsection{Proof of \autoref{lem:backbone}}
\label{pf:lem:backbone}

For an index $t$, let us first consider the term $\E[F(\bw_t) - F(\bx_t)]$. 
By \autoref{assump} and the fact that $\bw_t - \bx_t = \update_t$, together with the fact that $\by_{t} = \bx_{t} + s_t \update_{t}$ for $s_{t} \sim \mathrm{Unif}[0,1]$ (see \autoref{alg:general}), we have the following:
\begin{align*}
\E_{s_t}[F(\bw_t) - F(\bx_t)] &= \E_{s_t} \langle \nabla F(\by_t), \update_t \rangle.
\end{align*}

On the other hand, for $n\geq t$, it holds under the randomness over stochastic gradient $\bg_t$ that 
\begin{align*}
\E_{\bg_t} 
\langle \nabla F(\by_t), \update_t \rangle &= \E_{\bg_t} 
\langle \nabla F(\by_t), \bu_n \rangle +  \E_{\bg_t} 
\langle \nabla F(\by_t), \update_t - \bu_n \rangle \\ 
&= \E_{\bg_t} 
\langle \nabla F(\by_t), \bu_n \rangle +  \E_{\bg_t} 
\langle \nabla F(\by_t) -\bg_t, \update_t - \bu_n \rangle + \langle  \bg_t, \update_t - \bu_n \rangle \\ 
&= \E_{\bg_t} 
\langle \nabla F(\by_t), \bu_n \rangle +
\E_{\bg_t} \langle \nabla F(\by_t) - \bg_t , -\bu_n \rangle +
\E_{\bg_t} \langle \bg_t, \update_t - \bu_n \rangle ,
\end{align*}
where the last line follows since
\begin{align*}
    \E_{\bg_t} 
\langle \nabla F(\by_t) -\bg_t, \update_t \rangle =  
\langle \E_{\bg_t}[\nabla F(\by_t) -\bg_t], \update_t \rangle =0.
\end{align*}

Upon taking the overall expectation, we have
\begin{align} \label{exp:diff}
\E[F(\bw_t) - F(\bx_t)] = \E   \left[
\langle \nabla F(\by_t), \bu_n \rangle +
\langle \nabla F(\by_t) - \bg_t , -\bu_n \rangle +
\langle \bg_t, \update_t - \bu_n \rangle \right].
\end{align}
Hence, it follows that
\begin{align}
\E\sum_{t=1}^{n} \beta^{n-t}(F(\bw_{t}) - F(\bx_{t}))  = \E\sum_{t=1}^{n} \beta^{n-t} \left[ 
\langle \nabla F(\by_t), \bu_n \rangle +
\langle \nabla F(\by_t) - \bg_t , -\bu_n \rangle +
\langle \bg_t, \update_t - \bu_n \rangle\right].
\end{align}
Let us consider each term on the right hand side one by one:
\begin{itemize}
\item Using the definition $\bu_n \coloneqq -D \frac{\sum_{t=1}^n\beta^{n-t}\nabla F(\by_t)}{\norm{\sum_{t=1}^n\beta^{n-t}\nabla F(\by_t)}}$, the first term can be expressed as:
\begin{align}
\E\sum_{t=1}^n \beta^{n-t}  \inp{\nabla F(\by_t
)}{\bu_n}  &=  \E \inp{\sum_{t=1}^n \beta^{n-t}\nabla F(\by_t
)}{-D\frac{\sum_{t=1}^n \beta^{n-t}\nabla F(\by_t
)}{\norm{\sum_{t=1}^n \beta^{n-t}\nabla F(\by_t
)}} } \\
&= -D \E \norm{\sum_{t=1}^n \beta^{n-t}\nabla F(\by_t
)} =    -D \frac{1-\beta^n}{1-\beta}  \E \norm{\E_{\eby_n}\nabla F(\eby_n)} \,.
\end{align} 
\item  For the second term, using Cauchy-Schwartz inequality, we have
\begin{align}
\E\sum_{t=1}^n \beta^{n-t}   \inp{\nabla F(\by_t
) - \bg_t}{-\bu_n} \leq \sqrt{ \E \norm{\sum_{t=1}^n \beta^{n-t}   (\nabla F(\by_t
) - \bg_t)}^2 \E\norm{\bu_n}^2 }\,.
\end{align}
Using the bounded variance assumption on the stochastic gradient oracle, we have
\begin{align}
\E \norm{\sum_{t=1}^n \beta^{n-t}   (\nabla F(\by_t
) - \bg_t)}^2 = \E \sum_{t=1}^n \beta^{2(n-t)}   \norm{\nabla F(\by_t
) - \bg_t}^2  \leq \frac{\sigma^2}{1-\beta^2} \leq \frac{\sigma^2}{1-\beta}\,.
\end{align}
where we used the fact $\frac{1}{1-\beta^2}\leq \frac{1}{1-\beta}$.
Thus, it holds that 
\begin{align*}
\E\sum_{t=1}^n \beta^{n-t}   \inp{\nabla F(\by_t
) - \bg_t}{-\bu_n} \leq \frac{\sigma D}{\sqrt{1-\beta}}. 
\end{align*} 

\item  For the third term, we have 
\begin{align*}
\E  \sum_{t=1}^{n} \beta^{n-t}  
\langle \bg_{t}, \update_{t} - \bu_n \rangle &= \E
[\dregret_n(\bu_n)]   +\E   \sum_{t=1}^n   \beta^{n-t} ( - \frac{\mu}{2}\norm{\update_t}^2 + \frac{\mu}{2}\norm{\bu_n}^2)\\
& = \E
[\dregret_n(\bu_n)]   +\E   \sum_{t=1}^n   \beta^{n-t} ( - \frac{\mu}{2}\norm{\update_t}^2 + \frac{\mu}{2}D^2).
\end{align*} 
\end{itemize}
This completes the proof of \autoref{lem:backbone}. \

\subsubsection{Proof of \autoref{lem:var_decomp}}
\label{pf:lem:var_decomp}

Let $S_n$ be a random index distributed over $1, 2, \dots, n$ such that $\Pr(S_n = s) = q_{n,s}$.
Moreover, let $\tS_n$ be an independent copy of $S_n$.
Then, it follows that   
\begin{align*}
\E_{\eby_{n}} \norm{\eby_{n} - \bby_{n}}^2 &= 
\E_{S_n} \norm{\by_{S_n} -\E_{\tS_n} [\by_{\tS_n}]}^2\\ 
&= \E_{S_n} \norm{\bx_{S_n} -\E_{\tS_n} [\bx_{\tS_n}] + \by_{S_n} - \bx_{S_n} - \E_{\tS_n}[\by_{\tS_n} - \bx_{\tS_n}]}^2 \\
&\leq 2\E_{S_n} \norm{\bx_{S_n} -\E_{\tS_n}[\bx_{\tS_n}]}^2 + 2 \E_{S_n} \norm{\by_{S_n} - \bx_{S_n} - \E_{\tS_n}[\by_{\tS_n} - \bx_{\tS_n}]}^2 \\
&\leq 2\E_{S_n} \norm{\bx_{S_n} -\E_{\tS_n}[\bx_{\tS_n}]}^2 + 2 \E_{S_n} \norm{\by_{S_n} - \bx_{S_n}}^2 \\
&=2\E_{\ebx_{n}} \norm{\ebx_{n} - \bbx_{n}}^2 +  2\sum_{s=1}^{n} q_{n,s} \norm{\by_s - \bx_s}^2.
\end{align*}
where for the first bound we have used $\norm{x+y}^2 \leq 2\norm{x}^2 + 2\norm{y}^2$, and for the second one we applied $\E \norm{\ebx - \E \ebx}^2 = \E \norm{\ebx}^2 - \norm{\E \ebx}^2 \leq \E \norm{\ebx}^2$.
This completes the proof.

\subsubsection{Proof of \autoref{lem:var1}}

\label{pf:lem:var1}

We start with the following two results.
\begin{proposition} \label{prop1:var1}
For each $n \in [T]$, it holds that 
\begin{align*}
\E_{\ebx_n} \norm{\ebx_n - \bbx_n}^2 \leq 2\sum_{t=1}^n \lambda_{n,t} \norm{\bx_{t}- \bx_{t-1}}^2, \quad \text{where}~\lambda_{n,t} \coloneqq\sum_{i=t}^n\sum_{j=1}^{t-1}  q_{n,i} q_{n,j}  (i-j) .
\end{align*} 
\end{proposition}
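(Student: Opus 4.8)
The plan is to rewrite the deviation $\ebx_n-\bbx_n$ as an average of pairwise differences $\bx_i-\bx_j$, collapse each such difference into consecutive increments $\bx_t-\bx_{t-1}$ via telescoping, and finally swap the order of summation to read off the coefficient of each increment.

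First I would introduce the random index $S_n$ with $\Pr(S_n=s)=q_{n,s}$, so that $\ebx_n=\bx_{S_n}$ and $\bbx_n=\sum_{j=1}^n q_{n,j}\bx_j$. Since $\sum_{j=1}^n q_{n,j}=1$, I can write $\ebx_n-\bbx_n=\sum_{j=1}^n q_{n,j}(\bx_{S_n}-\bx_j)$ and apply Jensen's inequality to $\norm{\cdot}^2$, obtaining $\norm{\ebx_n-\bbx_n}^2\le\sum_{j=1}^n q_{n,j}\norm{\bx_{S_n}-\bx_j}^2$. Taking the expectation over $S_n$ then gives
\[
\E_{\ebx_n}\norm{\ebx_n-\bbx_n}^2\le\sum_{i=1}^n\sum_{j=1}^n q_{n,i}q_{n,j}\norm{\bx_i-\bx_j}^2 .
\]
This cruder bound, summing over all \emph{ordered} pairs rather than unordered ones (the clean variance identity would give half of this), is exactly what produces the factor $2$ in the statement.

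Next, for $i\neq j$ I would telescope $\bx_i-\bx_j=\sum_{t=\min(i,j)+1}^{\max(i,j)}(\bx_t-\bx_{t-1})$, a sum of $|i-j|$ increments, and apply Cauchy--Schwarz (equivalently, convexity of $\norm{\cdot}^2$) to get $\norm{\bx_i-\bx_j}^2\le|i-j|\sum_{t=\min(i,j)+1}^{\max(i,j)}\norm{\bx_t-\bx_{t-1}}^2$; the case $i=j$ is trivial. Substituting this and swapping the order of summation, for a fixed $t\in\{1,\dots,n\}$ the increment $\norm{\bx_t-\bx_{t-1}}^2$ collects the weight $\sum_{i,j\,:\,\min(i,j)<t\le\max(i,j)}q_{n,i}q_{n,j}|i-j|$, which by symmetry of $q_{n,i}q_{n,j}|i-j|$ equals $2\sum_{i=t}^n\sum_{j=1}^{t-1}q_{n,i}q_{n,j}(i-j)=2\lambda_{n,t}$ (with $\lambda_{n,1}=0$, since no index $j\le 0$ exists). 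This yields $\E_{\ebx_n}\norm{\ebx_n-\bbx_n}^2\le 2\sum_{t=1}^n\lambda_{n,t}\norm{\bx_t-\bx_{t-1}}^2$, as claimed.

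The only genuinely delicate step is the last one: carefully characterizing, for each fixed $t$, the set of index pairs $(i,j)$ whose telescoped expansion of $\bx_i-\bx_j$ contains the increment $\bx_t-\bx_{t-1}$ (namely those with one index $\ge t$ and the other $\le t-1$), and verifying that the symmetry doubling combined with this constraint reproduces precisely the stated definition $\lambda_{n,t}=\sum_{i=t}^n\sum_{j=1}^{t-1}q_{n,i}q_{n,j}(i-j)$. The remaining inequalities (Jensen and Cauchy--Schwarz) are entirely routine.
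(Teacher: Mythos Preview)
Your proposal is correct and follows essentially the same approach as the paper. The only cosmetic difference is in the first step: the paper phrases the Jensen step via an independent copy $\widehat{\ebx}_n$ (bounding the variance by $\E_{\ebx_n,\widehat{\ebx}_n}\norm{\ebx_n-\widehat{\ebx}_n}^2$), whereas you write $\bbx_n$ explicitly as a convex combination and apply Jensen directly; both yield the identical intermediate bound $\sum_{i,j} q_{n,i}q_{n,j}\norm{\bx_i-\bx_j}^2$, after which the telescoping, Cauchy--Schwarz, and summation swap proceed exactly as you describe.
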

\begin{proof}[{\bf Proof of \autoref{prop1:var1}.}]
Let us fix $n \in [T]$. 
Let $\widehat{\ebx}_n$ be an independent copy of $\ebx_n$. Then, by Jensen's inequality, we get:
\begin{align}
\E_{\ebx_n} \norm{\ebx_n - \bbx_n}^2  &= 
\E_{\ebx_n} \norm{\ebx_n - \E_{\widehat{\ebx}_n}[\widehat{\ebx}_n]}^2  \leq  
\E_{\ebx_n, \widehat{\ebx}_n} \norm{\ebx_n - \widehat{\ebx}_n}^2  \\
&= 2 \sum_{i=1}^n \sum_{j=1}^{i-1} q_{n,i} q_{n,j} \norm{\bx_i- \bx_{j}}^2\,.  
\end{align}
Next, using Cauchy-Schwartz and triangle inequality, we get:
\begin{align*}
\norm{\bx_i- \bx_{j}}^2 \leq \left(\sum_{t=j+1}^i \norm{\bx_{t}-\bx_{t-1}}\right)^2 \leq (i-j) \sum_{t=j+1}^i \norm{\bx_t-\bx_{t-1}}^2.
\end{align*}
Hence, it follows that
\begin{align*}
\E_{\ebx_n} \norm{\ebx_n - \bbx_n}^2 &\leq 2 \sum_{i=1}^n \sum_{j=1}^{i-1}  \sum_{t=j+1}^i q_{n,i} q_{n,j}  (i-j)  \norm{\bx_t-\bx_{t-1}}^2 \\
&= 2 \sum_{t=1}^n \left(\sum_{i=t}^n\sum_{j=1}^{t-1}  q_{n,i} q_{n,j}  (i-j)  \right) \norm{\bx_t-\bx_{t-1}}^2 
\end{align*}
This completes the proof.    
\end{proof}

\begin{proposition} \label{prop2:var1}  
For $1\leq t\leq n\leq T$, it holds that 
\begin{align*}
\lambda_{n,t} \leq  \frac{(n-t+1) \beta^{n-t+1}}{1-\beta^n}.
\end{align*} 
\end{proposition}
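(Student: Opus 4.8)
The plan is to estimate the double sum $\lambda_{n,t} = \sum_{i=t}^n\sum_{j=1}^{t-1} q_{n,i} q_{n,j}(i-j)$ by splitting the weight $(i-j)$ as $(i-j) = (n-j) - (n-i) \le n-j$, or more carefully by exploiting the geometric structure of $q_{n,s} = \beta^{n-s}\frac{1-\beta}{1-\beta^n}$. First I would write $\lambda_{n,t} = \left(\frac{1-\beta}{1-\beta^n}\right)^2 \sum_{i=t}^n\sum_{j=1}^{t-1}\beta^{n-i}\beta^{n-j}(i-j)$ and change variables to $a = n-i \in [0, n-t]$ and $b = n-j \in [n-t+1, n-1]$, so that $i - j = b - a$ and the sum becomes $\left(\frac{1-\beta}{1-\beta^n}\right)^2 \sum_{a=0}^{n-t}\sum_{b=n-t+1}^{n-1}\beta^{a+b}(b-a)$. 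Since $b \ge n-t+1 > n-t \ge a$ throughout, we have $0 < b - a$, and I would bound $b - a \le b$ (dropping the $-a$), giving $\lambda_{n,t} \le \left(\frac{1-\beta}{1-\beta^n}\right)^2 \left(\sum_{a=0}^{n-t}\beta^a\right)\left(\sum_{b=n-t+1}^{\infty} b\,\beta^b\right)$.

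The remaining step is to bound the two one-dimensional sums. For the first, $\sum_{a=0}^{n-t}\beta^a = \frac{1-\beta^{n-t+1}}{1-\beta} \le \frac{1}{1-\beta}$. For the tail sum $\sum_{b=n-t+1}^{\infty} b\beta^b$, I would use the standard identity $\sum_{b=m}^\infty b\beta^b = \frac{\beta^m(m - (m-1)\beta)}{(1-\beta)^2} = \frac{\beta^m}{(1-\beta)^2}\big(m(1-\beta) + \beta\big)$ with $m = n-t+1$; this is at most $\frac{\beta^{n-t+1}}{(1-\beta)^2}\big((n-t+1)(1-\beta) + 1\big) \le \frac{\beta^{n-t+1}(n-t+2)}{(1-\beta)^2}$ crudely, but to land exactly on the stated bound $\frac{(n-t+1)\beta^{n-t+1}}{1-\beta^n}$ I expect a slightly tighter accounting is needed — possibly keeping the $-a$ term or grouping $\sum_a \beta^a \cdot \sum_b b\beta^b$ more carefully so that the $(1-\beta)$ factors cancel correctly against the $\left(\frac{1-\beta}{1-\beta^n}\right)^2$ prefactor, leaving one factor of $\frac{1}{1-\beta^n}$ and a clean $(n-t+1)\beta^{n-t+1}$.

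Combining, the $\left(\frac{1-\beta}{1-\beta^n}\right)^2$ prefactor meets a $\frac{1}{1-\beta}$ from the first sum and a $\frac{1}{(1-\beta)^2}$-type factor from the tail sum; matching powers of $(1-\beta)$ should leave exactly $\frac{1}{1-\beta^n}$ times a linear-in-$(n-t+1)$ polynomial in $\beta^{n-t+1}$, which one then checks is dominated by $\frac{(n-t+1)\beta^{n-t+1}}{1-\beta^n}$. The main obstacle I anticipate is purely the bookkeeping: getting the constants to collapse to exactly the claimed form rather than a loose multiple of it, which will likely require being careful not to discard the $-a$ in $b-a$ too early, or equivalently writing $\sum_{a,b}\beta^{a+b}(b-a)$ as an exact closed form before bounding. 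Once the exact evaluation is in hand, the inequality $\le \frac{(n-t+1)\beta^{n-t+1}}{1-\beta^n}$ follows by elementary monotonicity in $\beta \in (0,1)$.
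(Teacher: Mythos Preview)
Your overall plan—write $\lambda_{n,t}$ as a geometric double sum, evaluate it in closed form, then drop a negative piece—is exactly the paper's approach. Two remarks. First, your intermediate crude bound (drop the $-a$ in $b-a$ and extend the $b$-sum to infinity) really does fail, not merely in the constant: it leaves an extra factor of order $\tfrac{1}{1-\beta}$, producing roughly $\tfrac{\beta^{n-t+1}}{(1-\beta^n)^2}\big((n-t+1)+\tfrac{\beta}{1-\beta}\big)$, which is not dominated by the target. So you are right that the exact evaluation is required. Second, the endgame is not ``monotonicity in $\beta$'': the paper computes the double sum exactly (via derivative-of-geometric-series identities) and arrives at, with $a=n-t+1$ and $b=n$,
\[
\lambda_{n,t}=\frac{a\beta^{a}(1-\beta^{b})-b\beta^{b}(1-\beta^{a})}{(1-\beta^{n})^{2}},
\]
after which the bound follows simply by discarding the second (nonpositive) term and cancelling one factor of $1-\beta^{n}$. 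Your sketch becomes a proof once this exact computation is carried out.
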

\begin{proof}[{\bf Proof of \autoref{prop2:var1}.}]
In order to prove the desired upper bound, let us first simplify the expression for $\lambda_{n,t}$ as follows:
\begin{align*}
\lambda_{n,t} &= \left(\frac{1-\beta}{1-\beta^n}\right)^2 \cdot \sum_{i=t}^n\sum_{j=1}^{t-1}  \beta^{n-i} \beta^{n-j} (i-j)\\
&= \left(\frac{1-\beta}{1-\beta^n}\right)^2 \cdot \sum_{i=t}^n\sum_{\delta=i-t+1}^{i-1}  \beta^{n-i} \beta^{n-i+\delta} \delta &&(\delta\coloneqq i-j)\\
&= \left(\frac{1-\beta}{1-\beta^n}\right)^2 \cdot \sum_{k=0}^{n-t}\sum_{\delta=n-k-t+1}^{n-k-1}  \beta^{k} \beta^{k+\delta} \delta &&(k\coloneqq n-i)\\
&= \left(\frac{1-\beta}{1-\beta^n}\right)^2 \cdot \sum_{k=0}^{n-t}\left[\beta^{2k}\sum_{\delta=n-k-t+1}^{n-k-1}   \delta\beta^{\delta}  \right].
\end{align*}
Next, let us compute the inner summation:
\begin{align*}
\sum_{\delta=n-k-t+1}^{n-k-1}   \delta\beta^{\delta} &=  \beta \cdot \frac{\D}{\D\beta} \sum_{\delta=n-k-t+1}^{n-k-1}  \beta^{\delta} = \beta \cdot \frac{\D}{\D\beta} \left( \frac{\beta^{n-k-t+1} - \beta^{n-k}}{1-\beta}
\right) \\
&= \frac{\beta^{a-k+1} - \beta^{b-k+1}}{(1-\beta)^2} + \frac{(a-k) \beta^{a-k} -(b-k) \beta^{b-k}}{1-\beta}. &&(a \coloneqq n-t+1,~ b\coloneqq n)
\end{align*}
Substituting this back to the above expression for $\lambda_{n,t}$, we have
\begin{align*}
\lambda_{n,t}  &= \left(\frac{1-\beta}{1-\beta^n}\right)^2 \cdot \sum_{k=0}^{n-t} \beta^{2k} \left[ \frac{\beta^{a-k+1} - \beta^{b-k+1}}{(1-\beta)^2} + \frac{(a-k) \beta^{a-k} -(b-k) \beta^{b-k}}{1-\beta} \right]\\
&= \left(\frac{1-\beta}{1-\beta^n}\right)^2 \cdot \sum_{k=0}^{n-t}   \left[ \left(\frac{\beta^{a+1} -\beta^{b+1}}{(1-\beta)^2} + \frac{a\beta^a -b\beta^b}{1-\beta}\right) \beta^k   - \frac{\beta^a -\beta^b}{1-\beta}k\beta^k \right].
\end{align*}
Now one can compute the first and second terms above using the following identities:
\begin{itemize}
\item First term: note that $\sum_{k=0}^{n-t} \beta^k = \frac{1-\beta^{n-t+1}}{1-\beta} = \frac{1-\beta^a}{1-\beta}$.
\item Second term: note that $\sum_{k=0}^{n-t} k\beta^k = \beta \frac{\D}{\D \beta} \left( \frac{1-\beta^a}{1-\beta}\right) =\frac{\beta -\beta^{a+1}}{(1-\beta)^2} - \frac{a\beta^a}{1-\beta}$.
\end{itemize}
Thus, it follows that
\begin{align*}
\lambda_{n,t}   
&= \left(\frac{1-\beta}{1-\beta^n}\right)^2 \cdot   \left[ \left(\frac{\beta^{a+1} -\beta^{b+1}}{(1-\beta)^2} + \frac{a\beta^a -b\beta^b}{1-\beta}\right)\cdot  \frac{1-\beta^a}{1-\beta}   - \frac{\beta^a -\beta^b}{1-\beta} \cdot \left( \frac{\beta -\beta^{a+1}}{(1-\beta)^2} - \frac{a\beta^a}{1-\beta}\right)  \right]\\
&= \frac{ (\beta^{a+1}-\beta^{b+1}) \frac{1-\beta^a}{1-\beta} + (a \beta^a -b\beta^b)(1-\beta^a) - (\beta^{a+1}-\beta^{b+1}) \frac{1-\beta^a}{1-\beta }  + (\beta^a-\beta^b) a\beta^a}{(1-\beta^n)^2}\\
&= \frac{   (a \beta^a -b\beta^b)(1-\beta^a)  + (\beta^a-\beta^b) a\beta^a}{(1-\beta^n)^2} =  \frac{  a\beta^a (1-\beta^b) -b\beta^b(1-\beta^a)}{(1-\beta^n)^2} \leq \frac{  a\beta^a (1-\beta^b)  }{(1-\beta^n)^2}.
\end{align*}
Upon plugging back the definition of $a$ and $b$, this completes the proof of \autoref{prop2:var1}.
\end{proof}

For the remainder of the proof, let us use the following notation for simplicity:
\begin{align}
\delta_t \coloneqq \norm{\bx_{t} - \bx_{t-1}}.
\end{align}
Combining \autoref{prop1:var1} and \autoref{prop2:var1}, it follows that 
\begin{align} \label{bound:variance_n}
\E_{\ebx_n} \norm{\ebx_n - \bbx_n}^2 \leq 2\sum_{t=1}^n \frac{(n-t+1) \beta^{n-t+1}}{1-\beta^n} \delta_t^2.
\end{align}
With this bound, one can upper bound the overall averaged variance term as follows: 
\begin{align}
& \E_{\tn}  \E_{\ebx_{\tn}} \norm{\ebx_{\tn} - \bbx_{\tn}}^2 =  \sum_{n=1}^{T-1} \left[ p_n \E_{\ebx_n} \norm{\ebx_n - \bbx_n}^2 \right] + p_T \E_{\ebx_T} \norm{\ebx_T - \bbx_T}^2\\
&\quad\leq 2\sum_{n=1}^{T-1} \left[ \frac{1-\beta^n}{T} \sum_{t=1}^n \frac{(n-t+1) \beta^{n-t+1}}{1-\beta^n} \delta_t^2\right] +  \frac{2(1-\beta^T)}{(1-\beta)T}\sum_{n=1}^T \frac{(T-n+1) \beta^{T-n+1}}{1-\beta^T} \delta_n^2\\
&\quad \leq \frac{2}{T}\sum_{n=1}^{T-1}  \sum_{t=1}^n \left[   (n-t+1) \beta^{n-t+1}  \delta_t^2\right] +  \frac{2}{(1-\beta)T}\sum_{n=1}^T  (T-n+1) \beta^{T-n+1}  \delta_n^2.
\end{align}
For the first term above, note that
\begin{align*}
\sum_{n=1}^{T-1}  \sum_{t=1}^n \left[   (n-t+1) \beta^{n-t+1}  \delta_t^2\right] &= \sum_{t=1}^{T-1} \left[ \delta_t^2 \sum_{n=t}^{T-1}  (n-t+1) \beta^{n-t+1}  \right]\\
&= \sum_{t=1}^{T-1} \left[ \delta_t^2 \sum_{k=1}^{T-t}  k \beta^k  \right]\\
&= \sum_{t=1}^{T-1} \left[ \delta_t^2 \beta \frac{\D}{\D\beta}\left( \sum_{k=1}^{T-t}   \beta^k  \right)  \right]= \sum_{t=1}^{T-1} \left[ \delta_t^2 \beta \frac{\D}{\D\beta}\left( \frac{\beta -\beta^{T-t+1}}{1-\beta} \right)  \right]\\
&= \sum_{t=1}^{T-1} \left[ \delta_t^2    \left( \frac{\beta^2 -\beta^{T-t+2}}{(1-\beta)^2} + \frac{\beta -(T-t+1)\beta^{T-t+1}}{1-\beta}  \right)  \right] \\
&=  \sum_{n=1}^{T-1} \left(  \frac{\beta^2 (1- \beta^{T-n})}{(1-\beta)^2}
+ \frac{\beta}{1-\beta}  -\frac{(T-n+1)\beta^{T-n+1}}{1-\beta}      \right)  \delta_n^2.  
\end{align*}
Upon plugging this back to the upper bound on the overall averaged variance, we get:
\begin{align*}
\E_{\tn}  \E_{\ebx_{\tn}} \norm{\ebx_{\tn} - \bbx_{\tn}}^2  
&\leq \frac{2}{T} \sum_{n=1}^{T-1} \left[  \left( \frac{\beta^2 (1- \beta^{T-n})}{(1-\beta)^2} 
+ \frac{\beta }{1-\beta}  \right)   \delta_n^2 \right] +  \frac{2\beta}{(1-\beta)T}\delta_T^2 \\
&\leq \frac{2}{T} \sum_{n=1}^{T-1} \left[  \left( \frac{\beta^2 + \beta(1-\beta) }{(1-\beta)^2}  \right)   \delta_n^2 \right] +  \frac{2\beta}{(1-\beta)T}\delta_T^2 \\
&\leq \frac{2\beta }{(1-\beta)^2T} \sum_{n=1}^T \delta_n^2.
\end{align*}
This completes the proof of \autoref{lem:var1}.

\section{Proof of the user-friendly nonconvex guarantees (\autoref{thm:opt})}
\label{pf:thm:opt}

Let us first recall the upper bound from \autoref{lem:master}:
\begin{align*}
\E_{\tn} \left[ \regnorm{\lambda}{\nabla F(\bby_{\tn})} \right] &\leq \frac{1}{DT} \left( \beta \E\left[\dregret_T(\bu_T)\right] + (1-\beta) \sum_{t=1}^{T} \E\left[\dregret_t(\bu_t)\right] \right) \\
&\quad 
+ \frac{1}{DT} \E \left[ \sum_{t=1}^{T} (F(\bx_t) - F(\bw_t)) \right] 
+ \frac{\mu D}{2} + \frac{\sigma}{T\sqrt{1-\beta}} + \sigma \sqrt{1-\beta}.
\end{align*}
Below, we further upper bound the discounted regret terms.
First, since $\norm{\bu_n} = D$, \autoref{lem:omd-regret} implies:
\begin{align} \label{dregret}
 \forall t \in [T], \quad \beta \E\left[\dregret_t(\bu_t)\right] \leq \frac{2D(G+\sigma)}{\sqrt{1-\beta}} + \frac{\mu}{2} D^2, 
\end{align}
 where  this upper bound holds for all $t \in [T]$ since the upper bound does not depend on $T$.

To simplify the upper bound on the regret term, we impose the following condition:
\begin{align} \label{cond:beta}
\frac{1}{2} \leq \beta \leq 1 - \frac{1}{T}.
\end{align}
Additionally, we introduce the following notation to streamline our expressions:
\begin{align}
    \boxed{\alpha \coloneqq 1 - \beta.}
\end{align}
In the sequel, we will select $1 - \beta = O(\eps^2)$, so readers can treat $\alpha$ as a small quantity.

With the condition \eqref{cond:beta} together with the simplifying notation, we have $\beta^{-1} \leq 2$ and $\alpha T \geq 1$. Therefore, plugging the upper bound \eqref{dregret} into the discounted regret term, it follows that:
\[
\frac{1}{DT} \left( \beta \E\left[\dregret_T(\bu_T)\right] + \alpha \sum_{t=1}^{T} \E\left[\dregret_t(\bu_t)\right]\right) 
\leq \frac{(2\alpha T + 1)}{DT} \left( \frac{2D(G+\sigma)}{\sqrt{\alpha}} + \frac{1}{2}\mu D^2 \right)
\]
\[
\leq \frac{3\alpha T }{DT} \left( \frac{2D(G+\sigma)}{\sqrt{\alpha}} + \frac{1}{2} \mu D^2 \right)
= 6(G + \sigma) \sqrt{\alpha} + \frac{3}{2} \alpha \mu D \leq 6(G + \sigma) \sqrt{\alpha} + \frac{3}{2} \mu D.
\]

Now let us plug this bound back to \autoref{lem:master}.
For the regularization strength, we choose 
\begin{align*}
\mu =\mus \coloneqq 8 \lambda D \left(1 + \sqrt{\cx} \alpha^{-1}\right)^2 .
\end{align*}
This is a valid choice for \autoref{lem:master} since $8 \lambda D \left(1 + \sqrt{\cx} \alpha^{-1}\right)^2\geq     8 \lambda D \left(1 +  \cx \alpha^{-2}\right)$.
Thus, we arrive at the following bound:
\begin{align*}
\E_{\tn} \left[ \regnorm{\lambda}{\nabla F(\bby_{\tn})} \right] 
\leq \frac{\E \left[ \sum_{t=1}^{T} (F(\bx_t) - F(\bw_t)) \right]}{DT} + \underbrace{\frac{\sigma}{T\sqrt{\alpha}}}_{\textbf{(A)}} + \underbrace{7(G + \sigma) \sqrt{\alpha}}_{\textbf{(B)}} + \underbrace{16\lambda D^2 (1 + \sqrt{\cx} \alpha^{-1})^2}_{\textbf{(C)}}.
\end{align*}

Now, we show that  the choice of paramters in \autoref{thm:opt} (\emph{i.e.}, $\betas, \Ds$ defined in the statement)  leads to  each of the underbraced terms is bounded by $\eps$:
\begin{itemize}
    \item \textbf{(B):} Choosing $\beta =\betas  = 1- \left( \frac{\eps}{7(G + \sigma)}\right)^2$, we have $\alpha = \left( \frac{\eps}{7(G + \sigma)}\right)^2$.
    Hence, term \textbf{(B)} is at most $\eps$.
    \item \textbf{(C):} Taking $D = \Ds = \frac{1}{4}  \lambda^{-1/2} \eps^{1/2} (1+\sqrt{\cx} \alpha^{-1} )^{-1}$, we have
    \begin{align*}
        \textbf{(C)}  &= 16\lambda   (1 + \sqrt{\cx} \alpha^{-1})^2 \Ds^2  =\eps.
    \end{align*}
    Hence, term \textbf{(C)} is at most $\eps$. Expanding with $\alpha = \left( \frac{\eps}{7(G + \sigma)}\right)^2$, we have:
    \begin{align*}
    D = \Ds = \frac{1}{4} \lambda^{-1/2} \eps^{1/2} \left(1+\frac{49(G + \sigma)^2 }{\eps^2} \sqrt{\cx} \right)^{-1} .     
    \end{align*}
       \item \textbf{(A):} Taking $T \geq \sigma \alpha^{-1/2} \eps^{-1}$, term \textbf{(A)} is at most $\eps$.
\end{itemize}

Therefore, we have the following inequality as desired:
\[
\E_{\tn} \left[ \regnorm{\lambda}{\nabla F(\bby_{\tn})} \right] \leq 3\eps + \frac{\E \left[ \sum_{t=1}^{T} (F(\bx_t) - F(\bw_t)) \right]}{\Ds T}.
\]

Finally, let us reconcile these parameter choices with the earlier condition \eqref{cond:beta} on $\beta$. As long as $\eps \leq \frac{7}{2} (G + \sigma)$, it holds that $\alpha \leq \frac{1}{4}$, implying $\beta \geq \frac{3}{4}$, which satisfies the condition $\beta \geq \frac{1}{2}$. Additionally, to ensure $\beta \leq 1 - \frac{1}{T}$, we impose $T \geq \alpha^{-1}$. Therefore, the overall condition on $T$ becomes:
\[
T \geq \max\left\{\sigma \alpha^{-1/2} \eps^{-1},~~ \alpha^{-1}\right\}
= \max\left\{7\sigma (G + \sigma) \eps^{-2},~~ 49(G + \sigma)^2 \eps^{-2}\right\} = 49(G + \sigma)^2 \eps^{-2}.
\]
This completes the proof of \autoref{thm:opt}.

\section{Proof of \autoref{cor:SF}}
\label{pf:cor:SF}

We first recall the choice of parameters from \autoref{thm:opt} for the reader's convenience.

\begin{mdframed}
\begin{itemize}[nosep]
    \item The discount factor is $\betas = 1 - \left(\frac{\eps}{7(G + \sigma)}\right)^2$.
    \item The comparator norm is $\Ds = \frac{1}{4} \lambda^{-1/2} \eps^{1/2} \left(1 + \frac{49(G + \sigma)^2}{\eps^2} \sqrt{\cx} \right)^{-1}$.
    \item The regularization strength is $\mus = 2 \lambda^{1/2} \eps^{1/2} \left(1 + \frac{49(G + \sigma)^2}{\eps^2} \sqrt{\cx} \right)$.
    \item The step size of \ref{exp:domd} is $\etas = \frac{2}{G + \sigma} \Ds \sqrt{1 - \betas}$.
\end{itemize}
\end{mdframed}

We now show that $\cx$ is bounded by 4.
First, note that with the above choice of parameters, we have
\begin{align}
    \bt = \bts \coloneqq \frac{\betas}{1 + \etas \mus}.
\end{align}
Since $\eps \leq \frac{7}{2}(G + \sigma)$, it follows that $\betas \geq 1 - \frac{1}{4} = \frac{3}{4}$. Moreover, we have
\begin{align}
    \etas \mus &= \frac{2}{G + \sigma} \Ds \sqrt{1 - \betas} \cdot 2 \lambda^{1/2} \eps^{1/2} \left(1 + \frac{49(G + \sigma)^2}{\eps^2} \sqrt{\cx} \right) \\
    &= \frac{2}{G + \sigma} \cdot \frac{1}{4} \lambda^{-1/2} \eps^{1/2} \cdot \left( \frac{\eps}{7(G + \sigma)} \right) \cdot 2 \lambda^{1/2} \eps^{1/2} \\
    &= \frac{\eps^2}{7(G + \sigma)^2} \leq \frac{7}{4}.
\end{align}
This shows that
\begin{align}
    \bts^{-1} = \frac{1 + \etas \mus}{\betas} \leq \frac{11/4}{3/4} \leq 4.
\end{align}
Thus, from \autoref{alg:SF}, we have $\|\bx_t - \bx_{t-1}\| = \|\bts^{-1} \update_{t}\|$ for $t \geq 1$, leading to the following inequality:
\begin{align}
    \mathbb{E} \sum_{t=1}^{T} \|\bx_t - \bx_{t-1}\|^2 &= \bts^{-2} \mathbb{E} \sum_{t=1}^{T} \|\update_{t}\|^2 \leq 16 \mathbb{E} \sum_{t=1}^{T} \|\update_t\|^2,
\end{align}
which shows that $\cx \leq 16$.

Next, we upper bound the sum of loss decrements.
We begin with the following decomposition, using the $G$-Lipschitzness of $F$:
\begin{align}
    \sum_{t=1}^{T} \left(F(\bx_t) - F(\bw_t)\right) &= \sum_{t=1}^{T} \left(F(\bx_{t}) - F(\bw_{t-1}) + F(\bw_{t-1}) - F(\bw_t)\right) \\
    &\leq \sum_{t=1}^{T} G \|\bx_t - \bw_{t-1}\| + \sum_{t=1}^{T} \left(F(\bw_{t-1}) - F(\bw_t)\right).
\end{align}
Thus, using the closeness property \eqref{exp:diff_SF} between $\bx_t$ and $\bw_{t-1}$, it follows that
\begin{align}
    \E \sum_{t=1}^{T} \left(F(\bx_t) - F(\bw_t)\right) &\leq G \E \sum_{t=2}^{T} \etas \|\bg_{t-1}\| + \E \left[F(\bw_0) - F(\bw_T)\right] \\
    &\leq G \etas T (G + \sigma) + \subopt = 2G \Ds T \sqrt{1 - \betas} + \subopt.
\end{align}

Finally, let us substitute this upper bound, together with $\cx \leq 16$, back into the inequality in \autoref{thm:opt}. We obtain
\begin{align*}
    \E_{\tau} \left[\regnorm{\lambda}{\nabla F(\bby_{\tau})}\right] &= 3\eps + \frac{1}{\Ds T} \E \sum_{t=1}^{T} \left(F(\bx_t) - F(\bw_t)\right) \\
    &\leq 3\eps + \frac{1}{\Ds T} \left(2G \Ds T \sqrt{1 - \betas} + \subopt\right) \\
    &\leq 3\eps + 2G \frac{\eps}{7(G + \sigma)} + \frac{4 \lambda^{1/2} \eps^{-1/2} \left(1 + \frac{49(G + \sigma)^2}{\eps^2} \cdot \sqrt{\cx} \right)}{T} \cdot \subopt \\
    &\leq 4\eps + \frac{4 \subopt \lambda^{1/2} \eps^{-1/2} \left(5 \cdot \frac{49(G + \sigma)^2}{\eps^2}\right)}{T},
\end{align*}
provided that $T \geq 49 (G + \sigma)^2 \eps^{-2}$.

Thus, the desired iteration complexity bound follows.

\section{Deferred proofs of other results}

\subsection{Proof of the OMD discounted regret (\autoref{lem:omd-regret})}
\label{pf:lem:omd-regret}

To begin, let $\bv_1, \bv_2, \dots, \bv_T$ be a sequence of vectors, and consider the quadratically regularized linear loss sequence $\{\ell^{\bv}_t\}_{t=1}^T$, defined as:
\[
\forall t \in [T], \quad \ell^{\bv}_t(\cdot) = \langle \bv_t, \cdot \rangle + \frac{\mu_t}{2} \norm{\cdot}^2,
\]
where the regularization strengths $\mu_t \geq 0$ for all $t \in [T]$ are given and known to the online learner. Since the quadratic regularization term is fully known, it is standard for the online learner to incur regret only from the uncertainty in $\bv_t$.

For instance, \citet{zhang2024random} demonstrate that a version of composite objective Online Mirror Descent (OMD) \citep{beck2003mirror,duchi2010composite} provides the following regret bound for chosen decreasing step sizes $\eta_t > 0$ (i.e., $0 < \eta_{t+1} \leq \eta_t$ for all $t$):
\begin{align} \label{exp:omd-regret}
\sum_{t=1}^T \left[ \ell^{\bv}_t(\update_t) - \ell^{\bv}_t(\bu) \right]  \leq \left(\frac{2}{\eta_{T+1}} + \frac{\mu_{T+1}}{2} \right) \norm{\bu}^2 + \frac{1}{2} \sum_{t=1}^T \eta_t \norm{\bv_t}^2.
\end{align}
See their Theorem 4.1 for details.
The version of composite objective Online Mirror Descent (OMD) considered by \citet{zhang2024random} is initialized with $\update_1 = \bm{0}$ and updated as:
\begin{align} \label{exp:omd-opt}
\update_{t+1}  \coloneqq \argmin_{\update} \left[ \langle \bv_t, \update \rangle + \frac{1}{2\eta_t} \norm{\update - \update_t}^2 + \frac{\mu_{t+1}}{2} \norm{\update}^2 +  \frac{\left( \frac{1}{\eta_{t+1}} - \frac{1}{\eta_t} \right)}{2} \norm{\update}^2 \right].
\end{align}
For more details, see Section 4 therein.


Computing the argmin of the right-hand side, the update rule \eqref{exp:omd-opt} can be equivalently written as:
\begin{align}  \label{exp:omd_simp}
\update_{t+1}  \coloneqq  \frac{1}{1 + \eta_t \mu_{t+1} + \eta_t \left( \frac{1}{\eta_{t+1}} - \frac{1}{\eta_t} \right)} (\update_t - \eta_t \bv_t).
\end{align}
Now, with the choice $\bv_t \coloneqq \beta^{-t} \bg_t$, $\mu_t = \beta^{-t} \mu$, and $\eta_t = \beta^t \eta$, the update rule \eqref{exp:omd_simp} becomes: 
\begin{align}   
\update_{t+1}  \coloneqq  \frac{1}{1 + \frac{1}{\beta} \eta \mu + \frac{1}{\beta} - 1} (\update_t - \eta \bg_t),
\end{align}
which, after rearranging, precisely matches \eqref{exp:domd} in the statement of \autoref{lem:omd-regret}.

Moreover, plugging this choice into the regret bound \eqref{exp:omd-regret}, we get: 
\begin{align}
\sum_{t=1}^T \langle \beta^{-t} \bg_t, \update_t - \bu \rangle \leq \left( \frac{2}{\beta^{T+1} \eta} + \frac{\beta^{-(T+1)} \mu}{2} \right) \norm{\bu}^2 + \frac{1}{2} \sum_{t=1}^T \beta^t \eta \norm{\beta^{-t} \bg_t}^2,
\end{align}
which, after multiplying both sides by $\beta^T$, becomes:
\begin{align}  \label{exp:omd-dregret}
\dregret_T(\bu) \leq \beta^{-1} \left( \frac{2}{\eta} + \frac{\mu}{2} \right) \norm{\bu}^2 + \frac{1}{2} \sum_{t=1}^T \eta \beta^{T-t} \norm{\bg_t}^2.
\end{align}

Now let us simplify this into the discounted regret bound  in \autoref{lem:omd-regret}. Since we have $\E\norm{\bg_t}^2 \leq G^2 + \sigma^2$ for all $t$, the discounted regret bound \eqref{exp:omd-dregret} can be further upper bounded as:

\[
\E[\dregret_T(\bu)] \leq \frac{\mu}{2} \frac{\norm{\bu}^2}{\beta} + \frac{2}{\eta} \frac{\norm{\bu}^2}{\beta} + \frac{1}{2} \sum_{t=1}^{T} \eta \beta^{T-t} (G^2+\sigma^2),
\]
and
\[
\E[\dregret_T(\bu)] \leq \frac{\mu}{2} \frac{\norm{\bu}^2}{\beta} + \frac{2}{\eta} \frac{\norm{\bu}^2}{\beta} + \frac{1}{2} \frac{(G^2 + \sigma^2)}{1-\beta} \eta.
\]
Choosing $\eta = \frac{2 \norm{\bu} \sqrt{1-\beta}}{G + \sigma}$ and noticing that the right-hand side no longer depends on $T$, the discounted regret bound becomes:
\[
\forall t \in [T], \quad \E\left[\dregret_t(\bu)\right] \leq \frac{2\norm{\bu}(G + \sigma)}{\beta\sqrt{1-\beta}} + \frac{\mu}{2}\norm{\bu}^2.
\]
This gives the discounted regret bound given in \autoref{lem:omd-regret}.

 \subsection{Proof of \autoref{prop:extrapolate}}
 \label{pf:prop:extrapolate}

  From the definition of the $\bz$-iterates \eqref{def:zt}, it holds that
\begin{align*}
\bz_{t+1} - \bz_t &= ( \bx_{t+1} +\frac{1}{1-\bt} \update_{t+1}) - ( \bx_t +\frac{1}{1-\bt} \update_t) \\
&=    \left(\frac{1}{\bt} +\frac{1}{1-\bt}\right) \update_{t+1} - \frac{1}{1-\bt}\update_t  \\
&=  \frac{1}{1-\bt} \left( \left(\frac{1-\bt}{\bt}  + 1\right)\update_{t+1} - \update_t \right)  = -\frac{\eta}{1-\bt} \bg_t.
\end{align*}
Hence, the desired conclusion follows.

\subsection{Proof of \autoref{prop:parameters}}
\label{pf:prop:parameters}

From the proof of \autoref{cor:SF} (see \autoref{pf:cor:SF}), we have the relationship:
\begin{align}
    \etas \mus = \frac{\eps^2}{7(G + \sigma)^2}.
\end{align}
Since $\betas = 1 - \left(\frac{\eps}{7(G + \sigma)}\right)^2$, we can deduce the following:
\begin{align}
    1 - \bts = 1 - \frac{\betas}{1 + \etas \mus} = \frac{(1 - \betas)+ \etas \mus }{1 + \etas \mus} = \frac{\TT{\frac{\eps^2}{(G + \sigma)^2}}}{1+\TT{\frac{\eps^2}{(G + \sigma)^2}}} = \TT{\frac{\eps^2}{(G + \sigma)^2}}.
\end{align}
From this, the conclusions of \autoref{prop:parameters} follow.

\end{document}